\documentclass[twoside,11pt]{article}

\usepackage{jmlr2e}

\usepackage{xcolor}
\usepackage[normalem]{ulem}
\usepackage{amsmath}
\usepackage{mathtools}
\usepackage{bm}
\usepackage{booktabs}
\usepackage{adjustbox}
\usepackage{threeparttable}
\usepackage{ragged2e}
\usepackage{floatrow}
\usepackage{tabularx}
\usepackage{enumitem}
\usepackage{lastpage}
\usepackage{float}

\hypersetup{
  colorlinks=true,
  citecolor=blue,
  linkcolor=black,
  urlcolor=blue
}

\usepackage{algorithm}
\usepackage{algpseudocode}

\usepackage{tikz}
\usetikzlibrary{
  shapes.geometric,
  arrows.meta,
  positioning,
  calc,
  patterns,
  fit,
  backgrounds,
  decorations.pathreplacing
}

\newcolumntype{Y}{>{\raggedright\arraybackslash}X}

\jmlrheading{00}{2026+}{1--\pageref{LastPage}}{0/00; Revised 0/00}{0/00}{0-0000}{Se Yoon Lee, Yonghyun Kwon, and Jae Kwang Kim}

\ShortHeadings{MEC-Cox for ATT Hazard Ratios}{Lee et al.}
\firstpageno{1}

\begin{document}

\title{MEC-Cox: Machine-Learning-Assisted Generalized Entropy Calibration for ATT Marginal Hazard-Ratio Estimation}


\author{\name Se Yoon Lee\email seyoonlee.stat.math@gmail.com \\
\addr Department of Statistics \\
Texas A\&M University \\
College Station, TX, USA
\AND
\name Yonghyun Kwon \email yhkwon@kma.ac.kr \\
\addr Department of Mathematics \\
Korea Military Academy \\
Seoul, Republic of Korea
\AND
\name Jae Kwang Kim \email jkim@iastate.edu \\
\addr Department of Statistics \\
Iowa State University \\
Ames, IA, USA
}

\date{}



\editor{My Editor}

\maketitle

\begin{abstract}
Externally controlled survival trials are increasingly used when concurrent randomized controls are infeasible, particularly in oncology and rare-disease settings with time-to-event endpoints. We target an average-treatment-effect-on-the-treated (ATT)-type marginal hazard-ratio estimand, comparing treatment with counterfactual control in the treated trial population, and estimate it using inverse-probability-weighted (IPW) Cox regression. Valid inference is challenging because IPW Cox regression depends on the weights through both event contributions and risk-set averages, making flexible machine-learning nuisance estimation difficult to incorporate directly. Building on machine-learning-assisted generalized entropy calibration (MEC) by Lee and Kim (2026), we propose MEC-Cox for ATT-weighted IPW Cox regression. The method begins with normalized source-propensity-score odds weights for external controls and then applies Bregman calibration to balance cross-fitted prognostic summaries between external controls and treated trial patients. The calibration basis may include control-survival predictions, Cox linear predictors, penalized-survival-model predictions, or other prognostic-score summaries. MEC-updated weights therefore play a dual role as source-transport and prognostic-score balancing weights. We establish consistency, characterize a calibration-induced efficiency gain, and develop a stacked sandwich variance estimator. Simulations show that MEC-Cox can reduce bias, increase efficiency, and improve coverage through flexible machine-learning-assisted adjustment.
\end{abstract}

\begin{keywords}
  external control data, weighted Cox regression, marginal hazard ratio, calibration weighting, survival analysis
\end{keywords}

\section{Introduction}
Externally controlled single-arm trials, or more broadly externally controlled
comparisons, in which outcomes from patients receiving an investigational
treatment are compared with outcomes from an external-control cohort, are
increasingly used when concurrent randomized controls are difficult to obtain
because of ethical, logistical, or disease-specific constraints
\citep{russek2025supplementing,lambert2023enriching}. In these studies, the
external-control cohort is typically constructed from historical trials,
registries, or real-world data. A central inferential challenge is that the
external-control cohort may differ systematically from the treated trial cohort
in baseline covariates and other outcome-relevant features. Valid inference
therefore requires constructing external-control weights that transport the
external cohort to the treated trial population.

In this paper, we focus on externally controlled survival studies with
time-to-event endpoints. Our target estimand is an average treatment effect on the treated (ATT)-type marginal hazard ratio for time-to-event outcomes, comparing the marginal hazard under treatment with the counterfactual marginal hazard under control in the treated trial population. Marginal structural models provide a natural framework for defining this population-level causal hazard-ratio target \citep{hernan2000marginal,robins2000marginal,cole2008constructing}. In externally controlled single-arm trials, the corresponding ATT pseudo-population is constructed by reweighting external controls with the source propensity-score odds, so that their weighted covariate distribution represents that of the treated trial cohort. The marginal hazard-ratio parameter is then estimated by solving an inverse-probability-weighted (IPW) Cox estimating equation in this weighted sample. This connects externally controlled single-arm trials with weighted Cox regression \citep{cox1975partial,binder1992fitting,lin1989robust} and causal marginal hazard-ratio estimation \citep{hernan2000marginal}.

However, the validity of the IPW Cox estimator depends critically on consistent estimation of the source propensity score; misspecification can lead to severe bias in the weighted Cox estimator \citep{shu2021estimating}. In addition, inference is technically challenging because the weighted Cox partial likelihood depends on the weights through both event contributions and risk-set averages. As a result, naive model-based variance estimators can be biased by ignoring the weighting structure \citep{shu2021variance}. The Lin--Wei robust variance estimator \citep{lin1989robust}, extended to survey-weighted Cox regression by \citet{binder1992fitting}, is convenient and widely used, but treats the estimated weights as fixed and therefore ignores propensity-score estimation uncertainty. Alternative approaches, including bootstrap variance estimation \citep{austin2016variance}, closed-form variance estimators \citep{hajage2018closed}, and analytical corrections for propensity-score estimation \citep{mao2018propensity,shu2021variance}, have been proposed. However, these methods are often developed for parametric propensity-score models and can therefore be sensitive to model misspecification. This limitation motivates methods that can incorporate flexible machine-learning (ML)-based adjustment, such as Bayesian additive regression trees (BART) \citep{chipman2010bart} and deep learning (DL) \citep{lecun2015deep}, for estimating this causal hazard-ratio target.

While recent developments in targeted maximum likelihood estimation (TMLE)
\citep{van2011targeted,van2018targeted} and double machine learning (DML)
\citep{chernozhukov2018double} have shown how flexible ML-based nuisance
estimation can be used for debiased estimation and valid post-ML inference,
especially for mean-type causal estimands, extending these ideas to causal
hazard-ratio estimation based on IPW Cox regression is not straightforward. A
main reason is that these approaches typically require researchers to derive an
efficient influence function (EIF) \citep{tsiatis2006semiparametric,kennedy2016semiparametric,kennedy2024semiparametric}, or a Neyman-orthogonal score \citep{mackey2018orthogonal,foster2023orthogonal,chernozhukov2017double}, for the problem. For mean-type causal estimands, many such EIF-based
or orthogonal-score procedures have been developed
\citep{van2006targeted,gruber2010targeted,semenova2021debiased}, often yielding desirable properties
such as double robustness \citep{bang2005doubly}. In contrast, analogous EIF-based or
orthogonal-score constructions for Cox hazard-ratio estimands are less straightforward. Indeed, targeted-learning methods for
time-to-event outcomes have more commonly focused on survival probabilities,
survival curves, restricted mean survival time, or time-specific intervention
effects \citep{cai2020one,rytgaard2023estimation}, rather than causal
hazard-ratio estimands.

Recently, \citet{lee2026mec} proposed machine-learning-assisted generalized entropy
calibration (MEC) as an alternative approach for leveraging the predictive power
of flexible ML methods in weighted estimating equations. MEC builds on
generalized entropy calibration (GEC) \citep{kwon2025debiased} and, more
broadly, on recent Bregman calibration frameworks \citep{kim2026bregman}. MEC is
a weight-calibration framework that incorporates ML prediction functions into
calibration constraints and can be broadly applied to well-defined weighted
estimating equations; see the Supplemental Materials of \citet{lee2026mec} for
several examples. Unlike TMLE or DML, MEC does not require researchers to derive an
EIF at the implementation stage, although semiparametric efficiency theory can
still be used to study its asymptotic properties, as in \citet{lee2026mec}. This
gives researchers a broadly applicable way to incorporate ML into problems where
the target parameter can be estimated through a weighted estimating equation,
including semi-supervised inference problem \citep{zhang2019semi,angelopoulos2023prediction}, missing-data
problems \citep{robins1995analysis}, and many other settings. IPW Cox regression
is one such example.


In this paper, we apply the MEC framework to IPW Cox regression to estimate the ATT marginal hazard-ratio and refer to the resulting method as MEC-Cox. MEC-Cox starts from normalized ATT transport
weights for external controls and applies a Bregman calibration step to achieve
exact finite-sample balance of cross-fitted prognostic features between the
external-control and treated trial cohorts. The source propensity score
determines transport to the ATT target population, whereas the prognostic
calibration basis targets outcome-relevant imbalance that remains after baseline
transport weighting. This basis can be constructed from control-survival predictions, Cox linear
predictors, penalized survival models, or other fixed-dimensional cross-fitted
prognostic summaries, allowing ML to enter through both source transport and
prognostic adjustment. In this sense, MEC-Cox extends the exact-balancing
perspective in causal inference
\citep{benmichael2021balancing,imai2014covariate,athey2018approximate,hainmueller2012entropy}
to IPW Cox regression for ATT marginal hazard-ratio estimation, but with
balance imposed on survival-prognostic summaries rather than only on raw
baseline covariates. By incorporating prognostic information directly into the
weighting scheme, MEC-Cox can improve efficiency relative to existing IPW Cox
methods
\citep{lin1989robust,binder1992fitting,austin2016variance,shu2021variance}.

The remainder of the paper is organized as follows.
Section~\ref{sec:Estimation of marginal hazard ratios} introduces the data
structure, causal assumptions, ATT marginal hazard-ratio estimand, and baseline
ATT-weighted Cox estimator. Section~\ref{sec:mec_cox} presents the MEC-Cox
estimator, and Section~\ref{sec:variance_mec_cox} develops its stacked sandwich
variance estimator. Section~\ref{sec:simulation_studies} reports simulation
studies, Section~\ref{sec:real_data} presents a real-world application of the MEC-Cox,
and Section~\ref{sec:Discussion} concludes.

\section{Estimation of Marginal Hazard Ratios in the Treated Trial Population}\label{sec:Estimation of marginal hazard ratios}

\subsection{Externally controlled survival setting and causal estimand}\label{subsec:Externally controlled survival setting and causal estimand}
\paragraph{Externally controlled comparison setting.}
External control data can be incorporated into clinical trials in two primary
settings \citep{FDA2023ExternalControl}. In hybrid randomized controlled trial
designs, shown in Panel~(a) of Figure~\ref{fig:trial_designs}, a concurrent
internal control arm is available, and external controls are used to supplement
the randomized comparison. In this setting, the internal control arm serves as an
anchor, enabling empirical assessment of the compatibility between randomized
control patients and external-control patients
\citep{lee2025power,gao2025improving}.

\begin{figure}[h!]
\centering
\resizebox{0.9\linewidth}{!}{
\begin{tikzpicture}[
    font=\small,
    >={Stealth[length=2.2mm]},
    box/.style={
        rounded corners=2pt,
        draw=black,
        line width=0.5pt,
        minimum height=0.95cm,
        inner sep=0pt
    },
    comparison/.style={
        draw=black!65,
        dashed,
        line width=0.6pt,
        rounded corners=3pt,
        inner sep=1.5pt
    }
]

\definecolor{ScreenGray}{gray}{0.25}
\definecolor{TreatGray}{gray}{0.55}
\definecolor{ControlGray}{gray}{0.82}
\definecolor{ECGray}{gray}{0.95}
\definecolor{TextLight}{gray}{1.0}


\node[box, fill=TreatGray, minimum width=5.2cm, anchor=west] (E1) at (1.7,1.45) {};
\node[text=black] at (E1.center) {$T$};

\node[box, fill=ControlGray, minimum width=2.5cm, anchor=west] (C1a) at (1.7,0.20) {};
\node[text=black] at (C1a.center) {$IC$};

\node[box, fill=ECGray, minimum width=2.7cm, anchor=west,
      pattern=north east lines, pattern color=gray!60] 
(C1b) at ($(C1a.east)+(0,0)$) {};
\node[text=black] at (C1b.center) {$EC$};

\node[box, fill=ECGray, minimum width=5.2cm, anchor=west,
      pattern=north east lines, pattern color=gray!60] 
(ECpool) at (1.7,-1.05) {};
\node[text=black] at (ECpool.center) {$EC$};

\node[box, fill=ScreenGray, minimum width=0.85cm, minimum height=3.45cm] (S1) at (0,0.2) {};
\node[rotate=90, text=TextLight] at (S1.center) {Eligibility};

\node[anchor=south west, inner sep=0pt, yshift=0.4cm] at (S1.north west)
{(a) Hybrid randomized controlled trial};

\begin{scope}[on background layer]
\node[comparison, fit=(E1)] {};
\node[comparison, fit=(C1a)(C1b)] {};
\end{scope}

\draw[->, line width=0.8pt] (S1.east) -- (E1.west);
\draw[->, line width=0.8pt] (S1.east) -- (C1a.west);
\draw[->, line width=0.8pt, gray] (C1b.south |- ECpool.north) -- (C1b.south);


\node[box, fill=TreatGray, minimum width=5.2cm, anchor=west] (E2) at (9.9,1.45) {};
\node[text=black] at (E2.center) {$T, \quad(A=1)$};

\node[box, fill=ECGray, minimum width=5.2cm, anchor=west,
      pattern=north east lines, pattern color=gray!60] 
(EC2) at (9.9,-1.05) {};
\node[text=black] at (EC2.center) {$EC, \quad(A=0)$};

\node[box, fill=ScreenGray, minimum width=0.85cm, minimum height=3.45cm] (S2) at (8.2,0.2) {};
\node[rotate=90, text=TextLight] at (S2.center) {Eligibility};

\node[anchor=south west, inner sep=0pt, yshift=0.4cm] at (S2.north west)
{(b) Externally controlled single-arm trial (our setting)};

\begin{scope}[on background layer]
\node[comparison, fit=(E2)] {};
\node[comparison, fit=(EC2)] {};
\end{scope}

\draw[->, line width=0.8pt] (S2.east) -- (E2.west);

\end{tikzpicture}
}
\vspace{-0.4em}
\caption{Comparison of study designs incorporating external control data. Thin dashed outlines indicate the arms included in the treatment comparison, and darker shaded boxes indicate the target comparison population. Here, $T$ denotes the experimental treatment or target cohort, $IC$ denotes the internal randomized control arm, and $EC$ denotes the external-control cohort.}
\label{fig:trial_designs}
\end{figure}

In this paper, we focus on externally controlled single-arm trials, or more
broadly externally controlled comparisons with no concurrent control arm, shown
in Panel~(b) of Figure~\ref{fig:trial_designs}. In this setting, no internal
control group is available, and the counterfactual control outcomes for the
treated or target cohort must be inferred from an external population. Although we use the term externally controlled single-arm trial for concision, the target cohort need not be a prospectively conducted trial cohort. What is essential is that the estimand is defined with respect to the target population, denoted by \(T\) in Figure~\ref{fig:trial_designs}.



\paragraph{Observed data structure.}
Consider an externally controlled single-arm trial in which observations are
pooled from two data sources: a treated single-arm trial cohort and an
external-control cohort. For each individual \(i=1,\ldots,n\), let \(X_i\)
denote an $M$-dimensional vector of measured baseline covariates, and let
\(A_i\in\{0,1\}\) denote the source and treatment indicator, with
\(A_i=1\) for a trial patient receiving the investigational treatment and
\(A_i=0\) for an external-control patient receiving control. This setup corresponds to the right panel of Figure~\ref{fig:trial_designs}; in this setting, source membership and treatment status coincide by design. Let \(T_i\)
denote the event time under the treatment actually received, and let \(C_i\)
denote the censoring time. The observed time and event indicator are $Y_i=\min(T_i,C_i)$ and $\delta_i=I(T_i\le C_i),$ where \(I(\cdot)\) is the indicator function. Equivalently, define the
counting process \(\mathcal{N}_i(t)=I(Y_i\le t,\delta_i=1)\) and the at-risk process
\(\mathcal{Y}_i(t)=I(Y_i\ge t)\). The observed data are
\(O_i=(X_i,A_i,Y_i,\delta_i)\), \(i=1,\ldots,n\), which are assumed to be
independent. We assume that \(C_i\) is independent of \((T_i,X_i)\) conditional on \(A_i\).

\paragraph{Causal assumptions.}
For \(a=0,1\), let \(T_i^a\) denote the potential event time that would be observed under treatment regime \(a\), where \(a=1\) corresponds to the investigational treatment and \(a=0\) corresponds to control. We make the following assumptions:
\begin{enumerate}[leftmargin=1.5em, label={}, itemsep=0.5em]
    \item \textbf{\textit{A1. Consistency.}} The observed event time equals the potential event time under the treatment actually received: $$T_i=A_iT_i^1+(1-A_i)T_i^0.$$

    \item \textbf{\textit{A2. Survival transportability.}} Conditional on measured baseline covariates, the control potential-outcome distribution for trial patients is the same as that for external-control patients: $$T_i^0\mid X_i,A_i=1
    \overset{d}{=}
    T_i^0\mid X_i,A_i=0,$$ where \(\overset{d}{=}\) denotes equality in distribution.

    \item \textbf{\textit{A3. Positivity.}} There exists a
constant \(\epsilon>0\) such that the probability of being an
external control is bounded away from zero: $$1-\pi(x)\geq  \epsilon \quad \text{for all } x\in \mathrm{supp}(X_i\mid A_i=1),$$ where \(\pi(X_i)=\Pr(A_i=1\mid X_i)\) denotes the source propensity score.
\end{enumerate}
Under these assumptions, the external-control cohort can be reweighted to represent the counterfactual control experience of the treated trial population.

\paragraph{Parameter of interest.}
We aim to estimate the ATT marginal log-hazard ratio \(\theta_{ATT}\), which
compares the investigational treatment with external control in the treated trial target population. Specifically, \(\theta_{ATT}\) is defined through the marginal
proportional hazards model \citep{hernan2000marginal,hernan2001marginal,fay2024causal}
\begin{align}
\label{eq:marginal_structural_model_for_HR_ATT}
\lambda_1^a(t)=\lambda_1^0(t)\exp(\theta_{ATT}a),
\qquad a=0,1.
\end{align}
Here, for \(a=0,1\), $S_1^a(t)=\Pr(T_i^a>t\mid A_i=1)$ denotes the marginal survival function under treatment regime \(a\) in the
treated trial target population, and
\[
\lambda_1^a(t)
=
-\frac{\partial}{\partial t}\log S_1^a(t)
=
\lim_{\Delta t\downarrow 0}
\frac{
\Pr(t\le T_i^a<t+\Delta t \mid T_i^a\ge t, A_i=1)
}{\Delta t}
\]
denotes the corresponding marginal hazard function. Under
\eqref{eq:marginal_structural_model_for_HR_ATT}, \(\theta_{ATT}\) is the
marginal log-hazard ratio comparing treatment versus control in the treated
trial population, and the corresponding ATT marginal hazard ratio is
\(HR_{ATT}=\exp(\theta_{ATT})\).

\subsection{Identification by the ATT-weighted Cox score}\label{subsec:Identification by the ATT-weighted Cox score}
The following theorem states that, under the stated causal assumptions, the ATT-weighted Cox estimating equation is centered at the marginal ATT log-hazard ratio $\theta_{ATT}$ in the treated trial target population. Proofs of Theorem~\ref{thm:identification_weighted_cox} and the other theorems in the main text are provided in the Appendix. 

\begin{theorem}\label{thm:identification_weighted_cox}
Suppose Assumptions A1--A3 hold, and suppose that standard regularity
conditions for the weighted Cox estimating equation hold
\citep{andersen1982cox,andersen2012statistical}. We assume the
source-specific independent censoring assumption $$C_i \perp (T_i^0,T_i^1,X_i)\mid A_i.$$ Define the ATT-weighted Cox score
\begin{align}
\label{eq:weighted_Cox_regression_counting_process_form}
U_n^\omega(\theta)
=
\sum_{i=1}^n
\int
\omega_i\{A_i-\overline A_\omega(t;\theta)\}\,d\mathcal{N}_i(t),
\quad
\overline A_\omega(t;\theta)
=
\frac{
\sum_{i=1}^n \omega_i\mathcal{Y}_i(t)\exp(\theta A_i)A_i
}{
\sum_{i=1}^n \omega_i\mathcal{Y}_i(t)\exp(\theta A_i)
},
\end{align}
where \(\omega_i=A_i+(1-A_i)q(X_i)\),
\(q(X_i)=\pi(X_i)/\{1-\pi(X_i)\}\), and
\(\pi(X_i)=\Pr(A_i=1\mid X_i)\). If the marginal proportional hazards model
\eqref{eq:marginal_structural_model_for_HR_ATT} holds, then the population
limit \(U(\theta)\) of \(n^{-1}U_n^\omega(\theta)\) satisfies
\(U(\theta_{ATT})=0\). If, additionally, \(U(\theta)\) has a unique root and
\(U_n^\omega(\theta)\) converges uniformly to \(U(\theta)\), then any solution
\(\widehat\theta\) to \(U_n^\omega(\widehat\theta)=0\) satisfies
\(\widehat\theta \overset{p}{\longrightarrow} \theta_{ATT}\).
\end{theorem}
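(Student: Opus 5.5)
The plan is to compute the population limit $U(\theta)$ explicitly and show that, at $\theta=\theta_{ATT}$, the weighted risk-set average $\overline A_\omega(t;\theta)$ converges to exactly the ratio of treated-arm to combined "intensity" that makes the integrand vanish. By the law of large numbers, applied under the stated regularity conditions,
\[
n^{-1}\sum_i \omega_i\mathcal{Y}_i(t)e^{\theta A_i}A_i \to s^{(1)}(t;\theta)=e^{\theta}E\{A\,\mathcal{Y}(t)\},
\qquad
n^{-1}\sum_i \omega_i\mathcal{Y}_i(t)e^{\theta A_i} \to s^{(0)}(t;\theta)=e^{\theta}E\{A\,\mathcal{Y}(t)\}+E\{(1-A)q(X)\mathcal{Y}(t)\}.
\]
Hence
\[
U(\theta)=\int E\{A\,d\mathcal N(t)\}-\int \frac{s^{(1)}(t;\theta)}{s^{(0)}(t;\theta)}\bigl[E\{A\,d\mathcal N(t)\}+E\{(1-A)q(X)\,d\mathcal N(t)\}\bigr].
\]

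The key step is to express each arm's weighted at-risk and event expectations through the target-population quantities $S_1^a$ and $\lambda_1^a$. For the treated arm, source-specific independent censoring gives $E\{A\mathcal{Y}(t)\}=pG_1(t^-)S_1^1(t^-)$ and $E\{A\,d\mathcal N(t)\}=pG_1(t^-)S_1^1(t^-)\lambda_1^1(t)dt$. Here $p=\Pr(A=1)$ and $G_a$ is the censoring survivor function in source $a$.

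For the control arm, I will use the odds-weight identity $E\{(1-A)q(X)h(X)\}=E\{A\,h(X)\}$. Conditioning on $X$ and $A=0$, then using A1, A2 and censoring independence, gives
\[
E\{(1-A)q(X)\mathcal{Y}(t)\}=G_0(t^-)\,E\{(1-A)q(X)\Pr(T^0\ge t\mid X,A=0)\}=G_0(t^-)\,p\,S_1^0(t^-),
\]
and similarly $E\{(1-A)q(X)\,d\mathcal N(t)\}=pG_0(t^-)S_1^0(t^-)\lambda_1^0(t)dt$. Positivity (A3) ensures $q$ is bounded on the relevant support, so these expectations are finite.

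Substituting these expressions, and writing $a_1=G_1S_1^1$ and $a_0=G_0S_1^0$, the integrand of $U(\theta)$ becomes, up to the factor $p$,
\[
a_1\lambda_1^1-\frac{e^\theta a_1}{e^\theta a_1+a_0}\bigl(a_1\lambda_1^1+a_0\lambda_1^0\bigr)
=\frac{a_1a_0\bigl(\lambda_1^1-e^\theta\lambda_1^0\bigr)}{e^\theta a_1+a_0}.
\]
Under model \eqref{eq:marginal_structural_model_for_HR_ATT} this vanishes pointwise at $\theta=\theta_{ATT}$, so $U(\theta_{ATT})=0$. Consistency then follows from the standard argument that uniform convergence of $n^{-1}U_n^\omega$ to $U$ together with a unique root implies convergence of any root; I would cite the Z-estimator consistency theorem (e.g.\ van der Vaart, Thm.~5.9). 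Monotonicity of $U_n^\omega$ in $\theta$, which holds because its derivative is minus a weighted variance, makes the roots well behaved.

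The main obstacle is justifying the control-arm identity in the at-risk process. It requires exchanging the conditioning so that $q(X)$ times the conditional survival of $T^0$ among external controls equals the target-population survival. This is where A2 must be used for the whole distribution of $T^0$ rather than only its mean, and where censoring must be independent of $X$ within source. If censoring depended on $X$, the factor $G_0$ would not separate out and the cancellation would fail. I would state this step carefully as a lemma before assembling the score computation.
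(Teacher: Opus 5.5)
Your proposal is correct and follows essentially the same route as the paper. Both compute the limits of the weighted risk-set sums, and both use A1, A2, source-specific censoring and the odds-weight identity $\mathbb E\{(1-A)q(X)f(X)\}=\mathbb E\{Af(X)\}$ to express the control-arm quantities in the treated target population; your $a_0=G_0S_1^0$ is the paper's $\gamma_C(t)y_1^0(t)$, written without dividing by $G_1$. Both then show the integrand vanishes pointwise at $\theta_{ATT}$ and conclude by the $Z$-estimator consistency theorem, though your monotonicity remark (the derivative of $U_n^\omega$ is minus a weighted risk-set variance) is a small addition that supplies the well-separatedness the paper's bare unique-root argument leaves implicit.
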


This result provides the population-level justification for using ATT odds weights in the Cox partial-likelihood score \citep{cox1975partial,lee2026asymptotic}. In the following paragraph, we describe the corresponding sample estimator obtained by replacing the unknown source propensity score \(\pi(X_i)\) with an estimate \(\widehat\pi(X_i)\).

\subsection{Existing ATT-IPW Cox estimator and variance estimation methods}\label{subsec:Existing ATT-IPW Cox estimator and variance estimation methods}
\paragraph{IPW estimator of \(\theta_{ATT}\).}
In practice, the source propensity score \(\pi(X_i)=\Pr(A_i=1\mid X_i)\) is unknown and must be estimated. Let \(\widehat\pi(X_i)\) denote an estimate of \(\pi(X_i)\), obtained, for example, from a logistic regression model for \(A_i\) given \(X_i\). We define the estimated ATT odds weight for external controls by $\widehat q(X_i)
=\widehat\pi(X_i)/(1-\widehat\pi(X_i)),$ and the corresponding estimated ATT estimating-equation weight by $\widehat\omega_i
=
A_i+(1-A_i)\widehat q(X_i).$ 


Let $\mathcal R_i=\{\ell:Y_\ell\ge Y_i\}$ denote the risk set just before the observed event time \(Y_i\). The IPW estimator \(\widehat\theta_{\mathrm{IPW}}\) is defined as the solution to the weighted Cox partial-likelihood score equation \citep{lin1989robust,cox1975partial,binder1992fitting}
\begin{align}
\label{eq:ipw_score_equation}
U_n^{\mathrm{IPW}}(\theta)
=
\sum_{i=1}^n
\widehat\omega_i\delta_i
\left[
A_i
-
\frac{
\sum_{\ell\in\mathcal R_i}
\widehat\omega_\ell \exp(\theta A_\ell)A_\ell
}{
\sum_{\ell\in\mathcal R_i}
\widehat\omega_\ell \exp(\theta A_\ell)
}
\right]
=
0.    
\end{align}

The resulting estimator \(\widehat\theta_{\mathrm{IPW}}\) estimates the ATT
marginal log-hazard ratio \(\theta_{ATT}\), and the corresponding ATT marginal
hazard-ratio estimator is $\widehat{HR}_{ATT}
=
\exp(\widehat\theta_{\mathrm{IPW}}).$ The consistency of \(\widehat\theta_{\mathrm{IPW}}\) for \(\theta_{ATT}\)
requires consistent estimation of the source propensity score \(\pi(X_i)\), together with the conditions
stated in Theorem~\ref{thm:identification_weighted_cox} \citep{hernan2000marginal,hernan2001marginal}.


\paragraph{Variance estimation for the IPW Cox estimator.}
A Wald-type \(95\%\) confidence interval for the ATT marginal log-hazard ratio $\theta_{ATT}$ can be constructed as \(\widehat\theta_{\mathrm{IPW}}\pm z_{\alpha/2}\,\widehat{\mathrm{se}}(\widehat\theta_{\mathrm{IPW}})\), where \(\alpha=0.05\), \(z_{\alpha/2}=\Phi^{-1}(1-\alpha/2)=\Phi^{-1}(0.975)=1.96\), \(\Phi\) denotes the standard normal cumulative distribution function, and \(\widehat{\mathrm{se}}(\widehat\theta_{\mathrm{IPW}})\) denotes the estimated standard error of \(\widehat\theta_{\mathrm{IPW}}\). The corresponding confidence interval for \(HR_{ATT}\) is obtained by exponentiating the endpoints.

In this paper, we focus on the following three representative approaches as the main comparators for our proposed method:

\begin{enumerate}[leftmargin=1.5em, itemsep=0.5em] 
    \item \textbf{\textit{Naive likelihood-based variance estimator.}} The naive likelihood-based variance estimator treats the estimated weights as fixed and regards the weighted pseudo-population as if it were an ordinary independent sample. This ignores the weighting structure and the uncertainty associated with weight estimation, and is therefore generally biased in weighted Cox regression. See Subsection~3.1.1 of \citet{shu2021variance} for details.
    \item \textbf{\textit{Robust sandwich estimator.}} \citet{lin1989robust} proposed a robust
sandwich variance estimator for the Cox model, and \citet{binder1992fitting} extended this robust variance idea to weighted Cox regression for survey data. See Subsection 3.1.2 of \citet{shu2021variance} for a summary. The
Lin--Wei/Binder (hereafter ``LW") robust sandwich variance estimator treats the estimated weights
\(\widehat\omega_i\) as fixed constants in the weighted Cox score. In particular,
if \(\widehat\theta_{\mathrm{IPW}}\) solves the score equation
\eqref{eq:ipw_score_equation}, then the robust sandwich variance is constructed
from the empirical Cox score contributions computed conditional on the weights.
This approach is simple and widely implemented by practitioners, for example
through \texttt{coxph(..., weights = ..., robust = TRUE)} in the
\texttt{survival} package. However, its key limitation in the present setting is
that it ignores the uncertainty from estimating the source propensity score
\(\pi(X_i)\), because \(\widehat\omega_i\) is treated as known. 

\item \textbf{\textit{Corrected sandwich estimator.}} \citet{shu2021variance} addressed this limitation by treating the weights as estimated quantities. Their corrected sandwich estimator stacks the weighted Cox estimating equation with the propensity-score estimating equation. Specifically, they assume a logistic propensity-score model, $\pi_\gamma(X_i)
=
\Pr(A_i=1\mid X_i;\gamma)
=
1/\{1+\exp(-\gamma^\top X_i)\},$ with score equation $\sum_{i=1}^n
\{A_i-\pi_\gamma(X_i)\}X_i
=
0.$ They then consider the joint parameter \((\theta,\gamma)\) and form a stacked estimating system combining the Cox score for \(\theta\) with the score equation for \(\gamma\). This propagates propensity-score estimation uncertainty into the variance of \(\widehat\theta_{\mathrm{IPW}}\). However, this correction relies on a correctly specified finite-dimensional
logistic propensity-score model and is therefore not robust to propensity-score
model misspecification, for example when the true propensity score is nonlinear
in the covariates.
\end{enumerate}


We address two limitations of existing IPW Cox methods
\citep{lin1989robust,binder1992fitting,shu2021variance}: fixed-weight robust
sandwich estimators ignore uncertainty from estimated weights, whereas corrected
sandwich estimators can be sensitive to logistic propensity-score model
misspecification. We address the former using stacked estimating equations, as in
\citet{shu2021variance}, to account for uncertainty induced by estimating the propensity-score-based weights, and the latter by incorporating the MEC framework
\citep{lee2026mec} into the IPW Cox estimating equation.

Other existing approaches in the literature include bootstrap procedures, which can account for
propensity-score estimation uncertainty by resampling subjects and refitting the
propensity-score model and weighted Cox estimator \citep{austin2016variance}.
However, bootstrap inference can be computationally prohibitive when flexible
machine-learning propensity-score models, survival prediction models, or
calibration steps must be repeatedly refitted. Linearization-based closed-form
variance estimators have also been proposed \citep{hajage2018closed}. However,
as with the variance correction of \citet{shu2021variance}, these approaches
are primarily developed under parametric propensity-score models, typically
logistic regression, and do not directly address inference when flexible ML nuisance estimation is used. 





\section{Machine-Learning-Assisted Generalized Entropy Calibration for Weighted Cox Regression}
\label{sec:mec_cox}




\subsection{A brief overview of the MEC framework and relevant literature}
\label{subsec:mec-framework}

\paragraph{Overview.} We briefly summarize the MEC framework for a weighted estimating equation.
For further details, see the Appendix of \citet{lee2026mec}.
Figure~\ref{fig:mec_conceptual} illustrates the general MEC workflow.
Suppose that the target parameter \(\theta_0\) is characterized as the unique
root of a population estimating equation \(U(\theta_0)=0\). Further suppose that
an initial weighted sample estimating equation \(U_n^\omega(\theta)=0\) provides
a valid sample analogue, in the sense that the normalized weighted sample
estimating equation converges to \(U(\theta)\). The weights
\(\omega\) may encode sampling, missingness, treatment assignment, or transport,
depending on the specific application.

\begin{figure}[h!]
\centering
\begin{tikzpicture}[
    scale=0.8,
    transform shape,
    >=Stealth,
    font=\scriptsize,
    boxmain/.style={
        draw=black, fill=gray!8,
        rounded corners=5pt, line width=0.6pt, align=center,
        inner xsep=6pt, inner ysep=3pt, minimum height=1.0cm
    },
    boxweight/.style={
        draw=black, fill=gray!15,
        rounded corners=5pt, line width=0.6pt, align=center,
        inner xsep=6pt, inner ysep=3pt, minimum height=1.0cm
    },
    boxest/.style={
        draw=black, fill=white,
        rounded corners=5pt, line width=0.8pt, align=center,
        inner xsep=6pt, inner ysep=3pt, minimum height=1.0cm
    },
    boxbasis/.style={
        draw=black, fill=gray!15,
        rounded corners=5pt, line width=0.6pt, align=center,
        inner xsep=6pt, inner ysep=3pt, minimum height=1.0cm
    },
    boxbregman/.style={
        draw=black, fill=white, text=black,
        rounded corners=5pt, line width=0.8pt, align=center,
        inner xsep=8pt, inner ysep=4pt, minimum height=1.1cm
    },
    flow/.style={
        draw=black!75,
        line width=0.75pt,
        -{Stealth[length=2.5mm,width=1.8mm]}
    },
    flowthick/.style={
        draw=black,
        line width=1.0pt,
        -{Stealth[length=2.5mm,width=1.8mm]}
    },
    lab/.style={
        font=\scriptsize,
        align=center,
        text=black
    }
]


\node[boxmain, minimum width=4.0cm] (top) at (0, 5.1) {%
    Weighted estimating equation\\[0.15em]
    $\displaystyle n^{-1}U_n^\omega(\theta)\to U(\theta)$\\[0.25em]
    $\displaystyle \omega=(\omega_1,\dots,\omega_n)$
};

\node[boxest, minimum width=2.5cm] (center) at (0, 3.3) {%
    MEC estimator\\[0.15em]
    $\widehat\theta_{\mathrm{MEC}}$
};

\node[boxweight, minimum width=2.5cm] (left) at (-5.5, 3.3) {%
    Baseline weights\\[0.15em]
    $\{\widehat d_j:j\in \mathcal I_0\}$
};

\node[boxbasis, minimum width=2.5cm] (right) at (5.5, 3.3) {%
    Calibration basis\\[0.25em]
    $\widehat h_i=\widehat h(X_i)$
};

\node[boxweight, minimum width=4.0cm] (updated) at (0, 1.6) {%
    MEC-updated weights\\[0.15em]
    $\{\widehat w_j:j\in \mathcal I_0\}$
};

\node[boxbregman, minimum width=6.5cm] (bregman) at (0, -0.2) {%
    Bregman projection\\[0.4em]
    $\displaystyle \min_{\{w_i>0:i\in\mathcal I_0\}} \sum_{i\in\mathcal I_0}D_G(w_i\|\widehat d_i) \quad \text{subject to} \quad \sum_{i\in\mathcal I_0}w_i\widehat h_i = \sum_{i\in\mathcal I_1}\widehat h_i$
};


\draw[flow] (top.south) -- (center.north)
    node[midway, right=1mm, lab] {Solve};

\draw[flow, rounded corners=8pt] (top.west) -- 
    node[below=2mm, lab] {Propensity score\\modeling} 
    (top.west -| left.north) -- (left.north);

\draw[flow, rounded corners=8pt] (top.east) -- 
    node[below=2mm, lab] {Prognostic score\\modeling} 
    (top.east -| right.north) -- (right.north);

\draw[flowthick, rounded corners=8pt] (left.south) -- 
    (left.south |- bregman.west) -- (bregman.west);

\draw[flowthick, rounded corners=8pt] (right.south) -- 
    (right.south |- bregman.east) -- (bregman.east);

\draw[flowthick] (bregman.north) -- (updated.south);

\draw[flowthick] ([xshift=-1.6cm]updated.north) -- ([xshift=-1.6cm]top.south)
    node[midway, left=1mm, lab] {Plug-in};

\end{tikzpicture}
\caption{Conceptual diagram for constructing the MEC estimator.}
\label{fig:mec_conceptual}
\end{figure}

MEC starts from normalized baseline weights
\(\{\widehat d_i:i\in \mathcal{I}_0\}\) for a source set
\(\mathcal{I}_0\), which define a valid initial weighted estimator for the
target parameter. Its distinctive feature is the use of a cross-fitted
calibration basis
\(\widehat h_i=\widehat h(X_i)=(1,\widehat b_1^{(-)}(X_i),\ldots,
\widehat b_{p-1}^{(-)}(X_i))\), where the components
\(\widehat b_\ell^{(-)}(\cdot)\) are out-of-fold ML predictions that determine
the directions along which balance is enforced. In causal settings, these steps
may correspond to propensity-score modeling and prognostic modeling,
respectively, allowing MEC to incorporate flexible ML methods. Given a strictly
convex Bregman generator \(G\), MEC updates the baseline weights by a Bregman
projection subject to calibration constraints, so that the weighted source set
\(\mathcal I_0\) is matched to the target set \(\mathcal I_1\) on the
calibration basis. The MEC-updated weights
\(\{\widehat w_i:i\in\mathcal{I}_0\}\) are then inserted into the original
weighted estimating equation, yielding the MEC estimator
\(\widehat\theta_{\rm MEC}\).

\paragraph{MEC as a prognostic-score balancing-weight method.}
In causal settings, the MEC framework is closely related to the broad literature
on balancing weights \citep{zubizarreta2023handbook}. Since the introduction of
classical calibration estimators in survey sampling
\citep{deville1992calibration}, related ideas have been extensively developed
in causal inference, including entropy balancing \citep{hainmueller2012entropy},
stable balancing weights \citep{zubizarreta2015stable}, covariate balancing
propensity scores \citep{imai2014covariate}, empirical balancing calibration
weighting \citep{chan2016globally}, overlap and balancing weights
\citep{li2018balancing}, approximate residual balancing
\citep{athey2018approximate}, augmented minimax linear estimation
\citep{hirshberg2021augmented}, and regression-implied weighting
\citep{chattopadhyay2023implied}. These methods share the principle that
weighted estimation can be improved by balancing suitable functions of baseline
covariates. When such functions are used to define calibration constraints, we refer to them as calibration basis functions, or collectively as the calibration basis. In this paper, we denote the population calibration basis by \(h(X)\) and its estimated version by \(\widehat h(X)\).

A central question in balancing-weight methods is therefore which functions of
the covariates should be balanced \citep{cohn2023balancing}. Existing approaches
commonly impose balance on raw covariates, low-order moments, interactions,
basis expansions, kernel-induced features, or other analyst-specified function
classes. MEC \citep{lee2026mec} takes a different approach by constructing the
calibration basis from cross-fitted ML predictions, or prognostic summaries in
causal settings, so that the balancing functions represent outcome-relevant
directions rather than only pre-specified covariate features. This perspective is
aligned with \citet{benmichael2021balancing}, who emphasize that the relevant
notion of balance is not merely balance in raw covariates themselves, but balance
in functions of the covariates that are predictive of the outcome or the
estimating equation, such as the conditional mean outcome function or the regression-error function. 

In causal settings, MEC can therefore be viewed as a
\emph{prognostic-score balancing-weight method}: it preserves the baseline
weighting mechanism used for identification, such as source-transport weighting,
while refining the weights through a principled calibration step that balances
the source and target samples in outcome-predictive directions. In this sense,
the MEC-updated weights play a dual role as both source-transport weights and
prognostic-score balancing weights.

\subsection{Applying MEC to weighted Cox regression}
\label{subsec:Applying MEC to weighted Cox regression}
We apply MEC \citep{lee2026mec} to the weighted
Cox estimating equation
\eqref{eq:weighted_Cox_regression_counting_process_form} to estimate the
marginal hazard-ratio parameter \(\theta_{ATT}\) defined in
\eqref{eq:marginal_structural_model_for_HR_ATT}. Throughout this section, we
assume the conditions of Theorem~\ref{thm:identification_weighted_cox}, so that
the ATT-weighted Cox estimating equation is centered at the marginal ATT
log-hazard ratio \(\theta_{ATT}\). Thus, identification has already been
established, and we focus on the construction and estimation of the MEC-Cox
estimator. The consistency and variance estimation of the MEC-Cox estimator are detailed in Subsection~\ref{subsec:conditions_consistency_mec_cox_optimal_calibration_basis} and Section~\ref{sec:variance_mec_cox}, respectively.

Let \(\mathcal I_1=\{i:A_i=1\}\) denote the treated trial index set, representing the target population, and let \(\mathcal I_0=\{i:A_i=0\}\) denote the external-control index set, representing the source (study) population. Let
\(|\mathcal I_1|=n_1\), \(|\mathcal I_0|=n_0\), and \(n=n_1+n_0\). We use stratified \(K\)-fold cross-fitting on the pooled sample of \(n\) patients. Specifically, we partition the full index set \(\{1,\ldots,n\}\) into \(K\) folds, $\mathcal J^{(1)},\ldots,\mathcal J^{(K)}.$ The fold assignment is performed separately within the trial cohort and the external-control cohort, so that each validation fold contains both source groups in approximately the same proportion as the full sample. For each fold \(k=1,\ldots,K\), let $\mathcal J^{(-k)}
=
\{1,\ldots,n\}\setminus \mathcal J^{(k)}$ denote the corresponding training set.

We first describe the construction of the baseline weights for MEC. Using observations in \(\mathcal J^{(-k)}\), we estimate the source propensity score $\pi(X_i)=\Pr(A_i=1\mid X_i)$ by a flexible ML method (or simply, by a parametric logistic regression), and denote the resulting estimator by \(\widehat\pi^{(-k)}(X)\). For each validation subject \(i\in\mathcal J^{(k)}\), define the cross-fitted propensity-score estimate $\widehat\pi_i
=
\widehat\pi^{(-k)}(X_i),$ possibly after truncation to avoid extreme values. For external-control subjects \(i\in\mathcal I_0\), define the cross-fitted ATT odds weight by $\widehat q_i
=\widehat\pi_i/(1-\widehat\pi_i).$ In our implementation, these baseline external-control weights are normalized to have total mass equal to the number of treated trial patients. 

We define the normalized baseline ATT weights, in a Hájek-type form, as
\begin{align}
\label{eq:baseline_weights}
\widehat d_i
=
\frac{n_1\widehat q_i}
{\sum_{j\in\mathcal I_0}\widehat q_j},
\qquad i\in\mathcal I_0.
\end{align}
These normalized ATT odds weights serve as the baseline weights for MEC.

Next, we describe the construction of the calibration basis. Let \(0<t_1<\cdots<t_{p-1}\) denote prespecified landmark times. For example, one may set these
landmark times as empirical quantiles of the observed event times among external controls. We estimate the external-control survival regression $S_0^0(t\mid X)
=
\Pr(T_i^0>t\mid X_i=X,A_i=0)
=
\Pr(T_i>t\mid X_i=X,A_i=0),$ where the second equality follows from consistency (A1) among external-control patients. Under the survival transportability assumption (A2), this external-control survival regression provides a prognostic summary of the counterfactual control survival experience for trial patients with the same covariates. 

For each fold \(k\), we fit a flexible control-survival learner using only the external-control subjects in the training set, $\mathcal I_0\cap\mathcal J^{(-k)},$ and denote the resulting estimator by $\widehat S_0^{(-k)}(t\mid X).$ One may use flexible survival learners, such as random survival forests (RSF)
\citep{ishwaran2008random}, when the external-control sample size is sufficiently
large. Simpler learners, such as Cox regression, may be preferable when the
external-control sample size is moderate or when computational speed is important.

For every validation subject \(i\in\mathcal J^{(k)}\), including both trial patients and external-control patients, define the $p$-dimensional cross-fitted survival-feature vector
\begin{align}
\label{eq:predictor_basis_survival}
\widehat h_i
=
\widehat h_i(X_i; t_1, \ldots, t_{p-1})
=
\left(
1,
\widehat S_0^{(-k)}(t_1\mid X_i),
\ldots,
\widehat S_0^{(-k)}(t_{p-1}\mid X_i)
\right) \in \mathbb{R}^{p}.
\end{align}

The intercept component of \(\widehat h_i\) enforces normalization of the
calibrated external-control weights. The landmark predicted control-survival
probabilities in \eqref{eq:predictor_basis_survival} can be viewed as a
vector-valued control-prognostic score
\citep{hansen2008prognostic,leacy2014joint}; see
Subsection~\ref{subsec:Choices of calibration basis and Bregman generator for MEC-Cox} for details. These features
summarize baseline covariates through their predicted association with the
counterfactual control survival outcome. In MEC-Cox, they guide the calibration step so that the weighted external controls resemble the treated trial cohort in predicted control prognosis.

In Subsection~\ref{subsec:conditions_consistency_mec_cox_optimal_calibration_basis},
we establish that, for consistency of the resulting MEC-Cox estimator, this
specific survival-probability basis \eqref{eq:predictor_basis_survival} can be replaced by any fixed-dimensional,
cross-fitted prognostic score or calibration basis, provided that it satisfies
the \(L_2\)-stochastic boundedness condition and the calibration regularity
condition. In Subsection~\ref{subsec:Choices of calibration basis and Bregman generator for MEC-Cox}, we further discuss
an ideal oracle basis and practical choices of the calibration basis.

We are now ready to perform the core MEC step, which updates the baseline
external-control weights in \eqref{eq:baseline_weights} through 
Bregman projection under calibration constraints constructed from
the survival-feature basis in \eqref{eq:predictor_basis_survival}. Let \(G\) be a strictly convex and twice continuously differentiable generator, with derivative \(g=G'\). See Table~\ref{tab:bregman_generators} for common choices of \(G\). For \(w_i>0\), define the Bregman divergence from the normalized baseline weight \(\widehat d_i\) by
\[
D_G(w_i\|\widehat d_i)
=
G(w_i)-G(\widehat d_i)-g(\widehat d_i)(w_i-\widehat d_i).
\]

\begin{table}[htbp]
\centering
\caption{Representative Bregman generators.}
\label{tab:bregman_generators}
\small
\setlength{\tabcolsep}{4pt}
\renewcommand{\arraystretch}{1.12}

\begin{tabular*}{\textwidth}{@{\extracolsep{\fill}}llll@{}}
\toprule
\textsc{Entropy} & \(G(u)\) & \(g(u)=G'(u)\) & \(D_G(u\|v)\) \\
\midrule
Quadratic
&
\(\frac{1}{2}u^2\)
&
\(u\)
&
\(\frac{1}{2}(u-v)^2\)
\\

Kullback--Leibler
&
\(u\log u\)
&
\(\log u+1\)
&
\(u\log(u/v)-u+v\)
\\

Empirical likelihood
&
\(-\log u\)
&
\(-u^{-1}\)
&
\(-\log(u/v)+u/v-1\)
\\

Squared Hellinger
&
\((\sqrt{u}-1)^2\)
&
\(1-u^{-1/2}\)
&
\(\sqrt{v}\{1-\sqrt{u/v}\}^2\)
\\

Inverse
&
\(\frac{1}{2u}\)
&
\(-\frac{1}{2}u^{-2}\)
&
\(\frac{1}{2}\left(\frac{1}{u}-\frac{1}{v}\right)
+\frac{u-v}{2v^2}\)
\\

R\'enyi \((\alpha>0)\)
&
\(\frac{u^{\alpha+1}}{\alpha+1}\)
&
\(u^\alpha\)
&
\(\frac{u^{\alpha+1}-v^{\alpha+1}}{\alpha+1}
-v^\alpha(u-v)\)
\\
\bottomrule
\end{tabular*}
\end{table}


The MEC-updated external-control weights are obtained by solving
\begin{align}
\label{eq:Bregman_projection}
\widehat w
=
\arg\min_{\{w_i>0:i\in\mathcal I_0\}}
\sum_{i\in\mathcal I_0}
D_G(w_i\|\widehat d_i)
\end{align}
subject to the calibration constraint $$\sum_{i\in\mathcal I_0} w_i\widehat h_i
=
\sum_{i\in\mathcal I_1}\widehat h_i.$$

The solution of \eqref{eq:Bregman_projection} admits the dual representation
\begin{align}
\label{eq:MEC_updated_weight}
\widehat w_i
=
w_i(\widehat\lambda)
=
g^{-1}\{g(\widehat d_i)+\widehat\lambda^\top \widehat h_i\},
\qquad i\in\mathcal I_0,
\end{align}
where \(\widehat\lambda\) is chosen to satisfy the dual calibration equation:
\begin{align}
\label{eq:dual_calibration_equation}
F(\lambda)
=
\sum_{i\in\mathcal I_0}
\widehat h_i
g^{-1}\{g(\widehat d_i)+\lambda^\top\widehat h_i\}
-
\sum_{i\in\mathcal I_1}\widehat h_i
=
0.
\end{align}

The final MEC-Cox estimating-equation weight is then defined by
\begin{align}
\label{eq:cox_estimating_equation_weights}
\widetilde\omega_i
=
\begin{cases}
1, & i\in\mathcal I_1 \quad \text{(trial cohort)},\\
\widehat w_i, & i\in\mathcal I_0 \quad \text{(external-control cohort)}.
\end{cases}
\end{align}
Here, `\(\widetilde{\omega}\)' denotes the final estimating-equation weight, whereas `\(\widehat w\)' denotes the MEC-updated weight. Equivalently, since \(A_i=1\) for treated trial patients and \(A_i=0\) for external-control patients, we can write $\widetilde\omega_i = A_i + (1-A_i)\widehat w_i.$


Finally, the MEC-Cox estimator \(\widehat\theta_{\mathrm{MEC}}\) is defined as the solution to
\begin{align}
\label{eq:Cox_score_MEC_theta}
U_n^{\mathrm{MEC}}(\theta)
=
\sum_{i=1}^n
\int
\widetilde\omega_i
\{A_i- \overline A_{\rm MEC, \omega}(t;\theta)\}
\,d\mathcal{N}_i(t)
=
0,    
\end{align}
where $\overline A_{\rm MEC, \omega}(t;\theta) = \{\sum_{i=1}^n
\widetilde\omega_i\mathcal{Y}_i(t)\exp(\theta A_i)A_i\}/\{ 
\sum_{i=1}^n
\widetilde\omega_i\mathcal{Y}_i(t)\exp(\theta A_i)\}
$.
The corresponding MEC-Cox hazard-ratio estimator is $\widehat{HR}_{\mathrm{MEC}}
=
\exp(\widehat\theta_{\mathrm{MEC}}).$
\subsection{Conditions for consistency of the MEC-Cox estimator}
\label{subsec:conditions_consistency_mec_cox_optimal_calibration_basis}
We have primarily illustrated MEC-Cox using predicted control survival
probabilities as the calibration basis in
\eqref{eq:predictor_basis_survival}. The following theorem establishes the
consistency of the MEC-Cox estimator \(\widehat\theta_{\mathrm{MEC}}\) for the
marginal log-hazard ratio \(\theta_{ATT}\) in
\eqref{eq:marginal_structural_model_for_HR_ATT}, under the causal and regularity
assumptions stated in Theorem~\ref{thm:identification_weighted_cox}, allowing
for a more general choice of calibration basis.

\begin{theorem}\label{thm:Conditions_for_consistency_MEC_Cox}
Suppose the assumptions of Theorem~\ref{thm:identification_weighted_cox}
hold. For \(i\in\mathcal J^{(k)}\), let
\(\widehat\pi_i=\widehat\pi^{(-k)}(X_i)\) and
\(\widehat h_i=\widehat h^{(-k)}(X_i)\), where
\(\widehat\pi^{(-k)}(\cdot)\) is the source-propensity-score estimator trained
on the pooled training-fold observations
\(\{(A_j,X_j):j\in\mathcal J^{(-k)}\}\), and
\(\widehat h^{(-k)}(\cdot)\) is a cross-fitted calibration-basis map trained
using the external-control observations
\(\{(X_j,Y_j,\delta_j):j\in\mathcal I_0\cap\mathcal J^{(-k)}\}\) and evaluated
on the validation fold \(\mathcal J^{(k)}\). Specifically, we write
\begin{align}
\label{eq:generic_basis_for_MEC_Cox}
\widehat h_i
=
\widehat h^{(-k)}(X_i)
=
\left(
1,\widehat b^{(-k)}_1(X_i),\ldots,
\widehat b^{(-k)}_{p-1}(X_i)
\right),
\qquad i\in\mathcal J^{(k)},
\end{align}
where \(\widehat b^{(-k)}_\ell(\cdot)\), \(\ell=1,\ldots,p-1\), denotes a
data-adaptive basis function trained using the external-control training subset
\(\mathcal I_0\cap\mathcal J^{(-k)}\). Let
\(\widehat\pi^{(-)}(\cdot)\) and \(\widehat h^{(-)}(\cdot)\) denote the
corresponding generic out-of-fold propensity-score estimator and
calibration-basis map, respectively. Assume the following additional conditions:
\begin{itemize}[leftmargin=1.5em, itemsep=0.35em]

    \item[] \textbf{\textit{C1. \(L_2\)-consistency of the cross-fitted propensity-score estimator.}} Under the external-control covariate distribution,
    \[
    \mathbb E\left[
    \{\widehat\pi^{(-)}(X)-\pi(X)\}^2 \mid A=0
    \right]
    =
    o_p(1).
    \]

    \item[] \textbf{\textit{C2. \(L_2\)-stochastic boundedness of the cross-fitted calibration basis.}} 
    The fixed-dimensional calibration basis
    \(\widehat h^{(-)}(X)\in\mathbb R^p\) satisfies, for each fixed \(p\) and
    \(r=0,1\),
    \[
    \mathbb E\left[
    \|\widehat h^{(-)}(X)\|^2
    \mid A=r
    \right]
    =
    O_p(1),
    \]
    where \(\|\cdot\|\) denotes the Euclidean norm.

    \item[] \textbf{\textit{C3. Calibration regularity.}} The Bregman calibration problem is well posed in the following sense:
    (i) the dual calibration equation \(F(\lambda)=0\) in
    \eqref{eq:dual_calibration_equation} admits a local solution
    \(\widehat\lambda\);
    (ii) \(F(\lambda)\) is locally smooth around \(\lambda=0\), and the
    corresponding normalized Jacobian has a nonsingular limiting matrix; and
    (iii) the inverse gradient map \(g^{-1}\) is locally smooth on the
    corresponding neighborhood of the baseline dual values
    \(\{g(\widehat d_i):i\in\mathcal I_0\}\).
\end{itemize}
Then the MEC-Cox estimator \(\widehat\theta_{\mathrm{MEC}}\) is consistent for
\(\theta_{ATT}\), the marginal log-hazard ratio in the treated trial target
population.
\end{theorem}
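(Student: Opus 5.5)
The plan is to reduce consistency of \(\widehat\theta_{\mathrm{MEC}}\) to the identification result of Theorem~\ref{thm:identification_weighted_cox}. The key intermediate claim is that the MEC-updated weights are asymptotically equivalent to the oracle ATT odds weights \(q(X_i)\). Once this holds, the normalized score \(n^{-1}U_n^{\mathrm{MEC}}(\theta)\) converges uniformly in \(\theta\) on a compact neighborhood to the same limit \(U(\theta)\) as \(n^{-1}U_n^\omega(\theta)\). The conclusion then follows from \(U(\theta_{ATT})=0\), the unique-root assumption, and a standard argmin/Z-estimator argument. The argument has three steps.

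\emph{Step 1 (baseline weights).} Show that \(\widehat d_i\) behaves like \(q(X_i)\) in an averaged \(L_1\) sense over \(\mathcal I_0\). By C1 and positivity A3, \(\widehat\pi\) is bounded away from one after truncation, and the map \(\pi\mapsto\pi/(1-\pi)\) is Lipschitz there. Hence
\[
n_0^{-1}\sum_{i\in\mathcal I_0}|\widehat q_i-q(X_i)| = o_p(1).
\]
Cross-fitting lets us condition on the training folds and apply the law of large numbers within each validation fold. For the Hájek normalization, \(n_0^{-1}\sum_{\mathcal I_0}\widehat q_j \to \mathbb E[q(X)\mid A=0]\), and this limit equals \(\Pr(A=1)/\Pr(A=0)\), the limit of \(n_1/n_0\). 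Therefore \(\widehat d_i-q(X_i)\) is \(o_p(1)\) on average.

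\emph{Step 2 (calibration perturbation is negligible).} Show that \(\widehat\lambda=o_p(1)\).

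First evaluate \(n^{-1}F(0)\), which equals
\[
n^{-1}\sum_{\mathcal I_0}\widehat d_i\widehat h_i - n^{-1}\sum_{\mathcal I_1}\widehat h_i .
\]
Conditional on a training fold, \(\widehat h^{(-k)}\) is a fixed function. The oracle-weight identity gives
\[
\mathbb E[(1-A)q(X)f(X)] = \mathbb E[A f(X)]
\]
for any integrable \(f\), so the conditional mean of \(n^{-1}F(0)\) is
\[
\mathbb E\bigl[(1-A)\{\widehat d-q(X)\}\widehat h(X)\bigr] .
\]
By Cauchy–Schwarz with C2 and Step 1, this mean is \(o_p(1)\). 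Some care is needed: an \(L_2\) bound on \(\widehat d-q\) follows from C1 only under boundedness of the odds, which I would impose via the truncation. Conditional fluctuations are \(O_p(n^{-1/2})\), again by C2. Hence \(n^{-1}F(0)=o_p(1)\).

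Next, C3(ii) gives a nonsingular limiting normalized Jacobian, and C3(iii) gives local smoothness of \(g^{-1}\). A mean-value expansion \(F(\widehat\lambda)=F(0)+\nabla F(\bar\lambda)\widehat\lambda\), with local invertibility, then yields \(\widehat\lambda=o_p(1)\). By smoothness of \(g^{-1}\) again,
\[
|\widehat w_i-\widehat d_i| \le C\,|\widehat\lambda^\top\widehat h_i|
\]
on the relevant neighborhood. Consequently
\[
n_0^{-1}\sum_{\mathcal I_0}|\widehat w_i-q(X_i)| \le o_p(1)+\|\widehat\lambda\|\,O_p(1)=o_p(1).
\]

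\emph{Step 3 (score convergence).} Write \(\widetilde\omega_i=\omega_i+\Delta_i\) with \(n^{-1}\sum|\Delta_i|=o_p(1)\). The score depends on the weights through three pieces, each indexed by \(t\in[0,\tau]\) and \(\theta\) in a compact set:
\[
n^{-1}\sum\widetilde\omega_i\,d\mathcal N_i,\qquad n^{-1}\sum\widetilde\omega_i\mathcal Y_i(t)e^{\theta A_i},\qquad n^{-1}\sum\widetilde\omega_i\mathcal Y_i(t)e^{\theta A_i}A_i .
\]
Each differs from its oracle counterpart by at most \(e^{|\theta|}n^{-1}\sum|\Delta_i|\), uniformly in \((t,\theta)\). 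The denominator limit is bounded away from zero on \([0,\tau]\) under the regularity conditions (positive at-risk probability at \(\tau\)). The ratio \(\overline A\) is therefore perturbed uniformly by \(o_p(1)\), and so is the integrated score. Combining this with the uniform convergence of the oracle score assumed in Theorem~\ref{thm:identification_weighted_cox} gives \(\sup_\theta|n^{-1}U_n^{\mathrm{MEC}}(\theta)-U(\theta)|=o_p(1)\), and hence \(\widehat\theta_{\mathrm{MEC}}\to_p\theta_{ATT}\).

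\emph{Main obstacle.} I expect Step 2 to be the hardest part. It requires showing that \(n^{-1}F(0)\) vanishes when \(\widehat h\) is data-adaptive and only \(L_2\)-bounded. It also requires that the local solution \(\widehat\lambda\) of C3(i) is the one near zero, so that the smoothness in C3(iii) applies uniformly over \(i\in\mathcal I_0\). I would handle the first issue by working fold by fold conditional on the training data, which is exactly why C1 and C2 are stated for out-of-fold maps. For the second, I would interpret C3 as guaranteeing a uniform bound \(\max_i|\widehat\lambda^\top\widehat h_i|=o_p(1)\), if necessary strengthening C2 slightly to control \(\max_i\|\widehat h_i\|=o_p(n^{1/2})\).
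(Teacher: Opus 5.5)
Your proposal is correct and follows essentially the same route as the paper's proof. You establish asymptotic feasibility of the normalized baseline weights for the calibration constraint (fold-wise conditioning, the density-ratio identity $\mathbb E\{(1-A)q(X)f(X)\}=\mathbb E\{Af(X)\}$, and Cauchy--Schwarz with C1--C2), then $\widehat\lambda=o_p(1)$ from C3 with a mean-value bound $|w_i(\widehat\lambda)-\widehat d_i|\lesssim|\widehat\lambda^\top\widehat h_i|$, and finally a uniform perturbation of the weighted Cox score plus the standard $Z$-estimator argument. The only difference is cosmetic: you compare $\widetilde\omega_i$ directly with the oracle odds $q(X_i)$ and use the uniform convergence granted in Theorem~\ref{thm:identification_weighted_cox}, whereas the paper passes through the normalized ATT-IPW score $U_n^{\mathrm{IPW}}$.

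One correction to your fallback remark. Because C1 carries no rate, you only obtain $\widehat\lambda=o_p(1)$, so $\max_i\|\widehat h_i\|=o_p(n^{1/2})$ would not give $\max_i|\widehat\lambda^\top\widehat h_i|=o_p(1)$. What you need is $\|\widehat\lambda\|\max_{i\in\mathcal I_0}\|\widehat h_i\|=o_p(1)$, which holds automatically for a bounded basis such as landmark survival probabilities; this is the same uniformity the paper implicitly reads into C3(iii).
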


Theorem~\ref{thm:Conditions_for_consistency_MEC_Cox} shows that
\(\widehat\theta_{\mathrm{MEC}}\) is consistent under the same causal and
Cox-score regularity conditions that justify the ATT-weighted Cox estimator,
together with \(L_2\)-consistency of the cross-fitted source propensity-score
estimator, \(L_2\)-stochastic boundedness of the cross-fitted calibration
basis, and regularity of the Bregman calibration problem. Cross-fitting plays
a key role in this argument. Conditional on the training fold, the
calibration-basis map \(\widehat h^{(-)}(\cdot)\) can be treated as a fixed
covariate-dependent function when it is evaluated on the validation fold. This
allows the proof to avoid Donsker-type entropy conditions or other empirical
process complexity restrictions on the data-adaptive procedure used to construct
the calibration basis, in the same spirit as cross-fitting in TMLE and DML literature \citep{chernozhukov2018double,zheng2011cross}.

It is important to note that MEC-Cox remains an IPW-type Cox estimator based on
the weighted Cox score \eqref{eq:weighted_Cox_regression_counting_process_form}.
Its outcome-adaptive component enters only through the MEC step described in
Subsection~\ref{subsec:Applying MEC to weighted Cox regression}. The MEC step
uses the calibration basis \(\{\widehat h_i:i=1,\ldots,n\}\) \eqref{eq:generic_basis_for_MEC_Cox} to produce
MEC-updated external-control weights
\(\{w_i(\widehat\lambda):i\in\mathcal I_0\}\) \eqref{eq:MEC_updated_weight}, which differ from the baseline
external-control weights \(\{\widehat d_i:i\in\mathcal I_0\}\) \eqref{eq:baseline_weights} only through a
finite-sample Bregman perturbation. Under the regularity conditions stated in
Theorem~\ref{thm:Conditions_for_consistency_MEC_Cox}, this perturbation is
asymptotically negligible in empirical \(L_2\) norm, and the normalized ATT-IPW
weights are asymptotically feasible for the MEC calibration constraint: $(1/n)\sum_{i\in\mathcal I_0}
\{w_i(\widehat\lambda)-\widehat d_i\}^2
= o_p(1),$ and $(1/n_1)
\{
\sum_{i\in\mathcal I_0}\widehat d_i\widehat h_i
-
\sum_{i\in\mathcal I_1}\widehat h_i
\}
= o_p(1),$ where the second \(o_p(1)\) statement is understood componentwise.



\subsection{Choices of calibration basis and Bregman generator for MEC-Cox}
\label{subsec:Choices of calibration basis and Bregman generator for MEC-Cox}

This subsection discusses the choices of calibration basis and Bregman generator for MEC-Cox. We first describe the oracle prognostic score basis as an efficiency target, then discuss practical cross-fitted calibration bases, and motivate the Kullback-Leibler (KL) generator as a canonical default for ATT transport.

\paragraph{An oracle prognostic score basis.}
Following \citet{hansen2008prognostic}, a control-prognostic score is a
covariate summary \(\Psi(X)\) such that
\begin{align}
\label{eq:prognostic_score_condition}
T^0\perp X\mid \Psi(X).
\end{align}
Thus, conditional on \(\Psi(X)\), the remaining variation in \(X\) is not
predictive of the counterfactual control event time \(T^0\). This differs from
the source propensity score \(\pi(X)=\Pr(A=1\mid X)\), which is a balancing
score for source or treatment assignment satisfying \(X\perp A\mid \pi(X)\)
\citep{rosenbaum1983central}. In MEC-Cox, the source propensity score, or
equivalently the ATT odds weight \(q(X)=\pi(X)/\{1-\pi(X)\}\), determines the
transport weights, whereas the control-prognostic score \(\Psi(X)\) summarizes
outcome-relevant variation. Figure~\ref{fig:DAG_MEC_Cox} summarizes these
distinct roles.

\begin{figure}[h!]
    \centering
    \resizebox{0.9\textwidth}{!}{
    \begin{tikzpicture}[
        node distance=1.5cm and 2.2cm,
        obs/.style={draw, thick, circle, minimum size=0.7cm, inner sep=1pt, font=\small\itshape, fill=gray!5},
        pot/.style={draw, thin, rectangle, rounded corners=4pt, minimum size=0.7cm, inner sep=4pt, font=\small\itshape, fill=white},
        box/.style={draw, thin, rectangle, rounded corners=3pt, text width=3.8cm, font=\tiny, inner sep=5pt, align=center},
        arrow/.style={-{Stealth[scale=1.0]}, thick},
        dashed_arrow/.style={-{Stealth[scale=1.0]}, thick, dashed, gray!60}
    ]

        \node[obs] (X) {$X$};
        \node[below=0.00cm of X, font=\tiny] {Baseline Covariates};

        \node[obs, above right=1.0cm and 2.4cm of X] (A) {$A$};
        \node[below=0.1cm of A, font=\tiny, align=left] {
            $A=1$: Trial \\
            $A=0$: External
        };

        \node[pot, right=3.5cm of X] (T1) {$T^1$};
        \node[below=0.05cm of T1, font=\tiny, text width=2.5cm, align=center] {Potential Treatment Outcome};

        \node[pot, below right=1.0cm and 2.4cm of X] (T0) {$T^0$};
        \node[below=0.05cm of T0, font=\tiny, text width=2.5cm, align=center] {Potential Control Outcome};

        \node[obs, right=2.2cm of T1] (T) {$T$};
        \node[below=0.3cm of T, font=\tiny, align=center] {
            Observed Outcome \\
            $T = AT^1 + (1-A)T^0$
        };

        \draw[arrow] (X) -- (A) coordinate[midway] (XA_mid);
        \draw[arrow] (X) -- (T1);
        \draw[arrow] (X) -- (T0) coordinate[midway] (XT0_mid);

        \draw[dashed_arrow] (A) to [bend left=15] node[above, sloped, black, font=\tiny] {selects} (T);
        \draw[dashed_arrow] (T1) -- (T);
        \draw[dashed_arrow] (T0) to [bend right=15] (T);

        \node[box, above left=0.5cm and -1cm of X] (PropScore) {
            \textbf{Source Propensity Score} \\
            $\pi(X)=\Pr(A=1\mid X)$ \\
            \vspace{1pt} \hrule \vspace{2pt}
            $A \perp X \mid \pi(X)$ \\
            Reweights external controls \\
            to the treated trial population \\
            \vspace{2pt}
            $q(X)=\pi(X)/\{1-\pi(X)\}$
        };

        \node[box, below left=0.5cm and -1cm of X] (ProgScore) {
            \textbf{Control-Prognostic Score} \\
            $\Psi(X)$ \\
            \vspace{1pt} \hrule \vspace{2pt}
            $T^0 \perp X \mid \Psi(X)$ \\
            \vspace{2pt}
            Determine the oracle calibration basis\\
            $h_{\rm oracle}(X)=(1,\Psi(X))$
        };

        \node[box, right=1cm of T, text width=3.6cm] (Interpretation) {
            \textbf{MEC-Cox estimation} \\
            \vspace{1pt} \hrule \vspace{3pt}
            \begin{flushleft}
            \textbf{Propensity:} ATT weighting \\
            \textbf{Prognostic:} Basis selection \\
            \textbf{Estimand:} ATT log-hazard ratio \(\theta_{ATT}\)
            \end{flushleft}
        };

        \draw[dashed, -{Stealth}, gray!70] (PropScore.east) -- (XA_mid);
        \draw[dashed, -{Stealth}, gray!70] (ProgScore.east) -- (XT0_mid);

    \end{tikzpicture}
    }
    \caption{Conceptual roles of the source propensity score and the
control-prognostic score in MEC-Cox.}
    \label{fig:DAG_MEC_Cox}
\end{figure}

This distinction motivates the oracle control-prognostic score basis for MEC-Cox,
\begin{align}
\label{eq:oracle_basis}
h_{\rm oracle}(X)=(1,\Psi(X)).
\end{align}
Proposition~\ref{prop:ideal_prognostic_basis} in the Appendix shows that, under the conditions of
Theorem~\ref{thm:Conditions_for_consistency_MEC_Cox} and the existence of a
control-prognostic score satisfying \eqref{eq:prognostic_score_condition}, the
following inequality holds among the admissible MEC-Cox calibration bases
\(\mathcal H\):
\[
\mathcal V_{\rm res}^{0}(h)
\ge
\mathcal V_{\rm res}^{0}(h_{\rm oracle}),
\qquad h\in\mathcal H.
\]
Here, \(\mathcal V_{\rm res}^{0}(h)\) denotes the residual variation of the
weight-normalized external-control Cox contribution after projection onto the
basis \(h(X)\), $\mathcal V_{\rm res}^{0}(h)
=
\operatorname{Var}[
\eta^0-\mathbb E\{\eta^0\mid h(X),A=0\}\mid A=0
],$ $\eta^0=\eta/q(X),$ where \(\eta\) denotes a generic limiting external-control LW Cox contribution, obtained as the limiting analogue of \(\widehat\eta_i\) defined in
\eqref{eq:MEC_Cox_contribution_at_solution_optimized_dual_parameter} of Section \ref{sec:variance_mec_cox} with $A_i=0$; see the proof of Proposition~\ref{prop:ideal_prognostic_basis} for its closed-form expression. Thus, after the ATT
transport weight has been factored out, the oracle basis
\eqref{eq:oracle_basis} leaves the smallest residual variation in the
weight-normalized external-control Cox contribution. Equivalently, it captures
the largest baseline-covariate-explained component of the control-survival
contribution relevant to the MEC-Cox estimating equation.

\paragraph{Practical choice of calibration basis.}
In practice, the true control-prognostic score \(\Psi(X)\) is unknown, and the
oracle basis \(h_{\rm oracle}(X)=(1,\Psi(X))\) cannot be used directly.
Thus, the oracle basis should be viewed as an ideal efficiency-motivating target,
rather than as a requirement for constructing MEC-Cox. Because \(\Psi(X)\) is an
unknown and potentially complex prognostic summary, estimating it consistently
can be challenging. This naturally raises the question of whether the MEC-Cox
estimator remains consistent when the chosen calibration basis does not
consistently estimate the oracle prognostic score or any other population
nuisance function.

Theorem~\ref{thm:Conditions_for_consistency_MEC_Cox} guarantees consistency of
the MEC-Cox estimator when the cross-fitted source-propensity-score estimator is
\(L_2\)-consistent as in C1, and the MEC constraint is constructed using a
fixed-dimensional, cross-fitted calibration basis of the form
\eqref{eq:generic_basis_for_MEC_Cox} that satisfies the \(L_2\)-stochastic
boundedness condition in C2 and the calibration regularity condition in C3.
Thus, the specific form of \(\widehat b^{(-k)}_\ell(\cdot)\),
\(\ell=1,\ldots,p-1\), is not essential for consistency of
\(\widehat\theta_{\mathrm{MEC}}\) for \(\theta_{\mathrm{ATT}}\). The
survival-probability landmark basis in \eqref{eq:predictor_basis_survival} is therefore only one possible choice. 

Since the boundedness condition C2 in
Theorem~\ref{thm:Conditions_for_consistency_MEC_Cox} is fairly mild, it allows
substantial flexibility in the choice of calibration basis. For example, when computational
speed is a primary concern, the calibration basis may be constructed from
simple out-of-fold prognostic summaries, such as estimated hazards, cumulative
hazards, or risk scores. One convenient choice is the cross-fitted Cox linear
predictor, which gives the two-dimensional calibration basis $\widehat h_i
=
(
1,\widehat\zeta^{(-k)}(X_i)
)
\in\mathbb R^2$ with $\widehat\zeta^{(-k)}(X_i)
=
X_i^\top\widehat\beta_0^{(-k)},$ ($i\in\mathcal J^{(k)} $). Here, \(\widehat\zeta^{(-k)}(X_i)\) is the out-of-fold linear predictor trained
using the external-control training sample
\(\mathcal I_0\cap\mathcal J^{(-k)}\) under a Cox working model
\citep{cox1972regression}. In high-dimensional settings,
\(\widehat\beta_0^{(-k)}\) may be obtained from a penalized Cox model, such as
the lasso-Cox estimator \citep{tibshirani1997lasso}.

\paragraph{Canonical generator for ATT transport.}
The choice of Bregman generator determines the geometry by which the baseline
external-control weights \(\{\widehat d_i:i\in I_0\}\) in
\eqref{eq:baseline_weights} are perturbed. In the ATT setting, these baseline
weights are already normalized source-to-target transport weights, because they
are constructed from the estimated ATT odds \(\widehat q_i\). Therefore, the
natural calibration update should preserve this transport structure and modify it
only through a relative, prognostic-balance correction.

For the KL generator \(G(u)=u\log u\), the dual MEC solution in
\eqref{eq:MEC_updated_weight} gives the offset-plus-fluctuation representation
\begin{align}
\label{eq:KL_fluctuation}
\log \widehat w_i=
\log\left(
\frac{n_1}{\sum_{j\in I_0}\widehat q_j}
\right)
+
\log \widehat q_i
+
\widehat\lambda^\top\widehat h_i,
\qquad i\in I_0.
\end{align}
Here, the first two terms form the baseline ATT transport weight on the log
scale, while the last term is the MEC-induced basis-balancing fluctuation.
Thus, KL is canonical for ATT transport because it preserves the baseline
log-odds transport structure and updates it by a log-linear fluctuation chosen
to balance the calibration basis. When \(\widehat h_i\) is constructed from
control-prognostic features \eqref{eq:oracle_basis}, this fluctuation can be
interpreted as a prognostic-balance correction to the original ATT transport
weights. Other generators, such as empirical likelihood, squared Hellinger, and
Rényi, are valid Bregman choices, but they perturb the baseline weights on
reciprocal or power scales rather than on the log-odds transport scale. Hence,
KL provides a natural default generator for MEC-Cox in the ATT transport setting.

This MEC-updating representation in \eqref{eq:KL_fluctuation} also clarifies
the connection with targeting updates in TMLE
\citep{gruber2010targeted,van2011targeted,van2018targeted}. In both cases, an
initial estimate is updated through a finite-dimensional fluctuation on a
natural link scale. The key difference is that TMLE chooses the fluctuation to
solve an EIF equation, whereas MEC
chooses \(\widehat\lambda\) to satisfy the calibration equation. Hence, the KL
update is TMLE-like in form, but it targets calibration balance rather than
directly solving the EIF equation.

\section{Variance Estimation for the MEC-Cox Estimator}
\label{sec:variance_mec_cox}
\subsection{Stacked estimating-equation formulation}
\label{subsec:variance_stacked_equation}

We now describe variance estimation for the MEC-Cox estimator. The main difficulty is that the final Cox estimating-equation weights \(\widetilde\omega_i = A_i + (1-A_i)\widehat w_i\) ($i=1,\cdots, n$) in \eqref{eq:cox_estimating_equation_weights} are not fixed, because the external-control weights $\widehat w_i
=w_i(\widehat\lambda)$ ($i\in\mathcal I_0$) in \eqref{eq:MEC_updated_weight} are obtained through the MEC update and therefore depend on the estimated dual parameter \(\widehat\lambda\). Thus, variance estimation should account for the first-order uncertainty induced by estimating the calibration parameter.

We consider the stacked estimating system obtained by combining the weighted Cox estimating equation for \(\theta\) with the dual calibration equation for \(\lambda\), as follows. For exposition, we rewrite the MEC-weighted Cox score
\(U_n^{\mathrm{MEC}}(\theta)\) in \eqref{eq:Cox_score_MEC_theta} using notation
that makes its dependence on \(\lambda\) explicit:
\[
U_{\rm MEC,\theta}(\theta,\lambda)
:=
\sum_{i=1}^n
\int
\widetilde\omega_i(\lambda)
\{A_i-\overline A_{\rm MEC,\omega}(t;\theta,\lambda)\}
\,d\mathcal{N}_i(t),
\]
where $\overline A_{\rm MEC,\omega}(t;\theta,\lambda)
=\{\sum_{i=1}^n
\widetilde\omega_i(\lambda)\mathcal{Y}_i(t)\exp(\theta A_i)A_i\}/ \{
\sum_{i=1}^n
\widetilde\omega_i(\lambda)\mathcal{Y}_i(t)\exp(\theta A_i)\}.$ Note that the MEC-Cox estimator \(\widehat\theta_{\rm MEC}\) satisfies $U_{\rm MEC,\theta}(\widehat\theta_{\rm MEC},\widehat\lambda)=0$. The dual calibration equation for \(\lambda\) in \eqref{eq:dual_calibration_equation} can be rewritten as $F(\lambda)
=
\sum_{i=1}^n \rho_i(\lambda)
=
0$, where the subject-level calibration contribution is 
\begin{align}
\nonumber
\rho_i:\mathcal D_{\lambda}\subseteq \mathbb R^p \to \mathbb R^p,
\qquad
\rho_i(\lambda)
=
(1-A_i)w_i(\lambda)\widehat h_i
-
A_i\widehat h_i.
\end{align}
Here, \(\mathcal D_{\lambda}\) denotes the set of \(\lambda\) values for which
\(w_i(\lambda)\) is well defined. Collect the Cox parameter and dual calibration parameter into the joint vector
\begin{align}
\label{eq:stacked_parameter}
\xi=(\theta,\lambda)\in\mathbb R^{p+1},
\qquad
\widehat\xi=(\widehat\theta_{\rm MEC},\widehat\lambda).    
\end{align}
We define the stacked estimating system
\begin{align}
\label{eq:stacked_estimating_system}
\Psi_n(\xi)
=
\sum_{i=1}^n
\Psi_i(\xi)
=
\sum_{i=1}^n
\begin{pmatrix}
\eta_i(\theta,\lambda)\\
\rho_i(\lambda)
\end{pmatrix}
=
0,    
\end{align}
where \(\eta_i(\theta,\lambda)\) is the LW empirical Cox
contribution \citep{lin1989robust,binder1992fitting}, evaluated at a generic value of the Cox parameter \(\theta\) and conditional on a fixed dual parameter \(\lambda\) (see Subsection \ref{subsec:lin_wei_binder_empirical_contribution} in the Appendix for derivation):
\begin{align}
\nonumber
\eta_i(\theta,\lambda)
&=
\int
\widetilde\omega_i(\lambda)
\{A_i-\overline A_{\rm MEC,\omega}(t;\theta,\lambda)\}
\,d\mathcal{N}_i(t)\\
\label{eq:Lin_wei_Binder_contribution}
&\quad -
\int
\frac{
\widetilde\omega_i(\lambda)\mathcal{Y}_i(t)\exp(\theta A_i)
\{A_i-\overline A_{\rm MEC,\omega}(t;\theta,\lambda)\}
}{
S^{(0)}_{\lambda}(t;\theta)
}
\,d\mathcal{N}_{\omega,\lambda}(t),
\end{align}
where $S^{(r)}_{\lambda}(t;\theta)
=
\sum_{i=1}^n
\widetilde\omega_i(\lambda)
\mathcal{Y}_i(t)\exp(\theta A_i)A_i^r, $ $(r=0,1)$ and $d\mathcal{N}_{\omega,\lambda}(t)
=
\sum_{j=1}^n
\widetilde\omega_j(\lambda)\,d\mathcal{N}_j(t).$ We write
\begin{align}
    \label{eq:MEC_Cox_contribution_at_solution_optimized_dual_parameter}
    \widehat\eta_i
=
\eta_i(\widehat\theta_{\rm MEC},\widehat\lambda) \in \mathbb{R},
\qquad
\widehat\rho_i
=
\rho_i(\widehat\lambda) \in \mathbb{R}^p ,
\end{align}
and use these quantities in the stacked sandwich variance estimator.


\subsection{First-order linearization and sandwich variance}
\label{subsec:first_order_linearization_mec}

The stacked sandwich variance follows from a first-order Taylor expansion of the stacked estimating equation \(\Psi_n(\xi)\) in \eqref{eq:stacked_estimating_system}. Since \(\widehat\xi=(\widehat\theta_{\rm MEC},\widehat\lambda)\) in \eqref{eq:stacked_parameter} solves \(\Psi_n(\widehat\xi)=0\) \eqref{eq:stacked_estimating_system}, a first-order expansion around the population target \(\xi_0=(\theta_0,\lambda_0)\) gives
\begin{align}
\label{eq:taylor_expansion_joint}
0
=
\Psi_n(\widehat\xi)
=
\Psi_n(\xi_0)
+
D(\xi_0)(\widehat\xi-\xi_0)
+
R_n,
\,\,
D(\xi_0)
=
\left.
\frac{\partial \Psi_n(\xi)}
{\partial \xi^\top}
\right|_{\xi=\xi_0}\in \mathbb R^{(p+1)\times(p+1)},
\end{align}
where \(R_n\) is a higher-order Taylor remainder satisfying $\left\|D(\xi_0)^{-1}R_n\right\|=o_p(n^{-1/2})$ under standard smoothness and regularity conditions \citep{van2000asymptotic}. Here, although \(D(\xi_0)\) depends on the sample through \(\Psi_n\) \eqref{eq:stacked_estimating_system}, we suppress this dependence for notational simplicity. 

The population target \(\xi_0=(\theta_0,\lambda_0)\) in \eqref{eq:taylor_expansion_joint} satisfies
\(\theta_0=\theta_{ATT}\) and \(\lambda_0=0\) under
Theorem~\ref{thm:Conditions_for_consistency_MEC_Cox}. The latter follows from
Condition C3 and the fact that \(\widehat\lambda=o_p(1)\); see the proof of
Theorem~\ref{thm:Conditions_for_consistency_MEC_Cox} in the Appendix. Thus, we have
\begin{align}
\label{eq:approx_form_joint}
\widehat\xi-\xi_0
&=
-
D(\xi_0)^{-1}\Psi_n(\xi_0)
=
\sum_{i=1}^n
\phi_i + o_p(n^{-1/2}),
\end{align}
where $\phi_i
=
-
D(\xi_0)^{-1}
\Psi_i(\xi_0)$ denotes the subject-level influence contribution. We estimate \(D(\xi_0)\) \eqref{eq:taylor_expansion_joint} by the empirical derivative matrix
\begin{align}
\label{eq:D_hat}
\widehat D
=
\left.
\frac{\partial}{\partial \xi^\top}
\sum_{i=1}^n
\begin{pmatrix}
\eta_i(\theta,\lambda)\\
\rho_i(\lambda)
\end{pmatrix}
\right|_{\xi=\widehat\xi}
=
\begin{pmatrix}
\widehat D_{\theta\theta} & \widehat D_{\theta\lambda}\\
0 & \widehat D_{\lambda\lambda}
\end{pmatrix}
\in \mathbb R^{(p+1)\times(p+1)}.
\end{align}
Here, because \(\rho_i(\lambda)\) does not depend on \(\theta\), the matrix $\widehat D$ has the block upper-triangular form with
\[
\widehat D_{\theta\theta}
=
\left.
\frac{\partial U_{\rm MEC,\theta}(\theta,\widehat\lambda)}
{\partial\theta}
\right|_{\theta=\widehat\theta_{\rm MEC}}
\in\mathbb R,
\quad
\widehat D_{\theta\lambda}
=
\left.
\frac{\partial U_{\rm MEC,\theta}(\widehat\theta_{\rm MEC},\lambda)}
{\partial\lambda^\top}
\right|_{\lambda=\widehat\lambda}
\in\mathbb R^{1\times p},
\]
and
\[
\widehat D_{\lambda\lambda}
=
\left.
\frac{\partial F(\lambda)}
{\partial\lambda^\top}
\right|_{\lambda=\widehat\lambda}
=
H_0^\top
\operatorname{diag}
\left[
\frac{1}{g'\{\widehat w_i\}}:i\in\mathcal I_0
\right]
H_0
\in\mathbb R^{p\times p},
\]
where the matrix \(H_0\) is defined in Section \ref{subsec:dual_newton_solver}.
In implementation, \(\widehat D_{\theta\theta}\) and \(\widehat D_{\theta\lambda}\) may be computed by numerical differentiation of the weighted Cox score, whereas \(\widehat D_{\lambda\lambda}\) is obtained directly from the dual Newton solver (see Algorithm \ref{alg:mec_cox} in the Appendix). From \eqref{eq:approx_form_joint}, the estimated subject-level contribution to
\(\widehat\xi-\xi_0\) is
\begin{align}
\label{eq:estimated_subject_level_contribution}
\widehat\phi_i
=
-\widehat D^{-1}\widehat\Psi_i,
\qquad
\widehat\Psi_i
=
\Psi_i(\widehat\xi)
=
\begin{pmatrix}
\eta_i(\widehat\theta_{\rm MEC},\widehat\lambda)\\
\rho_i(\widehat\lambda)
\end{pmatrix}
=
\begin{pmatrix}
\widehat\eta_i\\
\widehat\rho_i
\end{pmatrix}.
\end{align}

Thus, the empirical stacked sandwich covariance estimator for \(\widehat\xi=(\widehat\theta_{\rm MEC},\widehat\lambda)\) \eqref{eq:stacked_parameter} is
\begin{align}
\nonumber
\widehat V(\widehat\xi)
&=
\sum_{i=1}^n
\widehat\phi_i\widehat\phi_i^\top
=
\sum_{i=1}^n
\left\{
-\widehat D^{-1}\widehat\Psi_i
\right\}
\left\{
-\widehat D^{-1}\widehat\Psi_i
\right\}^{\top}\\
\nonumber
&=
\widehat D^{-1}
\left\{
\sum_{i=1}^n
\widehat\Psi_i\widehat\Psi_i^\top
\right\}
(\widehat D^{-1})^\top
=
\widehat D^{-1}\widehat B(\widehat D^{-1})^\top
,
\end{align}
where $\widehat B
=
\sum_{i=1}^n
\widehat\Psi_i\widehat\Psi_i^\top.$ Therefore, the variance estimator for \(\widehat\theta_{\rm MEC}\) is
\begin{align}
\label{eq:MEC_variance}
\widehat{\operatorname{Var}}(\widehat\theta_{\rm MEC})
=
e_1^\top
\widehat V(\widehat\xi)
e_1,\qquad
e_1=(1,0,\ldots,0)^\top\in\mathbb R^{p+1}.
\end{align}
A Wald-type confidence interval for the ATT marginal log-hazard ratio $\theta_{ATT}$ \eqref{eq:marginal_structural_model_for_HR_ATT} is
\[
\widehat\theta_{\rm MEC}
\pm
z_{\alpha/2}\widehat{\operatorname{se}}(\widehat\theta_{\rm MEC}),\quad
\widehat{\operatorname{se}}(\widehat\theta_{\rm MEC})
=
\left\{
e_1^\top
\widehat V(\widehat\xi)
e_1
\right\}^{1/2},
\]
and the corresponding confidence interval for the ATT marginal hazard ratio \(HR_{ATT}\)
is obtained by exponentiating the resulting interval: $\exp [
\widehat\theta_{\rm MEC}
\pm
z_{\alpha/2}
\widehat{\operatorname{se}}(\widehat\theta_{\rm MEC})
].$

This variance estimator in \eqref{eq:MEC_variance} is a calibration analogue of the corrected sandwich estimator for IPW Cox regression proposed by \citet{shu2021variance}. More specifically, in the IPW setting, the authors stack the weighted Cox score with the propensity-score estimating equation obtained from a parametric logistic regression model to account for uncertainty from weight estimation. In MEC-Cox, the corresponding weight-estimation step is the Bregman calibration step; therefore, we stack the weighted Cox score with the dual calibration equation \eqref{eq:dual_calibration_equation}. The resulting sandwich estimator accounts for the first-order effect of estimating \(\lambda\), while treating the cross-fitted nuisance functions used to construct \(\widehat d_i\) and \(\widehat h_i\) as fixed in this sandwich calculation.

\subsection{MEC-induced efficiency gain relative to fixed-weight robust variance under a projection condition}
\label{subsec:fixed_weight_projection}

We next give a projection condition under which the MEC-Cox stacked sandwich
variance is no larger than the fixed-weight Lin--Wei robust variance. The key
idea is that calibration removes the component of Cox score variation explained
by the calibration contribution. By the block upper-triangular form of \(\widehat D\) \eqref{eq:D_hat}, the first component of the stacked influence contribution \eqref{eq:estimated_subject_level_contribution} can be written as $\widehat\phi_i^{\rm MEC}
=
-\widehat D_{\theta\theta}^{-1}
\{
\widehat\eta_i
-
\widehat C_{\rm lin}\widehat\rho_i
\}$ with $\widehat C_{\rm lin}
=
\widehat D_{\theta\lambda}
\widehat D_{\lambda\lambda}^{-1}.$ Therefore, the MEC-Cox sandwich variance in \eqref{eq:MEC_variance} can be re-expressed as
\begin{align}
\label{eq:MEC_variance_decomposition_form}
\widehat{\operatorname{Var}}_{\rm MEC}
(\widehat\theta_{\rm MEC})
=
\sum_{i=1}^n
\left(
\widehat\phi_i^{\rm MEC}
\right)^2
=
\widehat D_{\theta\theta}^{-1}
\left\{
\sum_{i=1}^n
\left(
\widehat\eta_i
-
\widehat C_{\rm lin}\widehat\rho_i
\right)^2
\right\}
(\widehat D_{\theta\theta}^{-1})^\top.
\end{align}
In contrast, if the calibrated weights are treated as fixed, so that the
calibration equation is ignored, then the LW variance uses only the Cox
score contribution:
\begin{align}
\label{eq:LW_variance_decomposition_form}
\widehat{\operatorname{Var}}_{\rm LW}
(\widehat\theta_{\rm MEC})
=
\widehat D_{\theta\theta}^{-1}
\left\{
\sum_{i=1}^n
\widehat\eta_i^2
\right\}
(\widehat D_{\theta\theta}^{-1})^\top.
\end{align}
Thus, the difference between the two variance estimators is governed by whether
\(\widehat C_{\rm lin}\widehat\rho_i\) removes a genuine calibration-explained component of
\(\widehat\eta_i\). 

The following theorem makes this statement precise.

\begin{theorem} \label{thm:mec_projection_gain}
Let \(\widehat\eta_i\) denote the empirical Lin--Wei Cox contribution in
\eqref{eq:Lin_wei_Binder_contribution}, and let \(\widehat\rho_i\) denote the
empirical contribution from the dual calibration equation in
\eqref{eq:MEC_Cox_contribution_at_solution_optimized_dual_parameter}. Define
\(\widehat B_{\theta\lambda}=\sum_{i=1}^n\widehat\eta_i\widehat\rho_i^\top\) and
\(\widehat B_{\lambda\lambda}=\sum_{i=1}^n\widehat\rho_i\widehat\rho_i^\top\),
and suppose that \(\widehat B_{\lambda\lambda}\) is nonsingular. The empirical
least-squares projection coefficient of the Cox contribution onto the calibration
contribution is
\begin{align}
\label{eq:projection_condition}
\widehat C_{\rm proj}
=
\arg\min_{C\in\mathbb R^{1\times p}}
\sum_{i=1}^n
\bigl(\widehat\eta_i-C\widehat\rho_i\bigr)^2
=
\widehat B_{\theta\lambda}\widehat B_{\lambda\lambda}^{-1}.
\end{align}
Suppose that the stacked sandwich linearization uses the same correction
coefficient, namely
\(\widehat C_{\rm lin}
=
\widehat D_{\theta\lambda}\widehat D_{\lambda\lambda}^{-1}
=
\widehat C_{\rm proj}\).

\noindent Then the MEC-Cox sandwich variance estimator \eqref{eq:MEC_variance_decomposition_form} is no larger than the fixed-weight LW sandwich variance estimator \eqref{eq:LW_variance_decomposition_form}: $\widehat{\operatorname{Var}}_{\rm MEC}(\widehat\theta_{\rm MEC})
\le
\widehat{\operatorname{Var}}_{\rm LW}(\widehat\theta_{\rm MEC}).$ Moreover, the inequality is strict whenever the calibration contribution explains a nonzero component of the Cox contribution.
\end{theorem}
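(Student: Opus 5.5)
The plan is to reduce the comparison to a statement about residual sums of squares, since both variance estimators share the same scalar sandwich factor. By \eqref{eq:MEC_variance_decomposition_form} and \eqref{eq:LW_variance_decomposition_form}, both estimators have the form $\widehat D_{\theta\theta}^{-2}\,Q(C)$, with
\[
Q(C)=\sum_{i=1}^n(\widehat\eta_i-C\widehat\rho_i)^2 .
\]
The MEC estimator uses $C=\widehat C_{\rm lin}$, and the LW estimator uses $C=0$. Here $\widehat D_{\theta\theta}$ is a nonzero scalar, which it must be for the stacked sandwich to be defined. Because $\widehat D_{\theta\theta}^{-2}>0$, it suffices to show $Q(\widehat C_{\rm lin})\le Q(0)$, with strict inequality in the stated case.

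\textbf{Step 1: write $Q$ as a convex quadratic.} Expanding gives
\[
Q(C)=\sum_i\widehat\eta_i^2-2C\widehat B_{\theta\lambda}^\top+C\widehat B_{\lambda\lambda}C^\top .
\]
$\widehat B_{\lambda\lambda}$ is a sum of outer products, so it is positive semidefinite, and by assumption it is nonsingular, hence positive definite. So $Q$ is strictly convex in $C$. Setting its gradient to zero gives the unique minimizer $\widehat C_{\rm proj}=\widehat B_{\theta\lambda}\widehat B_{\lambda\lambda}^{-1}$, which verifies the closed form in \eqref{eq:projection_condition}.

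\textbf{Step 2: compute the variance reduction exactly.} Completing the square gives
\[
Q(0)-Q(\widehat C_{\rm proj})=\widehat B_{\theta\lambda}\widehat B_{\lambda\lambda}^{-1}\widehat B_{\theta\lambda}^\top\ge 0 .
\]
This is just the Pythagorean identity for the least-squares projection of the vector $(\widehat\eta_i)_i$ onto the column span of $(\widehat\rho_i^\top)_i$.

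\textbf{Step 3: apply the hypothesis.} Under $\widehat C_{\rm lin}=\widehat C_{\rm proj}$ we get $Q(\widehat C_{\rm lin})=Q(\widehat C_{\rm proj})\le Q(0)$. Multiplying by $\widehat D_{\theta\theta}^{-2}$ gives $\widehat{\operatorname{Var}}_{\rm MEC}\le\widehat{\operatorname{Var}}_{\rm LW}$.

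\textbf{Step 4: strictness.} I will make precise "the calibration contribution explains a nonzero component of the Cox contribution" as $\widehat B_{\theta\lambda}\neq 0$. Equivalently, the vector $(\widehat\eta_i)_i$ is not orthogonal to the span of the calibration columns. Since $\widehat B_{\lambda\lambda}^{-1}$ is positive definite, $\widehat B_{\theta\lambda}\neq0$ makes the quadratic form in Step 2 strictly positive, so the inequality is strict.

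I do not expect any real obstacle, because the hypothesis $\widehat C_{\rm lin}=\widehat C_{\rm proj}$ assumes away the substantive issue. The only points needing care are three. First, $\widehat D_{\theta\theta}$ is scalar and nonzero, so the factors $\widehat D_{\theta\theta}^{-1}$ and $(\widehat D_{\theta\theta}^{-1})^\top$ in the paper's displays combine to $\widehat D_{\theta\theta}^{-2}>0$. Second, the block upper-triangular form of $\widehat D$ in \eqref{eq:D_hat} yields the first component of $-\widehat D^{-1}\widehat\Psi_i$ as $-\widehat D_{\theta\theta}^{-1}(\widehat\eta_i-\widehat C_{\rm lin}\widehat\rho_i)$. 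This follows from the block inverse formula, whose top-right block is $-\widehat D_{\theta\theta}^{-1}\widehat D_{\theta\lambda}\widehat D_{\lambda\lambda}^{-1}$. Third, the strictness condition must be stated precisely, as in Step 4.
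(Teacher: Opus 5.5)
Your proposal is correct and follows essentially the same route as the paper: both reduce the comparison to residual sums of squares under the common positive factor $\widehat D_{\theta\theta}^{-2}$. Both then use the least-squares identity $\sum_i(\widehat\eta_i-\widehat C_{\rm proj}\widehat\rho_i)^2=\sum_i\widehat\eta_i^2-\widehat B_{\theta\lambda}\widehat B_{\lambda\lambda}^{-1}\widehat B_{\theta\lambda}^\top$ together with positive definiteness of $\widehat B_{\lambda\lambda}$, and your reading of strictness as $\widehat B_{\theta\lambda}\neq 0$ is equivalent to the paper's condition $\widehat B_{\theta\lambda}\widehat B_{\lambda\lambda}^{-1}\widehat B_{\theta\lambda}^\top>0$.
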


In \eqref{eq:projection_condition}, \(\widehat C_{\rm proj}\widehat\rho_i\)
represents the part of the empirical Cox contribution explained by the
calibration contribution, while
\(\widehat\eta_i-\widehat C_{\rm proj}\widehat\rho_i\) is the remaining residual
contribution. Theorem~\ref{thm:mec_projection_gain} shows that, when the stacked
linearization correction coincides with this empirical projection, the MEC-Cox
sandwich variance is no larger than the fixed-weight LW robust variance
\citep{lin1989robust,binder1992fitting}. Thus, the potential variance reduction
can be interpreted as calibration-induced residualization of Cox score variation
along the calibration directions.

\section{Simulation Studies}\label{sec:simulation_studies}
\subsection{Simulation setting}\label{subsec:simulation_setting}
\paragraph{Synthetic data.} We conduct Monte Carlo simulation studies to evaluate the finite-sample
performance of MEC-Cox for estimating the marginal ATT log-hazard ratio
\(\theta_{ATT}\) in hypothetical externally controlled single-arm trial settings. The detailed simulation procedure is described in Section~\ref{subsec:Simulation procedure}
of the Appendix. We follow the super-population approach of
\citet{austin2016variance} to compute the true value of the causal estimand \(\theta_{ATT}\); see Step~5 in
Section~\ref{subsec:Simulation procedure}.
Each
simulated dataset consists of a treated trial cohort \((A_i=1)\) and an
external-control cohort \((A_i=0)\). Let
\(X_i=(X_{i1},\ldots,X_{iM})\in\mathbb R^M\) denote the \(M\)-dimensional baseline
covariate vector, generated from a multivariate standard normal distribution.

Source membership is generated according to
\begin{align}
\label{eq:true_ps_model_sim}
\operatorname{logit}\{\pi(X_i)\}
=
-0.2+\ell_\pi(X_i)+\kappa_\pi r_\pi(X_i),    
\end{align}
where
\(\ell_\pi(X_i)
=0.75X_{i1}+0.75X_{i2}+0.65X_{i3}+0.65X_{i4}+0.55X_{i5}\)
is the linear component, and $r_\pi(X_i)
=0.70\sin(1.25X_{i1})+0.45(X_{i2}^2-1)
-0.55\{I(X_{i3}>0)-0.5\}  +0.35X_{i4}X_{i5}
+0.25\{\cos(X_{i1}+X_{i2})-\exp(-1)\}$ is the nonlinear component. This mechanism induces
covariate imbalance between the treated trial and external-control cohorts.

Event times are generated from a Weibull proportional hazards model,
\begin{align}
\label{eq:true_hazard_funciton_sim}
\lambda^a(t\mid X_i)
=
\eta\lambda_0 t^{\eta-1}
\exp\{m_0(X_i)+a\beta\},
\quad
m_0(X_i)=\ell_m(X_i)+\kappa_m \cdot r_m(X_i),
\quad
a=0,1,    
\end{align}
where \(a=0\) denotes control and \(a=1\) denotes treatment. The linear
prognostic component is \(\ell_m(X_i)=X_i^\top b\), with $b=
(\log(1.75),$ $ \log(1.75), $  $ \log(1.60), $ $ \log(1.60),$ $\log(1.50),$ $
\log(1.25),$ $\ldots,$ $\log(1.25),$ $0,\ldots,0)^\top,$ and the nonlinear component is $r_m(X_i)
=
0.45\sin(X_{i2})+0.35(X_{i3}^2-1)
+0.30\{I(X_{i4}>0)-0.5\} 
 +0.25X_{i1}X_{i5}
+0.20\{\cos(X_{i2}+X_{i5})-\exp(-1)\}.$ 

Under the construction \eqref{eq:true_ps_model_sim} and \eqref{eq:true_hazard_funciton_sim}, \(X_{i1},\ldots,X_{i5}\) enter both the
source-selection and outcome models, $X_{i6},$ $\ldots,$ $X_{i10}$ enter only the
outcome model, and \(X_{i11},\ldots,X_{iM}\), when present, are noise variables. In simulation, we use \(\kappa_\pi\ge 0\) in \eqref{eq:true_ps_model_sim} and \(\kappa_m\ge 0\) in \eqref{eq:true_hazard_funciton_sim} to control the degree of
nonlinearity in the source-selection and outcome models, respectively; the
linear setting corresponds to \(\kappa_\pi=\kappa_m=0\).

Across all
scenarios, we set \(\lambda_0=0.00008\), \(\eta=2\), and
\(\beta=\log(0.70)\), corresponding to a constant conditional hazard ratio of
\(0.70\). Independent censoring times are generated as
\(C_i\sim \operatorname{Exp}(0.0008)\), so that \(E(C_i)=1250\). The observed
survival data are \(Y_i=\min(T_i,C_i)\) and \(\delta_i=I(T_i\le C_i)\). The target parameter is the marginal ATT log-hazard ratio \(\theta_{ATT}\), not
the conditional log-hazard ratio \(\beta\). 

\paragraph{Competing methods.}
MEC-Cox is the proposed method and is implemented as described in
Subsection~\ref{subsec:Applying MEC to weighted Cox regression}, with variance
estimation carried out as detailed in Section~\ref{sec:variance_mec_cox}. We use
the KL generator as the default Bregman generator. Both the source
propensity-score estimator and the calibration-basis estimator are constructed
using \(K=10\)-fold cross-fitting. The nuisance setup for MEC-Cox is
scenario-specific and is described below.

We compare MEC-Cox with standard ATT-IPW Cox estimators based on
parametric logistic-regression source-propensity-score weights. The ATT-IPW
comparators share the same point estimator and differ only in variance
estimation: the naive model-based variance, denoted by ``Naive''
\citep{austin2016variance,shu2021variance}; the Lin--Wei/Binder robust
sandwich variance, denoted by ``Robust sandwich''
\citep{lin1989robust,binder1992fitting}; and the Shu corrected sandwich
variance, denoted by ``Corrected sandwich'' \citep{shu2021variance}. These variance estimators are briefly reviewed in Subsection~\ref{subsec:Existing ATT-IPW Cox estimator and variance estimation methods}, and their known finite-sample properties in the literature are summarized in
Subsection~\ref{subsec:Comparison_methods} of the Appendix.

\paragraph{Simulation scenarios and MEC-Cox nuisance setup.} We consider two representative simulation scenarios and specify the nuisance setup used to implement MEC-Cox as follows:
\begin{enumerate}[leftmargin=1.5em, itemsep=0.35em]
    \item[] \textbf{Scenario 1: Linear source-selection and outcome models with varying external-control sample sizes.}
The source propensity-score model and the outcome model are both correctly
specified linear models, corresponding to \(\kappa_\pi=\kappa_m=0\), with
\(M=50\) baseline covariates. We vary the relative size of the external-control
cohort through \(n_1:n_0\in\{1:2,1:3,1:4\}\).

For MEC-Cox, the baseline ATT transport weights are obtained from cross-fitted
logistic regression, and the calibration basis is the cross-fitted landmark
survival basis
\begin{align}
\label{eq:5_dimen_landmark_basis}
\widehat h_i
=
\left(1,\widehat S_0^{(-k)}(t_1\mid X_i),\ldots,
\widehat S_0^{(-k)}(t_5\mid X_i)\right),
\end{align}
where the control-survival probabilities are estimated from a Cox model fit to
external controls \citep{cox1972regression,cox1975partial}. The landmark times
are \(t_\ell=\widehat Q_{0,\delta=1}(\tau_\ell)\), \(\ell=1,\ldots,5\), with
\((\tau_1,\ldots,\tau_5)=(0.10,0.30,0.50,0.70,0.90)\), where
\(\widehat Q_{0,\delta=1}(\tau)\) denotes the empirical \(\tau\)-quantile of
\(Y_i\) among external-control subjects with \(\delta_i=1\).
  
\item[] \textbf{Scenario 2: Source-selection and outcome models with increasing nonlinearity.}
The degrees of nonlinearity in both the source-selection and outcome models are
varied, with \(M=10\) and the treated-to-external-control sample-size ratio
fixed at \(n_1:n_0=1:4\). We consider three settings:
\((\kappa_\pi,\kappa_m)=(0,0)\), representing no nonlinearity;
\((\kappa_\pi,\kappa_m)=(1,2)\), representing mild nonlinearity; and
\((\kappa_\pi,\kappa_m)=(2,5)\), representing severe nonlinearity.

For MEC-Cox, the baseline ATT transport weights are obtained from cross-fitted
BART estimation of the source propensity score \citep{chipman2010bart}, and the
KL generator is used for Bregman calibration. The calibration basis uses the
same cross-fitted landmark-survival form as in
\eqref{eq:5_dimen_landmark_basis}. We consider two MEC-Cox variants, in which
the landmark control-survival probabilities are estimated using either a linear
Cox model or an RSF fitted to the external-control data
\citep{ishwaran2008random}.
\end{enumerate}

The two scenarios target complementary settings. Scenario~1 evaluates
finite-sample efficiency under a correctly specified parametric source
propensity-score model, so differences mainly reflect gains from prognostic
calibration. Scenario~2 considers nonlinear source-selection and outcome models,
where standard ATT-IPW Cox estimators may be biased by logistic propensity-score
misspecification; in this setting, flexible ML-based nuisance estimation in
MEC-Cox can become increasingly beneficial as nonlinearity increases.

\paragraph{Performance metrics.} For each scenario, we use \(R=1000\) replications and summarize performance using empirical coverage of the nominal 95\% Wald confidence interval, Monte Carlo bias, and root mean squared error (RMSE), all on the log-hazard-ratio scale. Further details on the data-generating mechanisms, nuisance-parameter estimation, cross-fitting setting, calibration-basis construction, and additional simulation experiments are provided in the Appendix.

\subsection{Scenario 1: Linear models with varying external-control sample size}
\label{subsec:simulation_linear_models}

Figure~\ref{fig:scenario_1_main_OR_linear_PS_linear} summarizes the results for
Scenario~1, in which all methods are implemented under correctly specified
linear source-selection and outcome models with \(M=50\) baseline covariates.
Panels~(a)--(c), (d)--(f), and (g)--(i) correspond to
\(n_1:n_0=1:2\), \(1:3\), and \(1:4\), respectively. Within each row, the
panels report empirical coverage, Monte Carlo bias, and RMSE. The proposed method is labeled MEC-Cox (KL--GLM(PS)--Cox(OR)).

\begin{figure}[h!]
    \centering
    \includegraphics[width=0.9\linewidth]{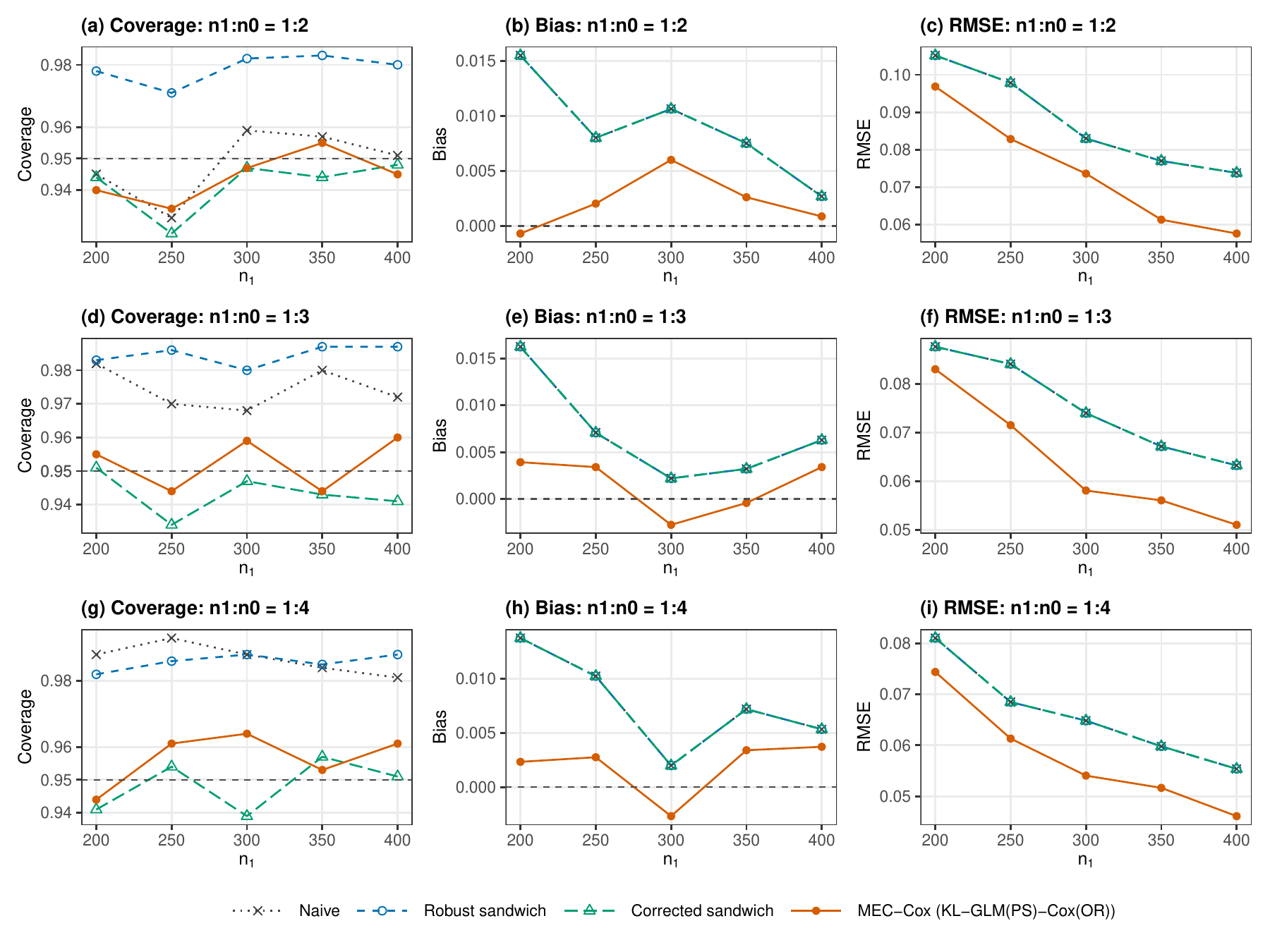}
    \caption{Simulation results for Scenario 1 with \(\kappa_\pi=\kappa_m=0\) and \(M=50\). Panels (a)--(c), (d)--(f), and (g)--(i) correspond to \(n_1:n_0=1:2\), \(1:3\), and \(1:4\), respectively. MEC-Cox uses logistic regression for source propensity-score estimation and the landmark survival calibration basis in \eqref{eq:5_dimen_landmark_basis}, constructed from a linear Cox model fitted to the external-control data.}
    \label{fig:scenario_1_main_OR_linear_PS_linear}
\end{figure}

Across all three external-control sample-size ratios, MEC-Cox achieves smaller
bias and RMSE than the standard ATT-IPW Cox estimators, while maintaining
coverage close to the nominal level. Because the source-selection and outcome
models are correctly specified in this scenario, all estimators are expected to
be consistent. Thus, the main performance difference reflects the efficiency
gain from the additional prognostic calibration step rather than correction of
model misspecification. The improvement of MEC-Cox is especially visible in
terms of RMSE, indicating that balancing the Cox-based landmark survival
calibration basis can improve finite-sample efficiency even when the baseline
logistic source propensity-score model is correctly specified. These results
support the use of MEC-Cox as an efficiency-enhancing refinement of standard
ATT-IPW Cox estimators.

\subsection{Scenario 2: Nonlinear models with increasing nonlinearity}
\label{subsec:simulation_nonlinear_models}

Figure~\ref{fig:scenario_2_main_OR_nonlinear_PS_nonlinear} summarizes the
results for Scenario 2, in which the degrees of nonlinearity in both the
source-selection and outcome models are varied. Panels (a)--(c), (d)--(f), and
(g)--(i) correspond to the no-, mild-, and severe-nonlinearity settings,
respectively, with \(M=10\) and \(n_1:n_0=1:4\). The proposed method is labeled MEC-Cox (KL--BART(PS)--Cox(OR)) or MEC-Cox (KL--BART(PS)--RSF(OR)), which differ only in the outcome-regression learner used to construct the calibration basis.

\begin{figure}[h!]
    \centering
    \includegraphics[width=0.90\linewidth]{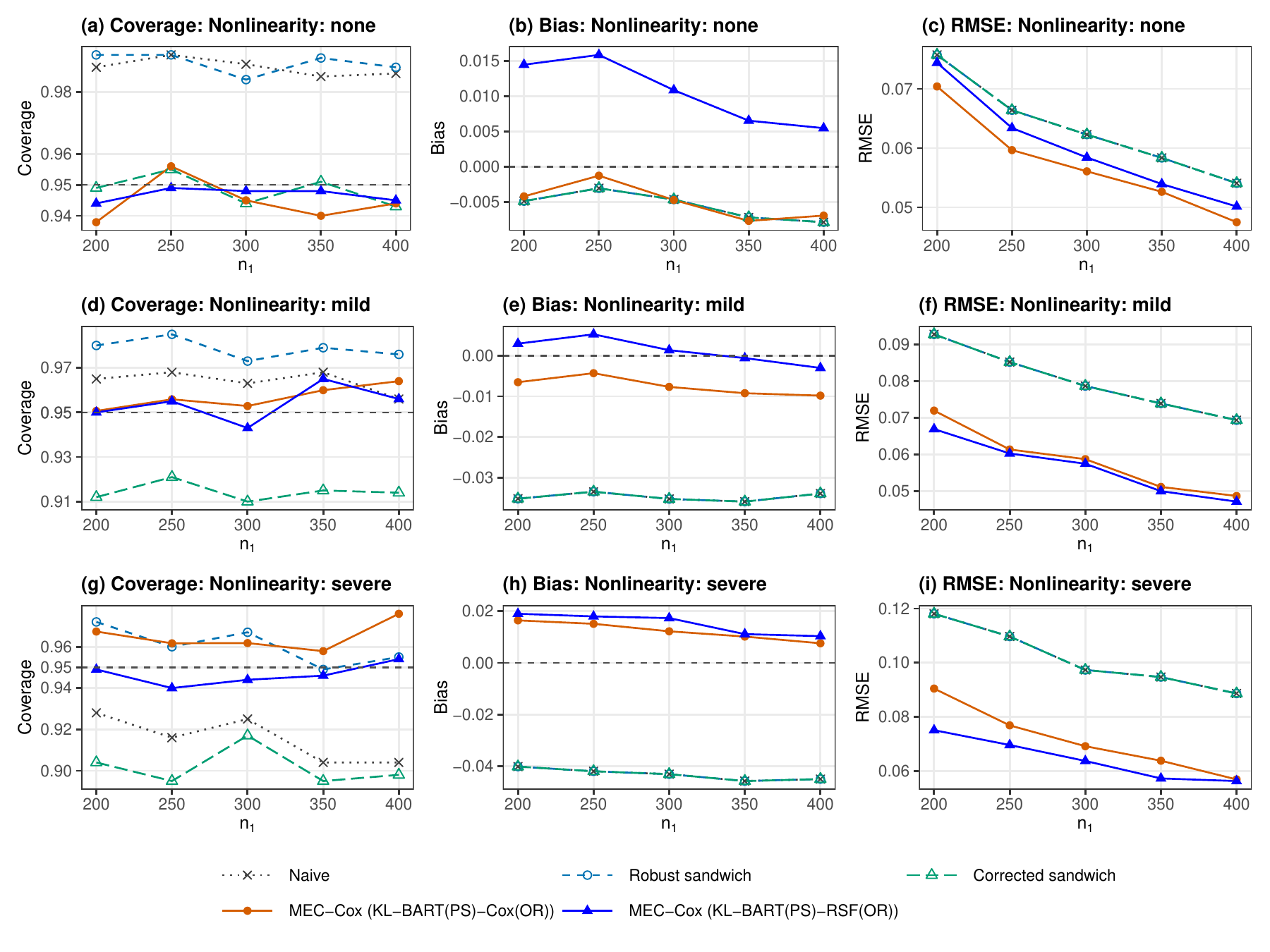}
    \caption{Simulation results for Scenario 2 with \(M=10\) and \(n_1:n_0=1:4\). Panels (a)--(c), (d)--(f), and (g)--(i) correspond to the no-, mild-, and severe-nonlinearity settings, respectively, with \((\kappa_\pi,\kappa_m)=(0,0)\), \((\kappa_\pi,\kappa_m)=(1,2)\), and \((\kappa_\pi,\kappa_m)=(2,5)\). MEC-Cox uses BART for source propensity-score estimation and either linear-Cox- or RSF-based landmark survival calibration bases.}
    \label{fig:scenario_2_main_OR_nonlinear_PS_nonlinear}    
\end{figure}
Across all three settings, MEC-Cox achieves smaller bias and RMSE than the
standard ATT-IPW Cox estimators. When the true prognostic model is linear
\((\kappa_m=0)\), MEC-Cox with the linear-Cox-based landmark survival
calibration basis performs best, as expected (Panel~(c)). In this setting, the
linear Cox model correctly captures the main prognostic structure, whereas RSF
introduces unnecessary flexibility and may be less efficient in finite samples.
However, as the degree of nonlinearity increases (Panels~(f) and~(i), with
\(\kappa_m=2\) and \(\kappa_m=5\), respectively), MEC-Cox with the RSF-based
landmark survival calibration basis performs best. This is also expected,
because the flexibility of RSF allows the calibration basis to capture nonlinear
prognostic information that the linear-Cox-based calibration basis may miss. In
contrast, the standard ATT-IPW Cox estimators do not use the additional
prognostic calibration step and rely on logistic-regression source propensity
score estimation, which is misspecified under nonlinear source-selection
mechanisms. Consequently, these estimators exhibit larger bias and larger
finite-sample error under nonlinear settings. These results suggest that MEC-Cox can effectively leverage ML-based prognostic
summaries to improve finite-sample performance when linear nuisance
specifications for source selection and prognosis are inadequate.

This experiment also supports the theoretical implication of
Theorem~\ref{thm:Conditions_for_consistency_MEC_Cox}. In the nonlinear settings
shown in the second and third rows of
Figure~\ref{fig:scenario_2_main_OR_nonlinear_PS_nonlinear}, the
linear-Cox-based landmark survival calibration basis is misspecified as a
prognostic model. However, the theorem does not require correct specification of
the calibration basis for consistency of MEC-Cox. Rather, for the calibration
basis, the key consistency requirement is \(L_2\)-stochastic boundedness, which
is a mild condition. Therefore, even when the linear-Cox-based landmark survival
calibration basis is \emph{not} correctly specified, MEC-Cox remains a valid
estimator of the ATT marginal log-hazard ratio \(\theta_{ATT}\), provided that
the source propensity score is consistently estimated and the calibration
regularity conditions hold. Thus, the role of the calibration basis is primarily related to efficiency rather than validity.

In practical applications, because the degree of nonlinearity in the
source-selection and prognostic mechanisms is unknown, it is often prudent to
use flexible ML methods, such as BART \citep{chipman2010bart} or DL \citep{lecun2015deep}, for source propensity-score estimation. The choice of calibration basis can then be guided by the available external-control sample size \(n_0\) and the number of baseline covariates
\(M\), since flexible survival learners require sufficient information to train
stable prognostic summaries and to avoid overfitting in high-dimensional
settings. Flexible survival learners such as RSF may be preferable when the
external-control sample size $n_0$ is sufficiently large relative to \(M\), whereas a
simpler Cox-based basis may be more stable for small to moderate
external-control sample sizes or when \(M\) is too large to train flexible
survival learners reliably. In the latter case, particularly when \(M\) is large
(for example, \(M=200\)) and a sparse linear prognostic structure is expected,
we recommend using a lasso-Cox linear-predictor calibration basis rather than a
landmark survival basis; see the additional simulation study in the Appendix.



\section{Real-data Application}
\label{sec:real_data}
We illustrate MEC-Cox using two publicly available breast-cancer datasets from the \texttt{survival} package in \texttt{R}. These datasets have been widely used as benchmark datasets in survival-analysis research \citep{kvamme2019time,chen2020deep}. The treated target set \(\mathcal I_1=\{i:A_i=1\}\) consisted of German Breast Cancer Study Group (GBSG) patients who received hormonal therapy, whereas the external-control set \(\mathcal I_0=\{i:A_i=0\}\) consisted of Rotterdam tumor-bank patients who did not receive hormonal therapy. This restriction gives a clinically interpretable comparison between hormonal therapy and no hormonal therapy in the GBSG hormonal-therapy target population. The final analysis cohort contained \(n=2889\) patients, with \(n_1=|\mathcal I_1|=246\) and \(n_0=|\mathcal I_0|=2643\). The endpoint was recurrence-free survival (RFS), administratively censored at 5 years, where \(Y_i\) and \(\delta_i\) denote the observed follow-up time and RFS event indicator, respectively.

The covariate vector \(X_i\in\mathbb R^M\) was constructed from seven baseline variables: age, menopausal status, tumor size, tumor grade, number of positive lymph nodes, progesterone receptor (PGR) level, and estrogen receptor (ER) level. Tumor size was categorized as \(\leq 20\) mm, 20--50 mm, or \(>50\) mm, and nodes, PGR, and ER were transformed using \(\log(1+x)\). Categorical variables were encoded as indicator variables in the fitted models; under this representation, \(M=11\). Only covariates without missing values in the combined analysis cohort were used. The primary fitted models used main-effect terms only; selected clinically motivated nonlinear and interaction terms, particularly involving tumor grade and lymph-node burden, were considered in sensitivity analyses. The target causal estimand was the ATT marginal hazard ratio \(HR_{ATT}=\exp(\theta_{ATT})\), comparing hormonal therapy versus no hormonal therapy in the GBSG hormonal-therapy target population.

We compare the unweighted Cox analysis, the standard ATT-IPW Cox estimator, and
MEC-Cox. For the ATT-IPW Cox estimator, we report the same weighted Cox point
estimate with the three variance estimators used in the simulation studies in
Section~\ref{sec:simulation_studies}. For MEC-Cox, we consider two
implementations based on the KL generator and the \(K=10\) cross-fitted
landmark-survival basis in \eqref{eq:5_dimen_landmark_basis}, evaluated at 20
landmark times; thus, including the intercept, the calibration-basis dimension
is \(p=21\). One implementation uses logistic regression for the source propensity score and
Cox regression for the prognostic basis, whereas the other uses DL for the
source propensity score and RSF for the prognostic basis. We denote these two
implementations by MEC-Cox (GLM(PS)--Cox(OR)) and MEC-Cox
(DL(PS)--RSF(OR)), respectively.

Tables~\ref{tab:realdata_hr_results}
and~\ref{tab:realdata_balance_weight_diagnostics} summarize the analysis
results. First, the unweighted Cox analysis yielded an estimated hazard ratio of
0.963, with a 95\% confidence interval including 1. This unadjusted comparison
would therefore suggest no clear difference between the GBSG hormonal-therapy
group and the Rotterdam external-control group. However, this comparison is not
interpretable as an ATT causal estimate because the two cohorts exhibited
substantial baseline imbalance, as shown in
Table~\ref{tab:realdata_balance_weight_diagnostics}. The mean and maximum
absolute standardized mean differences (SMDs) were 0.303 and 1.158,
respectively, indicating strong raw covariate imbalance. This motivates the use of ATT weighting to transport the external-control cohort toward the GBSG hormonal-therapy target population before estimating the marginal hazard ratio $\theta_{ATT}$. 

\begin{table}[h!]
\centering
\caption{Marginal hazard-ratio estimates in the real-data illustration.}
\label{tab:realdata_hr_results}
\small
\setlength{\tabcolsep}{3.5pt}
\renewcommand{\arraystretch}{1.12}

\begin{tabular*}{\textwidth}{@{\extracolsep{\fill}}lrrrr@{}}
\toprule
\textsc{Method}
&
\(\widehat\theta\)
&
\(\widehat{\mathrm{se}}(\widehat\theta)\)
&
\(\widehat{HR}\)
&
95\% CI of HR
\\
\midrule
Unweighted Cox
&
\(-0.037\)
&
0.112
&
0.963
&
\((0.773,\ 1.200)\)
\\

Naive
&
\(-0.564\)
&
0.136
&
0.569
&
\((0.436,\ 0.743)\)
\\

Robust sandwich
&
\(-0.564\)
&
0.139
&
0.569
&
\((0.433,\ 0.747)\)
\\

Corrected sandwich
&
\(-0.564\)
&
0.130
&
0.569
&
\((0.441,\ 0.734)\)
\\

MEC-Cox (GLM(PS)-Cox(OR))
&
\(-0.498\)
&
0.124
&
0.608
&
\((0.476,\ 0.775)\)
\\

MEC-Cox (DL(PS)-RSF(OR))
&
\(-0.438\)
&
0.117
&
0.646
&
\((0.513,\ 0.813)\)
\\

\bottomrule
\end{tabular*}

\vspace{0.25em}
\begin{minipage}{\textwidth}
\footnotesize
\textit{Note.}
Notation \(\widehat\theta\) denotes the estimated log-hazard ratio and
\(\widehat{\mathrm{HR}}=\exp(\widehat\theta)\). The unweighted Cox analysis is included
as an unadjusted descriptive comparison and is not interpreted as an estimator of the
ATT marginal hazard ratio. The Naive, Robust sandwich, Corrected sandwich, and
MEC-Cox rows are weighted Cox analyses targeting the ATT marginal hazard ratio in the
GBSG hormonal-therapy target population. The Naive, Robust sandwich, and Corrected
sandwich rows share the same ATT-IPW Cox point estimator and differ only in variance
estimation. The MEC-Cox rows use the KL generator.
\end{minipage}
\end{table}

\begin{table}[h!]
\centering
\caption{Covariate balance and external-control weight diagnostics.}
\label{tab:realdata_balance_weight_diagnostics}
\scriptsize
\setlength{\tabcolsep}{2.5pt}
\renewcommand{\arraystretch}{1.12}

\begin{tabular*}{\textwidth}{@{\extracolsep{\fill}}llrrrr@{}}
\toprule
\textsc{Covariate}
&
\textsc{Type}
&
Unweighted
&
ATT-IPW
&
\begin{tabular}{@{}c@{}}
MEC-Cox \\
(GLM(PS)-Cox(OR))
\end{tabular}
&
\begin{tabular}{@{}c@{}}
MEC-Cox \\
(DL(PS)-RSF(OR))
\end{tabular}
\\
\midrule
age
&
C
&
0.268
&
\(-0.032\)
&
\(-0.017\)
&
\(-0.035\)
\\

meno: postmenopausal
&
B
&
0.241
&
0.008
&
0.028
&
0.053
\\

size: \(\leq 20\) mm
&
B
&
\(-0.213\)
&
0.042
&
0.027
&
\(-0.005\)
\\

size: 20--50 mm
&
B
&
0.247
&
\(-0.045\)
&
\(-0.028\)
&
0.006
\\

size: \(>50\) mm
&
B
&
\(-0.034\)
&
0.003
&
0.002
&
\(-0.001\)
\\

grade 1
&
B
&
0.134
&
0.134
&
0.134
&
0.134
\\

grade 2
&
B
&
0.385
&
\(-0.105\)
&
\(-0.104\)
&
\(-0.082\)
\\

grade 3
&
B
&
\(-0.519\)
&
\(-0.029\)
&
\(-0.030\)
&
\(-0.052\)
\\

\(\log(1+\mathrm{nodes})\)
&
C
&
1.158
&
\(-0.270\)
&
\(-0.021\)
&
0.148
\\

\(\log(1+\mathrm{PGR})\)
&
C
&
\(-0.048\)
&
0.046
&
0.003
&
\(-0.003\)
\\

\(\log(1+\mathrm{ER})\)
&
C
&
\(-0.085\)
&
0.107
&
0.047
&
\(-0.053\)
\\

\midrule
Mean \(|\mathrm{SMD}|\)
&
--
&
0.303
&
0.075
&
0.040
&
0.052
\\

Max \(|\mathrm{SMD}|\)
&
--
&
1.158
&
0.270
&
0.134
&
0.148
\\

No. \(|\mathrm{SMD}|>0.10\)
&
--
&
8
&
4
&
2
&
2
\\

External-control ESS
&
--
&
--
&
169.476
&
262.077
&
354.709
\\

External-control weight CV
&
--
&
--
&
3.821
&
3.015
&
2.540
\\

\bottomrule
\end{tabular*}

\vspace{0.25em}
\begin{minipage}{\textwidth}
\footnotesize
\emph{Note.} Type C denotes a continuous covariate and type B denotes a binary
covariate. For continuous covariates, SMD denotes the standardized mean
difference using the treated-cohort standard deviation as the denominator; for
binary covariates, it denotes the corresponding difference in proportions. The
summary rows report absolute balance statistics. ESS and weight CV are computed among external-control patients only, using the
corresponding method-specific external-control weights. Specifically,
\(\mathrm{ESS}=(\sum_{i\in \mathcal{I}_0} a_i)^2/\sum_{i\in \mathcal{I}_0} a_i^2\) and
\(\mathrm{CV}=\operatorname{sd}(a_i:i\in \mathcal{I}_0)/\bar a_{\mathcal{I}_0}\), where \(a_i\)
denotes the method-specific external-control weight. Smaller absolute balance statistics
indicate better raw-covariate balance. Larger ESS and smaller weight CV indicate
more stable external-control weights. The unweighted analysis has no
external-control weighting diagnostics.
\end{minipage}
\end{table}

After ATT weighting, the standard ATT-IPW Cox estimator yielded an estimated
hazard ratio of \(\widehat{HR}_{ATT}=0.569\). The naive,
robust sandwich, and corrected sandwich rows share the same point estimator,
\(\widehat{\theta}_{IPW}=-0.564\), and differ only in variance
estimation. Because the upper bound of the 95\% confidence interval is below 1, this
ATT-weighted comparison would therefore suggest a lower marginal hazard in the
GBSG hormonal-therapy target population than in its transported external-control
counterpart. Although ATT-IPW substantially improved raw covariate balance relative to the
unweighted analysis, some imbalance remained, with a maximum absolute SMD of
0.270. The ATT-IPW external-control weights were also relatively variable, with
an external-control effective sample size (ESS) of 169.476 and a weight
coefficient of variation (CV) of 3.821.

The MEC-Cox implementations produced estimates in the same qualitative direction
as the ATT-IPW estimator, while improving weight stability, reducing the
estimated standard errors, and improving covariate balance. MEC-Cox
(GLM(PS)--Cox(OR)) yielded an estimated hazard ratio of
\(\widehat{HR}_{MEC}=0.608\), with a standard error of
\(\widehat{\rm se}(\widehat{\theta}_{MEC})=0.124\). MEC-Cox
(DL(PS)--RSF(OR)) yielded an estimated hazard ratio of
\(\widehat{HR}_{MEC}=0.646\), with a standard error of
\(\widehat{\rm se}(\widehat{\theta}_{MEC})=0.117\). Compared with ATT-IPW, both
MEC-Cox implementations improved external-control weight stability: the
external-control ESS increased from 169.476 under ATT-IPW to 262.077 and
354.709 under the two MEC-Cox implementations, while the weight CV decreased
from 3.821 to 3.015 and 2.540, respectively. The covariate-balance diagnostics
were also favorable, with mean absolute SMDs of 0.040 and 0.052 and maximum
absolute SMDs of 0.134 and 0.148.

Overall, this real-data example illustrates MEC-Cox as a practical
prognostic-calibration refinement of standard ATT-IPW Cox regression. MEC-Cox
preserves the ATT weighting target and yields the same qualitative conclusion as
standard ATT-IPW, while producing more stable external-control weights and
smaller estimated standard errors.


\section{Discussion}\label{sec:Discussion}
We extended the MEC framework of \citet{lee2026mec} to IPW-type weighted Cox regression and proposed MEC-Cox for ATT marginal hazard-ratio estimation in externally controlled survival studies. The proposed method addresses an important gap in the literature on marginal hazard-ratio estimation \citep{hernan2001marginal}. Existing robust, survey-based, corrected, and linearization-based variance estimators account for ATT weighting and, in some cases, propensity-score estimation uncertainty \citep{lin1989robust,binder1992fitting,shu2021variance,hajage2018closed,shu2021estimating}. However, these methods were mainly developed under parametric propensity-score models and can therefore be sensitive to model misspecification, potentially leading to distorted inference. They also do not directly use prognostic summaries for calibration, although such summaries can improve efficiency when properly constructed, as illustrated in Section~\ref{sec:simulation_studies}. EIF-based methods, including TMLE \citep{van2006targeted,van2011targeted} and DML \citep{chernozhukov2018double}, provide a principled way to incorporate flexible ML nuisance estimation, but their direct application to IPW-type weighted Cox regression is not straightforward because the Cox score depends nonlinearly on the weights through risk-set averages. The MEC framework of \citet{lee2026mec} helps bridge these directions by preserving the weighted Cox estimating equation while incorporating cross-fitted ML prognostic summaries through calibration. Relatedly, weight calibration has been used to improve efficiency for pure-risk estimation under additive hazards models in nested case-control designs \citep{shin2022weight}; our use differs by targeting prognostic balance for ATT-weighted Cox regression in externally controlled comparisons.

The present work also suggests several directions for future research. The first two are theoretical. First, whether MEC-Cox attains the semiparametric efficiency bound \citep{kennedy2016semiparametric,kennedy2024semiparametric} for the ATT marginal hazard-ratio estimand is beyond the scope of this paper. For mean-type estimands, MEC can have stronger theoretical guarantees; for example, in semi-supervised mean estimation, \citet{lee2026mec} showed that MEC can attain the semiparametric efficiency bound under weaker projection-error conditions. Although an analogous property is not guaranteed for IPW Cox regression, Theorem~\ref{thm:mec_projection_gain} identifies a projection condition under which calibration can yield efficiency gains relative to the fixed-weight LW robust variance estimator. Second, the connection between MEC-Cox and doubly robust estimation \citep{bang2005doubly,luo2025doubly} remains an open question. For continuous-outcome ATE estimation, MEC can yield an IPW representation that is algebraically equivalent to a doubly robust, Neyman-orthogonal estimating equation \citep{chernozhukov2018double,mackey2018orthogonal,foster2023orthogonal}; see the Supplementary Material of \citet{lee2026mec}. For weighted Cox regression, however, such an equivalence is not trivial because of the analytical complexity of the weighted Cox score.

The latter two issues are more practical. Third, the current formulation assumes source-specific independent censoring. More realistic survival data may involve informative censoring that differs between the treated trial and external-control cohorts. Addressing such settings would require incorporating inverse probability of censoring weights \citep{cain2009inverse,robins2000correcting} into the estimating-equation weights. Finally, MEC-updated weights play a \emph{dual role}: they act as source-transport weights and as prognostic-score balancing weights, and a trade-off may arise between these two roles. Because this concept is relatively new in balancing weights for causal inference, practical implementation requires diagnostic and sensitivity-analysis tools tailored to this setting. Such tools should assess not only covariate balance, for example through SMD, but also weight stability, for example through the ESS, to support routine use.


\acks{J.K.K. was supported by the U.S. National Science Foundation under Grant No. 2242820.}


\appendix
\section{Proofs of the Main Theorems and Technical Details}
\label{sec:proof-main}
\subsection{Proof of Theorem \ref{thm:identification_weighted_cox}}\label{subsec:Proof of Theorem 1}
\paragraph{Notations.} Let \(p_1=\Pr(A_i=1)\), and write \(\overset{p}{\to}\) for convergence in probability. For \(a=0,1\), define the potential observed time, event indicator, counting process, and at-risk process by
\[
Y_i^a=\min(T_i^a,C_i), 
\qquad 
\delta_i^a=I(T_i^a\le C_i),
\]
\[
\mathcal{N}_i^a(t)=I(Y_i^a\le t,\delta_i^a=1),
\qquad
\mathcal{Y}_i^a(t)=I(Y_i^a\ge t).
\]
By consistency (A1),
\[
A_i=1 \implies \{\mathcal{N}_i(t),\mathcal{Y}_i(t)\}=\{\mathcal{N}_i^1(t),\mathcal{Y}_i^1(t)\},
\qquad
A_i=0 \implies \{\mathcal{N}_i(t),\mathcal{Y}_i(t)\}=\{\mathcal{N}_i^0(t),\mathcal{Y}_i^0(t)\}.
\]

For the trial target population \(A_i=1\), define the marginal at-risk function under potential treatment condition \(a\) by
\begin{align}
\label{eq:marginal_at_risk_function}
y_1^a(t)=\mathbb{E}\{\mathcal{Y}_i^a(t)\mid A_i=1\},
\qquad a=0,1.    
\end{align}

That is, \(y_1^a(t)\) is the probability that a randomly selected patient from the treated trial target population would remain at risk at time \(t\) under regime \(a\).

The proof below mainly uses the marginal mean behavior of the potential
counting processes in the trial target population. We begin by recalling the censoring assumptions that supplement the causal assumptions A1--A3 in the identification argument.


\paragraph{Censoring assumptions.}
Recall that, in the observed-data structure in the main paper, we assumed
factual source-specific random censoring:
\begin{align}
\nonumber
C_i \perp (T_i,X_i)\mid A_i.
\end{align}
For the identification argument below, we use the corresponding
potential-outcome version. Specifically, we assume
\begin{align}
\label{eq:censoring_assumption_identification_independence}
C_i \perp (T_i^0,T_i^1,X_i)\mid A_i .
\end{align}
This condition implies, in particular, that for each \(a=0,1\),
\[
C_i \perp T_i^a\mid A_i,
\qquad
C_i \perp X_i\mid A_i.
\]
Moreover, by the weak-union property of conditional independence,
\eqref{eq:censoring_assumption_identification_independence} also implies
\[
C_i \perp T_i^a \mid X_i,A_i,
\qquad a=0,1.
\]
Thus, within each source population, censoring is independent of the potential
event times, both marginally and conditionally on baseline covariates.

Because \(C_i\perp X_i\mid A_i\), the source-specific censoring survival
functions depend on the source indicator but not on baseline covariates. Define
\begin{align}
\nonumber
G_a(t)=\Pr(C_i\ge t\mid A_i=a)=\Pr(C_i\ge t\mid X_i, A_i=a),\qquad a=0,1,
\end{align}
and define the source-specific censoring ratio
\begin{align}
\nonumber
\gamma_C(t)=\frac{G_0(t)}{G_1(t)}.
\end{align}
Here, \(\cdot\perp\cdot\mid\cdot\) denotes conditional independence. We allow
\(G_0(t)\) and \(G_1(t)\) to differ; that is, the censoring distributions may
be source-specific.

\paragraph{Key identities.}
We now derive the key counting-process identities used in the proof. For
\(a=0,1\), the source-specific random censoring assumption gives
\begin{align}
\nonumber
\mathbb{E}\{d\mathcal{N}_i^a(t)\mid A_i=1\}
&=
\Pr\{\mathcal{N}_i^a(t+dt)-\mathcal{N}_i^a(t)=1\mid A_i=1\}  \\
\nonumber
&=
\Pr\{Y_i^a\in[t,t+dt),\delta_i^a=1\mid A_i=1\}  \\
\nonumber
&=
\Pr\{t\le T_i^a<t+dt,\ C_i\ge T_i^a\mid A_i=1\}  \\
\nonumber
&=
\Pr\{t\le T_i^a<t+dt,\ C_i\ge t\mid A_i=1\}
  +o(dt) \\
 \nonumber 
&=
\Pr(C_i\ge t\mid A_i=1)
  \Pr\{t\le T_i^a<t+dt\mid C_i\ge t, A_i=1\}
  +o(dt) \\
&=
\Pr(C_i\ge t\mid A_i=1)
  \Pr\{t\le T_i^a<t+dt\mid A_i=1\}
  +o(dt) \tag{\(\star_1\)} \\
&=
\Pr(C_i\ge t\mid A_i=1)
  \Pr(T_i^a\ge t\mid A_i=1)
  \lambda_1^a(t)\,dt
  +o(dt) \tag{\(\ast\)} \\
&=
\Pr(C_i\ge t\mid A_i=1)
  \Pr(T_i^a\ge t\mid C_i\ge t, A_i=1)
  \lambda_1^a(t)\,dt
  +o(dt) \tag{\(\star_2\)} \\
  \nonumber
&=
\Pr(C_i\ge t,\ T_i^a\ge t\mid A_i=1)
  \lambda_1^a(t)\,dt
  +o(dt) \\
  \label{eq:last_eq}
&=
\mathbb{E}\{\mathcal{Y}_i^a(t)\mid A_i=1\}\lambda_1^a(t)\,dt
  +o(dt).
\end{align}
Here, \((\star_1)\) and \((\star_2)\) use
\eqref{eq:censoring_assumption_identification_independence}. Both steps follow
because \eqref{eq:censoring_assumption_identification_independence} implies
\(C_i\perp T_i^a\mid A_i=1\). The step \((\ast)\) uses the definition of
the marginal hazard function in the trial target population:
\[
\Pr\{t\le T_i^a<t+dt\mid A_i=1\}
=
\Pr(T_i^a\ge t\mid A_i=1)\lambda_1^a(t)\,dt+o(dt).
\]
By the definition of \(y_1^a(t)\) in
\eqref{eq:marginal_at_risk_function}, \eqref{eq:last_eq} gives
\begin{align}
\label{eq:exp_d_N_i_a}
\mathbb{E}\{d\mathcal{N}_i^a(t)\mid A_i=1\}
=
y_1^a(t)\lambda_1^a(t)\,dt.    
\end{align}

By one-sided positivity (A3), \(1-\pi(X_i)>0\) on the support of the trial
target population, so the ATT odds weight $q(X_i)=\pi(X_i)/(1-\pi(X_i))$ is well defined. For any integrable function
\(f(X_i)\), we have
\begin{align}
\mathbb{E}\{A_i f(X_i)\}
&=
\mathbb{E}\!\left[
\mathbb{E}\{A_i f(X_i)\mid X_i\}
\right] \notag\\
&=
\mathbb{E}\{f(X_i)\Pr(A_i=1\mid X_i)\} \notag\\
&=
\mathbb{E}\{\pi(X_i)f(X_i)\} \notag\\
&=
\mathbb{E}\!\left[
\frac{\pi(X_i)}{1-\pi(X_i)}
\{1-\pi(X_i)\}f(X_i)
\right] \notag\\
&=
\mathbb{E}\!\left[
q(X_i)\Pr(A_i=0\mid X_i)f(X_i)
\right] \notag\\
&=
\mathbb{E}\!\left[
q(X_i)f(X_i)\mathbb{E}\{1-A_i\mid X_i\}
\right] \notag\\
&=
\mathbb{E}\!\left[
\mathbb{E}\{(1-A_i)q(X_i)f(X_i)\mid X_i\}
\right] \notag\\
&=
\mathbb{E}\{(1-A_i)q(X_i)f(X_i)\}.
\label{eq:core_equation}
\end{align}
Thus, \eqref{eq:core_equation} implies that the ATT odds weight \(q(X_i)\) reweights the external-control covariate
distribution to the treated trial target covariate distribution.

The same weighting argument extends to the control potential-outcome processes.
By consistency (A1), external-control patients satisfy
\[
(1-A_i)\mathcal{Y}_i(t)=(1-A_i)\mathcal{Y}_i^0(t),
\qquad
(1-A_i)d\mathcal{N}_i(t)=(1-A_i)d\mathcal{N}_i^0(t).
\]
Define the conditional control-risk and control-event means in the trial target
population by
\[
m_\mathcal{Y}^0(t,X_i)=\mathbb{E}\{\mathcal{Y}_i^0(t)\mid X_i,A_i=1\},
\qquad
m_\mathcal{N}^0(t,X_i)=\mathbb{E}\{d\mathcal{N}_i^0(t)\mid X_i,A_i=1\}.
\]

By A2 (survival transportability) and the source-specific random censoring
condition \eqref{eq:censoring_assumption_identification_independence}, we have
\begin{align}
\mathbb{E}\{\mathcal{Y}_i^0(t)\mid X_i,A_i=0\}
&=
\Pr(T_i^0\ge t,\ C_i\ge t\mid X_i,A_i=0) \notag\\
&=
\Pr(C_i\ge t\mid X_i,A_i=0)
\Pr(T_i^0\ge t\mid C_i\ge t, X_i,A_i=0) \notag\\
&=
\Pr(C_i\ge t\mid X_i,A_i=0)
\Pr(T_i^0\ge t\mid X_i,A_i=0) \tag{\(\star_1\)}\\
&=
G_0(t)\Pr(T_i^0\ge t\mid X_i,A_i=0) \notag\\
&=
G_0(t)\Pr(T_i^0\ge t\mid X_i,A_i=1) \tag{\(\star_2\)}\\
&=
\frac{G_0(t)}{G_1(t)}
G_1(t)\Pr(T_i^0\ge t\mid X_i,A_i=1) \notag\\
&=
\gamma_C(t)
\Pr(C_i\ge t,\ T_i^0\ge t\mid X_i,A_i=1) \tag{\(\star_3\)}\\
&=
\gamma_C(t)\mathbb{E}\{\mathcal{Y}_i^0(t)\mid X_i,A_i=1\} \notag\\
\nonumber
&=
\gamma_C(t)m_\mathcal{Y}^0(t,X_i).
\end{align}
Here, \((\star_1)\) and \((\star_3)\) use source-specific independent
censoring \eqref{eq:censoring_assumption_identification_independence}. The
step \((\star_2)\) uses A2 for the event-time part:
\[
\Pr(T_i^0\ge t\mid X_i,A_i=0)
=
\Pr(T_i^0\ge t\mid X_i,A_i=1).
\]
The factor \(\gamma_C(t)\) appears because the censoring distributions are
allowed to differ across the two sources.

Similarly,
\begin{align}
\mathbb{E}\{d\mathcal{N}_i^0(t)\mid X_i,A_i=0\}
&=
\Pr\{\mathcal{N}_i^0(t+dt)-\mathcal{N}_i^0(t)=1\mid X_i,A_i=0\} \notag\\
&=
\Pr\{\mathcal{Y}_i^0\in[t,t+dt),\delta_i^0=1\mid X_i,A_i=0\} \notag\\
&=
\Pr\{t\le T_i^0<t+dt,\ C_i\ge T_i^0\mid X_i,A_i=0\} \notag\\
&=
\Pr\{t\le T_i^0<t+dt,\ C_i\ge t\mid X_i,A_i=0\}
+o(dt) \notag\\
&=
\Pr(C_i\ge t\mid X_i,A_i=0)
\Pr\{t\le T_i^0<t+dt\mid X_i,A_i=0\}
+o(dt) \tag{\(\star_1\)}\\
&=
G_0(t)
\Pr\{t\le T_i^0<t+dt\mid X_i,A_i=1\}
+o(dt) \tag{\(\star_2\)}\\
&=
\frac{G_0(t)}{G_1(t)}
G_1(t)
\Pr\{t\le T_i^0<t+dt\mid X_i,A_i=1\}
+o(dt) \notag\\
&=
\gamma_C(t)
\Pr\{t\le T_i^0<t+dt,\ C_i\ge t\mid X_i,A_i=1\}
+o(dt) \tag{\(\star_3\)}\\
&=
\gamma_C(t)
\Pr\{t\le T_i^0<t+dt,\ C_i\ge T_i^0\mid X_i,A_i=1\}
+o(dt) \notag\\
&=
\gamma_C(t)\mathbb{E}\{d\mathcal{N}_i^0(t)\mid X_i,A_i=1\}
+o(dt) \notag\\
\nonumber
&=
\gamma_C(t)m_\mathcal{N}^0(t,X_i)+o(dt).
\end{align}
Here, \((\star_1)\) and \((\star_3)\) use source-specific independent
censoring \eqref{eq:censoring_assumption_identification_independence}. The
step \((\star_2)\) uses A2 for the latent control event-time increment:
\[
\Pr\{t\le T_i^0<t+dt\mid X_i,A_i=0\}
=
\Pr\{t\le T_i^0<t+dt\mid X_i,A_i=1\}.
\]
Therefore, in the usual counting-process differential notation, we write
\[
\mathbb{E}\{d\mathcal{N}_i^0(t)\mid X_i,A_i=0\}
=
\gamma_C(t)m_\mathcal{N}^0(t,X_i).
\]

Therefore,
\begin{align}
\nonumber
\mathbb{E}\{(1-A_i)q(X_i)\mathcal{Y}_i(t)\}
&=
\mathbb{E}\{(1-A_i)q(X_i)\mathcal{Y}_i^0(t)\} \\
\nonumber
&=
\mathbb{E}\{(1-A_i)q(X_i)\gamma_C(t)m_\mathcal{Y}^0(t,X_i)\} \\
\nonumber
&=
\gamma_C(t)\mathbb{E}\{(1-A_i)q(X_i)m_\mathcal{Y}^0(t,X_i)\} \\
\nonumber
&=
\gamma_C(t)\mathbb{E}\{A_i m_\mathcal{Y}^0(t,X_i)\} \qquad\qquad\qquad\qquad\qquad \text{by } \eqref{eq:core_equation} \\
\nonumber
&=
\gamma_C(t)\mathbb{E}\!\left[
A_i\mathbb{E}\{\mathcal{Y}_i^0(t)\mid X_i,A_i=1\}
\right] \\
\nonumber
&=
\gamma_C(t)\mathbb{E}\!\left[
\mathbb{E}\{A_i\mathcal{Y}_i^0(t)\mid X_i,A_i\}
\right] \\
\nonumber
&=
\gamma_C(t)\mathbb{E}\{A_i\mathcal{Y}_i^0(t)\} \\
\nonumber
&=
\gamma_C(t)\mathbb{E}\!\left[
\mathbb{E}\{A_i\mathcal{Y}_i^0(t)\mid A_i\}
\right] \\
\nonumber
&=
p_1\gamma_C(t)\mathbb{E}\{\mathcal{Y}_i^0(t)\mid A_i=1\} \\
\label{eq:weighted_control_risk_identity}
&=
p_1\gamma_C(t)y_1^0(t).
\end{align}

Similarly,
\begin{align}
\nonumber
\mathbb{E}\{(1-A_i)q(X_i)d\mathcal{N}_i(t)\}
&=
\mathbb{E}\{(1-A_i)q(X_i)d\mathcal{N}_i^0(t)\} \\
\nonumber
&=
\mathbb{E}\{(1-A_i)q(X_i)\gamma_C(t)m_\mathcal{N}^0(t,X_i)\} \\
\nonumber
&=
\gamma_C(t)\mathbb{E}\{(1-A_i)q(X_i)m_\mathcal{N}^0(t,X_i)\} \\
\nonumber
&=
\gamma_C(t)\mathbb{E}\{A_i m_\mathcal{N}^0(t,X_i)\} \qquad\qquad\qquad\qquad\qquad \text{by } \eqref{eq:core_equation} \\
\nonumber
&=
\gamma_C(t)\mathbb{E}\{A_id\mathcal{N}_i^0(t)\} \\
\nonumber
&=
p_1\gamma_C(t)\mathbb{E}\{d\mathcal{N}_i^0(t)\mid A_i=1\} \\
\label{eq:weighted_control_event_identity}
&=
p_1\gamma_C(t)y_1^0(t)\lambda_1^0(t)\,dt.
\end{align}

\paragraph{Limits of the weighted risk-set sums and the weighted Cox score.}
We now derive the probability limits of the weighted risk-set sums. Define the
weighted risk-set sums
\[
S_{\omega}^{(r)}(t;\theta)
=
\sum_{i=1}^n
\omega_i\mathcal{Y}_i(t)\exp(\theta A_i)A_i^r,
\qquad r=0,1.
\]
Then
\[
\overline A_{\omega}(t;\theta)
=
\frac{S_{\omega}^{(1)}(t;\theta)}{S_{\omega}^{(0)}(t;\theta)}.
\]

For \(r=0\), since \(\omega_i=A_i+(1-A_i)q(X_i)\) and \(A_i\in\{0,1\}\),
\[
\omega_i\exp(\theta A_i)
=
A_i\exp(\theta)+(1-A_i)q(X_i).
\]
Therefore,
\[
S_{\omega}^{(0)}(t;\theta)
=
\sum_{i=1}^n A_i\mathcal{Y}_i(t)\exp(\theta)
+
\sum_{i=1}^n(1-A_i)q(X_i)\mathcal{Y}_i(t).
\]

By the law of large numbers, together with
\eqref{eq:weighted_control_risk_identity} and the corresponding treated-cohort
identity
\[
\mathbb{E}\{A_i\mathcal{Y}_i(t)\}
=
\mathbb{E}\{A_i\mathcal{Y}_i^1(t)\}
=
\mathbb{E}\!\left[
\mathbb{E}\{A_i\mathcal{Y}_i^1(t)\mid A_i\}
\right]
=
p_1\mathbb{E}\{\mathcal{Y}_i^1(t)\mid A_i=1\}
=
p_1y_1^1(t),
\]
we have
\[
\begin{aligned}
n^{-1}S_{\omega}^{(0)}(t;\theta)
&\overset{p}{\to}
\mathbb{E}\{A_i\mathcal{Y}_i(t)\}\exp(\theta)
+
\mathbb{E}\{(1-A_i)q(X_i)\mathcal{Y}_i(t)\} \\
&=
p_1\exp(\theta)y_1^1(t)
+
p_1\gamma_C(t)y_1^0(t) \\
&=
p_1\{\exp(\theta)y_1^1(t)+\gamma_C(t)y_1^0(t)\}.
\end{aligned}
\]

Similarly,
\[
S_{\omega}^{(1)}(t;\theta)
=
\sum_{i=1}^n A_i\mathcal{Y}_i(t)\exp(\theta),
\]
and hence
\[
n^{-1}S_{\omega}^{(1)}(t;\theta)
\overset{p}{\to}
p_1\exp(\theta)y_1^1(t).
\]

Therefore, by the continuous mapping theorem,
\begin{align}
\label{eq:eq_A_to_a}
\overline A_{\omega}(t;\theta)
\overset{p}{\to}
\overline a(t;\theta),
\end{align}
where
\begin{align}
\label{eq:a_bar_limit}
\overline a(t;\theta)
=
\frac{\exp(\theta)y_1^1(t)}
     {\gamma_C(t)y_1^0(t)+\exp(\theta)y_1^1(t)}.
\end{align}

Next, decompose the weighted Cox score into treated and external-control event
contributions:
\[
\begin{aligned}
U_n^{\omega}(\theta)
&=
\sum_{i=1}^n
\int
A_i\{1-\overline A_{\omega}(t;\theta)\}\,d\mathcal{N}_i(t)  
-
\sum_{i=1}^n
\int
(1-A_i)q(X_i)\overline A_{\omega}(t;\theta)\,d\mathcal{N}_i(t).
\end{aligned}
\]

Using the convergence \eqref{eq:eq_A_to_a}, the treated event component
satisfies
\[
\begin{aligned}
n^{-1}
\sum_{i=1}^n
\int
A_i\{1-\overline A_{\omega}(t;\theta)\}\,d\mathcal{N}_i(t)
&=
\int
\{1-\overline A_{\omega}(t;\theta)\}
\left\{
n^{-1}\sum_{i=1}^n A_i\,d\mathcal{N}_i(t)
\right\}  \\
&\overset{p}{\to}
\int
\{1-\overline a(t;\theta)\}
\mathbb{E}\{A_i\,d\mathcal{N}_i(t)\}.
\end{aligned}
\]
By consistency (A1) and the marginal counting-process identity under treatment
condition \(a=1\), \eqref{eq:exp_d_N_i_a},
\[
\mathbb{E}\{A_i\,d\mathcal{N}_i(t)\}
=
\mathbb{E}\{A_i\,d\mathcal{N}_i^1(t)\}
=
p_1\mathbb{E}\{d\mathcal{N}_i^1(t)\mid A_i=1\}
=
p_1y_1^1(t)\lambda_1^1(t)\,dt.
\]
Therefore,
\[
n^{-1}
\sum_{i=1}^n
\int
A_i\{1-\overline A_{\omega}(t;\theta)\}\,d\mathcal{N}_i(t)
\overset{p}{\to}
p_1
\int
\{1-\overline a(t;\theta)\}
y_1^1(t)\lambda_1^1(t)\,dt.
\]

Similarly, the external-control event component satisfies
\[
\begin{aligned}
n^{-1}
\sum_{i=1}^n
\int
(1-A_i)q(X_i)\overline A_{\omega}(t;\theta)\,d\mathcal{N}_i(t)
&=
\int
\overline A_{\omega}(t;\theta)
\left\{
n^{-1}\sum_{i=1}^n (1-A_i)q(X_i)\,d\mathcal{N}_i(t)
\right\}  \\
&\overset{p}{\to}
\int
\overline a(t;\theta)
\mathbb{E}\{(1-A_i)q(X_i)\,d\mathcal{N}_i(t)\}.
\end{aligned}
\]
Using \eqref{eq:weighted_control_event_identity},
\[
\mathbb{E}\{(1-A_i)q(X_i)\,d\mathcal{N}_i(t)\}
=
p_1\gamma_C(t)y_1^0(t)\lambda_1^0(t)\,dt.
\]
Hence,
\[
n^{-1}
\sum_{i=1}^n
\int
(1-A_i)q(X_i)\overline A_{\omega}(t;\theta)\,d\mathcal{N}_i(t)
\overset{p}{\to}
p_1
\int
\overline a(t;\theta)
\gamma_C(t)y_1^0(t)\lambda_1^0(t)\,dt.
\]

Consequently, the population limit of the normalized weighted Cox score
\(n^{-1}U_n^{\omega}(\theta)\) is
\begin{align}
\label{eq:limit_of_normalized_weighted_score}
U(\theta)
=
p_1
\int
\Big[
\underbrace{\{1-\overline a(t;\theta)\}y_1^1(t)\lambda_1^1(t)}_{\text{treated event contribution}}
-
\underbrace{\overline a(t;\theta)\gamma_C(t)y_1^0(t)\lambda_1^0(t)}_{\text{control event contribution}}
\Big]dt.
\end{align}

\paragraph{Centering of the population weighted Cox score.}
It remains to show that \(U(\theta_{ATT})=0\), meaning that the true ATT
log-hazard ratio \(\theta_{ATT}\) is a root of the population weighted Cox
score equation \(U(\theta)=0\) in
\eqref{eq:limit_of_normalized_weighted_score}.

By the marginal proportional hazards model,
\begin{align}
\label{eq:marginal_hazards_model_ATT_HR}
\lambda_1^1(t)=\lambda_1^0(t)\exp(\theta_{ATT}).    
\end{align}
At \(\theta=\theta_{ATT}\), define
\[
D_{ATT}(t)=\gamma_C(t)y_1^0(t)+\exp(\theta_{ATT})y_1^1(t).
\]
Then, \(\overline a(t;\theta)\) in \eqref{eq:a_bar_limit} and
\(1-\overline a(t;\theta)\), evaluated at \(\theta_{ATT}\), are
\begin{align}
\label{eq:last_score_identity}
\overline a(t;\theta_{ATT})
=
\frac{\exp(\theta_{ATT})y_1^1(t)}{D_{ATT}(t)},
\qquad
1-\overline a(t;\theta_{ATT})
=
\frac{\gamma_C(t)y_1^0(t)}{D_{ATT}(t)}.
\end{align}
Therefore, the treated event contribution in the population score \(U(\theta)\)
in \eqref{eq:limit_of_normalized_weighted_score} is
{
\small
\begin{align*}
\{1-\overline a(t;\theta_{ATT})\}y_1^1(t)\lambda_1^1(t)
&=
\underbrace{\frac{\gamma_C(t)y_1^0(t)}{D_{ATT}(t)}}_{\because \,\,\eqref{eq:last_score_identity}}
\cdot
y_1^1(t)\cdot \underbrace{\lambda_1^0(t)\exp(\theta_{ATT})}_{\because \,\,\eqref{eq:marginal_hazards_model_ATT_HR}}   =
\frac{
\gamma_C(t)y_1^0(t)y_1^1(t)\lambda_1^0(t)\exp(\theta_{ATT})
}{
D_{ATT}(t)
}.
\end{align*}
}
The control event contribution is
{
\small
\begin{align*}
\overline a(t;\theta_{ATT})\gamma_C(t)y_1^0(t)\lambda_1^0(t)
&=
\underbrace{\frac{\exp(\theta_{ATT})y_1^1(t)}{D_{ATT}(t)}}_{\because \,\,\eqref{eq:last_score_identity}}
\cdot
\gamma_C(t)y_1^0(t)\cdot\lambda_1^0(t)  =
\frac{
\gamma_C(t)y_1^0(t)y_1^1(t)\lambda_1^0(t)\exp(\theta_{ATT})
}{
D_{ATT}(t)
}.
\end{align*}
}
The treated and control event contributions are identical for every \(t\).
Hence the integrand of \(U(\theta_{ATT})\) is zero pointwise, and therefore
\[
U(\theta_{ATT})=0.
\]

\paragraph{Conclusion.}
Finally, if \(U(\theta)\) in
\eqref{eq:limit_of_normalized_weighted_score} has a unique root and
\(U_n^{\omega}(\theta)\) converges uniformly to \(U(\theta)\), the standard
consistency theorem for \(Z\)-estimators \citep{van2000asymptotic} implies
that any solution \(\widehat\theta\) of \(U_n^\omega(\widehat\theta)=0\)
satisfies
\[
\widehat\theta\overset{p}{\to}\theta_{ATT}.
\]
This completes the proof.

\subsection{Proof of Theorem \ref{thm:Conditions_for_consistency_MEC_Cox}}\label{subsec:Proof of Theorem 2}
\paragraph{Basic setup and cross-fitting notation.} 
Recall that the pooled index set is partitioned into validation folds
\[
\{1,\ldots,n\}
=
\mathcal J^{(1)}\cup\cdots\cup\mathcal J^{(K)},
\]
and let
\[
\mathcal J^{(-k)}
=
\{1,\ldots,n\}\setminus \mathcal J^{(k)}
\]
denote the corresponding training set. For each \(i\in\mathcal J^{(k)}\), the
propensity-score estimator and the calibration-basis map are constructed using
the training set \(\mathcal J^{(-k)}\) and evaluated on the validation fold
\(\mathcal J^{(k)}\). Thus,
\[
\widehat\pi_i
=
\widehat\pi^{(-)}(X_i)
=
\widehat\pi^{(-k)}(X_i),
\qquad
\widehat h_i
=
\widehat h^{(-)}(X_i)
=
\widehat h^{(-k)}(X_i).
\]
The specific survival-probability basis used in MEC-Cox is one possible choice
of the cross-fitted calibration-basis map \(\widehat h^{(-)}(\cdot)\). Figure~\ref{fig:kfold_crossfitting} schematically illustrates this
cross-fitting construction.

\begin{figure}[h!]
\centering
\resizebox{0.9\textwidth}{!}{%
\begin{tikzpicture}[
    >=Latex,
    font=\small,
    fold/.style={draw, rectangle, minimum width=2.1cm, minimum height=0.8cm, align=center},
    val/.style={fold, fill=blue!15},
    train/.style={fold, fill=gray!15},
    proc/.style={draw, rounded corners, rectangle, minimum width=2.2cm, minimum height=0.95cm, align=center}
]

\node[blue!70!black, anchor=west] at (-1.8,1.5) {Validation folds};

\node at (4.6,1.7) {$\{1,\ldots,n\}=\mathcal J^{(1)}\cup\cdots\cup\mathcal J^{(K)}$};

\foreach \x/\lab in {0/{\(\mathcal J^{(1)}\)}, 2.3/{\(\mathcal J^{(2)}\)}, 4.6/{\(\mathcal J^{(3)}\)}, 6.9/{\(\cdots\)}, 9.2/{\(\mathcal J^{(K)}\)}}{
    \node[fold] at (\x,0.8) {\lab};
}

\node[anchor=east] at (-1.0,-0.8) {$k=1$};

\node[val]   (r11) at (0,-0.8)   {validation\\[-1mm] \(\mathcal J^{(1)}\)};
\node[train] (r12) at (2.3,-0.8) {training\\[-1mm] \(\mathcal J^{(-1)}\)};
\node[train] (r13) at (4.6,-0.8) {training\\[-1mm] \(\mathcal J^{(-1)}\)};
\node[train] (r14) at (6.9,-0.8) {\(\cdots\)};
\node[train] (r15) at (9.2,-0.8) {training\\[-1mm] \(\mathcal J^{(-1)}\)};

\draw[decorate,decoration={brace,mirror,amplitude=5pt}]
    ($(r12.south west)+(0,-0.05)$) -- ($(r15.south east)+(0,-0.05)$)
    node[midway, below=8pt] {training: \(\mathcal J^{(-1)}=\{1,\ldots,n\}\setminus\mathcal J^{(1)}\)};

\node[proc] (fit1) at (12.4,-0.8) {Fit\\ nuisance\\ learners};
\node[proc, draw=blue!70!black, text=blue!70!black] (eval1) at (15.5,-0.8)
{Evaluate\\ out-of-fold\\ predictions};

\draw[->] (10.35,-0.8) -- (fit1.west);
\draw[->] (fit1.east) -- (eval1.west);

\node[anchor=east] at (-1.0,-3.0) {$k=2$};

\node[train] (r21) at (0,-3.0)   {training\\[-1mm] \(\mathcal J^{(-2)}\)};
\node[val]   (r22) at (2.3,-3.0) {validation\\[-1mm] \(\mathcal J^{(2)}\)};
\node[train] (r23) at (4.6,-3.0) {training\\[-1mm] \(\mathcal J^{(-2)}\)};
\node[train] (r24) at (6.9,-3.0) {\(\cdots\)};
\node[train] (r25) at (9.2,-3.0) {training\\[-1mm] \(\mathcal J^{(-2)}\)};

\draw[decorate,decoration={brace,mirror,amplitude=5pt}]
    ($(r21.south west)+(0,-0.05)$) -- ($(r25.south east)+(0,-0.05)$)
    node[midway, below=8pt] {training: \(\mathcal J^{(-2)}=\{1,\ldots,n\}\setminus\mathcal J^{(2)}\)};

\node[proc] (fit2) at (12.4,-3.0) {Fit\\ nuisance\\ learners};
\node[proc, draw=blue!70!black, text=blue!70!black] (eval2) at (15.5,-3.0)
{Evaluate\\ out-of-fold\\ predictions};

\draw[->] (10.35,-3.0) -- (fit2.west);
\draw[->] (fit2.east) -- (eval2.west);

\node at (-1.0,-5.0) {\(\vdots\)};
\node at (4.6,-5.0) {\(\vdots\)};
\node at (12.4,-5.0) {\(\vdots\)};
\node at (15.5,-5.0) {\(\vdots\)};

\node[anchor=east] at (-1.0,-6.7) {$k=K$};

\node[train] (rK1) at (0,-6.7)   {training\\[-1mm] \(\mathcal J^{(-K)}\)};
\node[train] (rK2) at (2.3,-6.7) {training\\[-1mm] \(\mathcal J^{(-K)}\)};
\node[train] (rK3) at (4.6,-6.7) {training\\[-1mm] \(\mathcal J^{(-K)}\)};
\node[train] (rK4) at (6.9,-6.7) {\(\cdots\)};
\node[val]   (rK5) at (9.2,-6.7) {validation\\[-1mm] \(\mathcal J^{(K)}\)};

\draw[decorate,decoration={brace,mirror,amplitude=5pt}]
    ($(rK1.south west)+(0,-0.05)$) -- ($(rK4.south east)+(0,-0.05)$)
    node[midway, below=8pt] {training: \(\mathcal J^{(-K)}=\{1,\ldots,n\}\setminus\mathcal J^{(K)}\)};

\node[proc] (fitK) at (12.4,-6.7) {Fit\\ nuisance\\ learners};
\node[proc, draw=blue!70!black, text=blue!70!black] (evalK) at (15.5,-6.7)
{Evaluate\\ out-of-fold\\ predictions};

\draw[->] (10.35,-6.7) -- (fitK.west);
\draw[->] (fitK.east) -- (evalK.west);

\end{tikzpicture}%
}
\caption{Schematic illustration of \(K\)-fold cross-fitting.}
\label{fig:kfold_crossfitting}
\end{figure}

Let \(p_1=\Pr(A=1)\) and \(p_0=\Pr(A=0)\). Write the true and estimated
ATT odds weights as
\[
q(X)=\frac{\pi(X)}{1-\pi(X)},
\qquad
\widehat q_i=
\frac{\widehat\pi^{(-)}(X_i)}
{1-\widehat\pi^{(-)}(X_i)} .
\]
The normalized baseline external-control weight is defined by
\begin{equation}
\label{eq:normalized_weight_baseline}
\widehat d_i
=
\frac{n_1\widehat q_i}{\sum_{j\in\mathcal I_0}\widehat q_j}
=
\widehat c_n\widehat q_i,
\qquad
\widehat c_n
=
\frac{n_1}{\sum_{j\in\mathcal I_0}\widehat q_j},
\qquad i\in\mathcal I_0 .
\end{equation}

Throughout this proof, all \(o_p(1)\) and \(O_p(1)\) statements involving
vector-valued quantities are understood componentwise. Since the calibration
basis dimension \(p\) is fixed, componentwise convergence to zero is equivalent
to convergence to zero in any finite-dimensional norm.

\paragraph{Asymptotic feasibility of the baseline weights.} We next show that the normalized ATT-IPW weights
\(\{\widehat d_i:i\in\mathcal I_0\}\) are asymptotically feasible for the
MEC calibration constraint. By C1, namely the \(L_2\)-consistency of the
cross-fitted source propensity-score estimator under the external-control
covariate distribution, together with the stability of the odds map
\(\pi\mapsto \pi/(1-\pi)\) on the relevant support,
\[
\frac{1}{n}\sum_{j\in\mathcal I_0}\widehat q_j
=
\frac{1}{n}\sum_{j=1}^n(1-A_j)\widehat q_j
\overset{p}{\longrightarrow}
\mathbb E\{(1-A)q(X)\}.
\]
Moreover,
\[
\mathbb E\{(1-A)q(X)\}
=
\mathbb E\left[
\mathbb E\{(1-A)q(X)\mid X\}
\right]
=
\mathbb E\left[
\{1-\pi(X)\}\frac{\pi(X)}{1-\pi(X)}
\right]
=
\mathbb E\{\pi(X)\}
=
p_1 .
\]
Since \(n_1/n\overset{p}{\to}p_1\), it follows that
\[
\widehat c_n
=
\frac{n_1/n}{n^{-1}\sum_{j\in\mathcal I_0}\widehat q_j}
\overset{p}{\longrightarrow}
1.
\]
Therefore, the normalized baseline weights
\(\widehat d_i=\widehat c_n\widehat q_i\) in
\eqref{eq:normalized_weight_baseline} are asymptotically equivalent to the
usual estimated ATT odds weights \(\widehat q_i\).

The calibration constraint for MEC-Cox is
\begin{align}
\label{eq:calibration_constraint}
\sum_{i\in\mathcal I_0}w_i\widehat h_i
=
\sum_{i\in\mathcal I_1}\widehat h_i.    
\end{align}
Thus, it suffices to show that the baseline weights $\widehat d_i$ already satisfy this constraint \eqref{eq:calibration_constraint} asymptotically.

Define the raw cross-fitted ATT balance
\[
B_n
=
\frac{1}{n}\sum_{i=1}^n
\left\{
(1-A_i)\widehat q_i\widehat h_i
-
A_i\widehat h_i
\right\}.
\]
Decompose $B_n$ into two parts:
\[
B_n
=
\underbrace{
\frac{1}{n}\sum_{i=1}^n
(1-A_i)\{\widehat q_i-q(X_i)\}\widehat h_i
}_{B_{1n}}
+
\underbrace{
\frac{1}{n}\sum_{i=1}^n
\left\{
(1-A_i)q(X_i)\widehat h_i
-
A_i\widehat h_i
\right\}
}_{B_{2n}}.
\]
For $B_{1n}$, Cauchy--Schwarz inequality applied to the Euclidean norm gives
\[
\|B_{1n}\|
\le
\left[
\frac{1}{n}\sum_{i=1}^n
(1-A_i)\{\widehat q_i-q(X_i)\}^2
\right]^{1/2}
\left[
\frac{1}{n}\sum_{i=1}^n
(1-A_i)\|\widehat h_i\|^2
\right]^{1/2}.
\]
The first factor is $o_p(1)$ by C1 and stability of the odds map, and the second factor is $O_p(1)$ by C2. Hence
\begin{align}
\label{eq:B_1}
B_{1n}=o_p(1).    
\end{align}

For $B_{2n}$, we use the cross-fitting structure fold by fold. 
Define, for \(i\in\mathcal J^{(k)}\),
\[
\psi_i^{(k)}
=
\left\{(1-A_i)q(X_i)-A_i\right\}
\widehat h^{(-k)}(X_i).
\]
Then
\[
B_{2n}
=
\sum_{k=1}^K
\frac{1}{n}
\sum_{i\in\mathcal J^{(k)}}\psi_i^{(k)}.
\]

Let \(\mathcal T_{-k}\) denote the sigma-field generated by the training sample
\(\mathcal J^{(-k)}\) and the fitted nuisance learners used to construct
\(\widehat h^{(-k)}\). Conditional on \(\mathcal T_{-k}\), the map
\(\widehat h^{(-k)}(\cdot)\) is fixed, while the validation observations
\(\{O_i:i\in\mathcal J^{(k)}\}\) are independent of this fitted map. Hence,
applying the ATT density-ratio identity to the fixed vector-valued function
\(f(X)=\widehat h^{(-k)}(X)\) gives
\begin{align}
\label{eq:zero_condi}
\mathbb E\{\psi_i^{(k)}\mid \mathcal T_{-k}\}=0.    
\end{align}
Indeed, for any square-integrable vector-valued function $f(X)$,
\begin{equation}
\label{eq:proof_att_density_ratio_identity}
\mathbb E\{(1-A)q(X)f(X)\}
=
\mathbb E\{Af(X)\},
\end{equation}
because
\[
\mathbb E\{(1-A)q(X)f(X)\}
=
\mathbb E\left[
\{1-\pi(X)\}\frac{\pi(X)}{1-\pi(X)}f(X)
\right]
=
\mathbb E\{\pi(X)f(X)\}
=
\mathbb E\{Af(X)\}.
\]
Next, C2 and the finite weighted-moment regularity of the ATT odds weights imply
\begin{align*}
&\mathbb E\{\|\psi_i^{(k)}\|^2\mid \mathcal T_{-k}\}\le
2\mathbb E\left[
(1-A)q^2(X)\|\widehat h^{(-k)}(X)\|^2
+
A\|\widehat h^{(-k)}(X)\|^2
\,\middle|\,
\mathcal T_{-k}
\right]\\
&\quad=
2p_0\,
\underbrace{
\mathbb E\left[
q^2(X)\|\widehat h^{(-k)}(X)\|^2
\mid A=0,\mathcal T_{-k}
\right]
}_{\substack{O_p(1)\ \text{by C2 and finite}\\\text{weighted-moment regularity of }q}}
+
2p_1\,
\underbrace{
\mathbb E\left[
\|\widehat h^{(-k)}(X)\|^2
\mid A=1,\mathcal T_{-k}
\right]
}_{O_p(1)\ \text{by C2}}
\\
&\quad=
O_p(1).
\end{align*}

For $i\neq j$, conditional independence of the validation observations given
$\mathcal T_{-k}$ and the conditional mean-zero property in \eqref{eq:zero_condi} imply
\[
\begin{aligned}
\mathbb E\left[
\{\psi_i^{(k)}\}^{\top}\psi_j^{(k)}
\mid \mathcal T_{-k}
\right]
&=
\mathbb E\left[
\mathbb E\left\{
\{\psi_i^{(k)}\}^{\top}\psi_j^{(k)}
\mid \mathcal T_{-k},\psi_i^{(k)}
\right\}
\mid \mathcal T_{-k}
\right] \\
&=
\mathbb E\left[
\{\psi_i^{(k)}\}^{\top}
\mathbb E\{\psi_j^{(k)}\mid \mathcal T_{-k},\psi_i^{(k)}\}
\mid \mathcal T_{-k}
\right] \\
&=
\mathbb E\left[
\{\psi_i^{(k)}\}^{\top}
\mathbb E\{\psi_j^{(k)}\mid \mathcal T_{-k}\}
\mid \mathcal T_{-k}
\right] \\
&=
\mathbb E\left[
\{\psi_i^{(k)}\}^{\top}0
\mid \mathcal T_{-k}
\right] \\
&=0.
\end{aligned}
\]
Here, the third equality follows from the conditional independence of
\[
\psi_i^{(k)}
=
\psi^{(k)}\!\left(O_i;\widehat h^{(-k)}\right)
\qquad\text{and}\qquad
\psi_j^{(k)}
=
\psi^{(k)}\!\left(O_j;\widehat h^{(-k)}\right)
\]
given $\mathcal T_{-k}$. Indeed, conditional on $\mathcal T_{-k}$, the fitted map
$\widehat h^{(-k)}$ is fixed, and the validation observations $O_i$ and $O_j$ are independent for
$i\neq j$. Therefore, $\mathbb E\{\psi_j^{(k)}\mid \mathcal T_{-k},\psi_i^{(k)}\}
=
\mathbb E\{\psi_j^{(k)}\mid \mathcal T_{-k}\}.$

Hence,
\begin{align}
\mathbb E\left[
\left\|
\frac{1}{n}\sum_{i\in\mathcal J^{(k)}}\psi_i^{(k)}
\right\|^2
\,\middle|\,
\mathcal T_{-k}
\right] \nonumber
& =
\frac{1}{n^2}
\mathbb E\left[
\sum_{i\in\mathcal J^{(k)}}\|\psi_i^{(k)}\|^2
+
\sum_{\substack{i,j\in\mathcal J^{(k)}\\ i\neq j}}
\{\psi_i^{(k)}\}^{\top}\psi_j^{(k)}
\,\middle|\,
\mathcal T_{-k}
\right] \nonumber\\
&\quad =
\frac{1}{n^2}
\sum_{i\in\mathcal J^{(k)}}
\mathbb E\{\|\psi_i^{(k)}\|^2\mid \mathcal T_{-k}\} \nonumber\\
&\quad =
O_p\left(\frac{|\mathcal J^{(k)}|}{n^2}\right)
=
O_p(n^{-1}),
\label{eq:fold_average_second_moment}
\end{align}
where the second equality follows because the cross terms vanish for $i\neq j$, and the last equality uses that $K$ is fixed and $|\mathcal J^{(k)}|=O(n)$.

Let
\[
Z_{n,k}
=
\frac{1}{n}\sum_{i\in\mathcal J^{(k)}}\psi_i^{(k)} .
\]

Therefore, for any fixed $\varepsilon>0$, conditional Markov's inequality gives
\[
\Pr\left(
\|Z_{n,k}\|>\varepsilon
\,\middle|\,
\mathcal T_{-k}
\right)
\le
\frac{1}{\varepsilon^2}
\mathbb E\left[
\|Z_{n,k}\|^2
\,\middle|\,
\mathcal T_{-k}
\right]
=
O_p(n^{-1})
=
o_p(1),
\]
where the second last equality is from $\mathbb E\left[
\|Z_{n,k}\|^2
\,\middle|\,
\mathcal T_{-k}
\right]
=
O_p(n^{-1})$ \eqref{eq:fold_average_second_moment}.

Taking expectations over $\mathcal T_{-k}$ yields
\[
\Pr\left(
\|Z_{n,k}\|>\varepsilon
\right)
\to 0.
\]
Hence,
\[
\frac{1}{n}\sum_{i\in\mathcal J^{(k)}}\psi_i^{(k)}
=
o_p(1).
\]

Since $K$ is fixed, summing over folds gives
\begin{align}
    \label{eq:B_2}
    B_{2n}
=
\sum_{k=1}^K
\frac{1}{n}\sum_{i\in\mathcal J^{(k)}}\psi_i^{(k)}
=
o_p(1).
\end{align}

Combining the bounds for $B_{1n}$ \eqref{eq:B_1} and $B_{2n}$ \eqref{eq:B_2} gives the raw cross-fitted ATT balance
\begin{equation}
\label{eq:proof_crossfitted_balance_raw}
B_n
=
\frac{1}{n}\sum_{i=1}^n
\left\{
(1-A_i)\widehat q_i\widehat h_i
-
A_i\widehat h_i
\right\}
=
o_p(1).
\end{equation}
This raw balance \eqref{eq:proof_crossfitted_balance_raw} is not yet the MEC calibration feasibility statement, because the baseline weights
$\widehat d_i$ are normalized versions of $\widehat q_i$. We therefore translate
\eqref{eq:proof_crossfitted_balance_raw} into a statement for the normalized ATT-IPW weights.

Define
\[
Q_n=\frac{1}{n}\sum_{i=1}^n(1-A_i)\widehat q_i,
\,\,
P_n=\frac{n_1}{n},
\,\,
H_{0n}
=
\frac{1}{n}\sum_{i=1}^n(1-A_i)\widehat q_i\widehat h_i,
\,\,
H_{1n}
=
\frac{1}{n}\sum_{i=1}^n A_i\widehat h_i.
\]

Then \eqref{eq:proof_crossfitted_balance_raw} implies
\[
H_{0n}-H_{1n}=o_p(1).
\]
Also, by C1 and the ATT density-ratio identity, $Q_n\overset{p}{\to}p_1$, while
$P_n=n_1/n\overset{p}{\to}p_1$, thus $P_n-Q_n=o_p(1)$. By C2, $H_{1n}=O_p(1)$.

Using the normalization $\widehat d_i=n_1\widehat q_i/\sum_{j\in\mathcal I_0}\widehat q_j$, we have
\[
\frac{1}{n_1}
\sum_{i\in\mathcal I_0}
\widehat d_i\widehat h_i
=
\frac{1}{n_1}
\sum_{i\in\mathcal I_0}
\frac{n_1\widehat q_i}{\sum_{j\in\mathcal I_0}\widehat q_j}
\widehat h_i
=
\frac{
\sum_{i\in\mathcal I_0}\widehat q_i\widehat h_i
}{
\sum_{j\in\mathcal I_0}\widehat q_j
}
=
\frac{
n^{-1}\sum_{i=1}^n(1-A_i)\widehat q_i\widehat h_i
}{
n^{-1}\sum_{i=1}^n(1-A_i)\widehat q_i
}
=
\frac{H_{0n}}{Q_n}.
\]
and
\[
\frac{1}{n_1}
\sum_{i\in\mathcal I_1}
\widehat h_i
=
\frac{
n^{-1}\sum_{i=1}^n A_i\widehat h_i
}{
n_1/n
}
=
\frac{H_{1n}}{P_n}.
\]
Therefore,
\[
\begin{aligned}
\frac{1}{n_1}
\left\{
\sum_{i\in\mathcal I_0}\widehat d_i\widehat h_i
-
\sum_{i\in\mathcal I_1}\widehat h_i
\right\}
&=
\frac{H_{0n}}{Q_n}
-
\frac{H_{1n}}{P_n}  =
\frac{H_{0n}-H_{1n}}{Q_n}
+
H_{1n}
\left(
\frac{1}{Q_n}-\frac{1}{P_n}
\right) \\
&=
\frac{o_p(1)}{Q_n}
+
O_p(1)\frac{P_n-Q_n}{P_nQ_n} \\
&=
o_p(1)
+
O_p(1)o_p(1) \\
&=
o_p(1),
\end{aligned}
\]
where the fourth equality follows because $Q_n\overset{p}{\to}p_1$, $P_n\overset{p}{\to}p_1$, $p_1>0$, and hence $P_nQ_n$ is bounded away from zero with probability tending to one, while $P_n-Q_n=o_p(1)$.

Thus, the normalized ATT-IPW weights are asymptotically feasible for the MEC calibration constraint:
\begin{equation}
\label{eq:proof_baseline_att_feasible}
\frac{1}{n_1}
\left\{
\sum_{i\in\mathcal I_0}\widehat d_i\widehat h_i
-
\sum_{i\in\mathcal I_1}\widehat h_i
\right\}
=
o_p(1).
\end{equation}
This implies that the MEC calibration constraint is asymptotically compatible with the baseline
ATT weighting scheme. In particular, the exact finite-sample balance imposed by MEC does not
define a new population target; rather, it enforces a cross-fitted calibration-basis balance condition
that the normalized ATT-IPW weights already satisfy asymptotically. Therefore, the calibration
step can be viewed as a finite-sample refinement of the ATT weighting scheme.

\paragraph{Stability of the Bregman calibration perturbation.}
We now show that the MEC calibration perturbation is asymptotically negligible.
Define the normalized dual calibration map
\[
F_n(\lambda)
=
\frac{1}{n_1}
\left\{
\sum_{i\in I_0} w_i(\lambda)\widehat h_i
-
\sum_{i\in I_1} \widehat h_i
\right\},
\qquad
w_i(\lambda)
=
g^{-1}\{g(\widehat d_i)+\lambda^\top \widehat h_i\}.
\]
Since \(w_i(0)=\widehat d_i\), equation
\eqref{eq:proof_baseline_att_feasible} implies
\[
F_n(0)=o_p(1).
\]
By C3-(i), the dual calibration equation \(F(\lambda)=0\) admits a local
solution \(\widehat\lambda\). Hence this MEC dual solution satisfies
\(F_n(\widehat\lambda)=0\). A Taylor expansion around \(\lambda=0\) gives
\[
0
=
F_n(\widehat\lambda)
=
F_n(0)+J_n(\widetilde\lambda)\widehat\lambda,
\]
where \(\widetilde\lambda\) lies between \(0\) and \(\widehat\lambda\), and
\[
J_n(\lambda)
=
\frac{\partial F_n(\lambda)}{\partial\lambda^\top}.
\]
By C3-(ii), \(J_n(\widetilde\lambda)\) is nonsingular with probability tending
to one and \(J_n(\widetilde\lambda)^{-1}=O_p(1)\). Therefore,
\begin{equation}
\label{eq:lambda_negligible}
\widehat\lambda
=
-
J_n(\widetilde\lambda)^{-1}F_n(0)
=
o_p(1).
\end{equation}
Next, by C3-(iii), the inverse gradient map \(g^{-1}\) is locally smooth on the
relevant neighborhood. Let
\[
a_i=g(\widehat d_i),
\qquad
b_i=\widehat\lambda^\top\widehat h_i.
\]
Then
\[
w_i(\widehat\lambda)
=
g^{-1}(a_i+b_i),
\qquad
\widehat d_i
=
g^{-1}(a_i).
\]
By the mean-value theorem,
\[
w_i(\widehat\lambda)-\widehat d_i
=
g^{-1}(a_i+b_i)-g^{-1}(a_i)
=
\{g^{-1}\}'(\xi_i)b_i
\]
for some \(\xi_i\) between \(a_i\) and \(a_i+b_i\). Since
\(\{g^{-1}\}'(\cdot)\) is locally bounded on the relevant neighborhood, there
exists a finite constant \(C<\infty\), independent of \(i\) and \(n\), such that
\[
\left|w_i(\widehat\lambda)-\widehat d_i\right|
\le
C\left|\widehat\lambda^\top\widehat h_i\right|.
\]
Equivalently,
\[
\left|w_i(\widehat\lambda)-\widehat d_i\right|
\lesssim
\left|\widehat\lambda^\top\widehat h_i\right|,
\]
where \(\lesssim\) denotes an inequality up to a finite constant independent of
\(i\) and \(n\). Therefore,
\begin{equation}
\label{eq:core_equation_2}
\frac{1}{n}\sum_{i\in\mathcal I_0}
\{w_i(\widehat\lambda)-\widehat d_i\}^2
\lesssim
\|\widehat\lambda\|^2
\frac{1}{n}\sum_{i\in\mathcal I_0}\|\widehat h_i\|^2
=
o_p(1),
\end{equation}
where the last equality follows from
\(\widehat\lambda=o_p(1)\) in \eqref{eq:lambda_negligible} and C2. Thus, the
MEC calibration perturbation is asymptotically negligible in empirical
\(L_2\) norm.

Consequently, the calibrated weights
\(\{w_i(\widehat\lambda):i\in\mathcal I_0\}\) are asymptotically close to the
normalized ATT-IPW weights \(\{\widehat d_i:i\in\mathcal I_0\}\) in empirical
\(L_2\) norm. Their difference is the Bregman calibration perturbation, which
vanishes asymptotically because \(\widehat\lambda=o_p(1)\) and the cross-fitted
calibration basis is \(L_2\)-bounded.

\paragraph{Connection to the ATT-weighted Cox score.}
It remains to connect the MEC-weighted Cox score to the ATT-weighted Cox
population score. Define the MEC estimating-equation weight and the normalized ATT-IPW estimating-equation weight by
\[
\widetilde\omega_i(\widehat\lambda)
=
A_i+(1-A_i)w_i(\widehat\lambda),
\qquad
\widehat\omega_i^{d}
=
A_i+(1-A_i)\widehat d_i .
\]
Let
\[
U_n^{\mathrm{MEC}}(\theta)
=
\sum_{i=1}^n
\int
\widetilde\omega_i(\widehat\lambda)
\left\{
A_i-\bar A_n^{\mathrm{MEC}}(t;\theta)
\right\}
\,d\mathcal{N}_i(t),
\]
where
\[
\bar A_n^{\mathrm{MEC}}(t;\theta)
=
\frac{
\sum_{j=1}^n
\widetilde\omega_j(\widehat\lambda)\mathcal{Y}_j(t)\exp(\theta A_j)A_j
}{
\sum_{j=1}^n
\widetilde\omega_j(\widehat\lambda)\mathcal{Y}_j(t)\exp(\theta A_j)
}.
\]
Similarly, define the normalized ATT-IPW Cox score
\[
U_n^{\mathrm{IPW}}(\theta)
=
\sum_{i=1}^n
\int
\widehat\omega_i^{d}
\left\{
A_i-\bar A_n^{d}(t;\theta)
\right\}
\,d\mathcal{N}_i(t),
\]
where
\[
\bar A_n^{d}(t;\theta)
=
\frac{
\sum_{j=1}^n
\widehat\omega_j^{d}\mathcal{Y}_j(t)\exp(\theta A_j)A_j
}{
\sum_{j=1}^n
\widehat\omega_j^{d}\mathcal{Y}_j(t)\exp(\theta A_j)
}.
\]
The two scores differ only through the external-control weights:
\(U_n^{\mathrm{IPW}}(\theta)\) uses the normalized ATT-IPW weights
\(\widehat d_i\), whereas \(U_n^{\mathrm{MEC}}(\theta)\) uses the calibrated
weights \(w_i(\widehat\lambda)\).

By Cauchy--Schwarz, we have $(1/n)\sum_{i\in\mathcal I_0} a_i
=
1/n\sum_{i\in\mathcal I_0}1\cdot a_i
\le $ $(n_0/n)^{1/2} $ $ 
((1/n) $ $ \sum_{i\in\mathcal I_0}a_i^2
)^{1/2}.$ Hence, by \eqref{eq:core_equation_2}, with
\(a_i=|w_i(\widehat\lambda)-\widehat d_i|\), 
\[
\frac{1}{n}\sum_{i\in\mathcal I_0}
\left|w_i(\widehat\lambda)-\widehat d_i\right|
\le
\left(\frac{n_0}{n}\right)^{1/2}
\left[
\frac{1}{n}\sum_{i\in\mathcal I_0}
\{w_i(\widehat\lambda)-\widehat d_i\}^2
\right]^{1/2}
=
o_p(1),
\]
because \(n_0/n=O_p(1)\).

Therefore, under the usual boundedness and nondegeneracy conditions for the Cox
risk-set denominators, replacing \(\widehat d_i\) by
\(w_i(\widehat\lambda)\) changes the normalized weighted Cox score by only an
asymptotically negligible amount. Hence,
\[
\sup_{\theta\in\Theta_0}
\left|
n^{-1}U_n^{\mathrm{MEC}}(\theta)
-
n^{-1}U_n^{\mathrm{IPW}}(\theta)
\right|
=
o_p(1),
\]
where \(\Theta_0\) is a compact neighborhood of \(\theta_{ATT}\).

Next, define the ATT-weighted Cox population score by
\[
U(\theta)
=
\mathbb E
\left[
\int
\omega
\left\{
A-\bar A(t;\theta)
\right\}
\,d\mathcal{N}(t)
\right],
\qquad
\omega=A+(1-A)q(X),
\]
where
\[
\bar A(t;\theta)
=
\frac{
\mathbb E\{\omega \mathcal{Y}(t)\exp(\theta A)A\}
}{
\mathbb E\{\omega \mathcal{Y}(t)\exp(\theta A)\}
}.
\]
By C1, the asymptotic normalization of the baseline weights, and the ATT
density-ratio identity in \eqref{eq:proof_att_density_ratio_identity}, the
normalized ATT-IPW Cox score converges uniformly to the ATT-weighted Cox
population score:
\[
\sup_{\theta\in\Theta_0}
\left|
n^{-1}U_n^{\mathrm{IPW}}(\theta)-U(\theta)
\right|
=
o_p(1).
\]
Combining the two preceding displays by the triangle inequality gives
\begin{equation}
\label{eq:proof_mec_score_limit}
\sup_{\theta\in\Theta_0}
\left|
n^{-1}U_n^{\mathrm{MEC}}(\theta)-U(\theta)
\right|
=
o_p(1).
\end{equation}

\paragraph{Conclusion.}
By Theorem \ref{thm:identification_weighted_cox}, under causal assumptions A1--A3,
independent censoring, and the marginal proportional hazards model, the
population score satisfies \(U(\theta_{ATT})=0\). Under the assumed Cox
regularity conditions, \(U(\theta)\) has a unique root at \(\theta_{ATT}\) in
\(\Theta_0\). Since \(\widehat\theta_{\mathrm{MEC}}\) solves
\(U_n^{\mathrm{MEC}}(\theta)=0\), the standard consistency theorem for
\(Z\)-estimators \citep{van2000asymptotic} applied to \eqref{eq:proof_mec_score_limit} yields
\[
\widehat\theta_{\mathrm{MEC}}
\overset{p}{\longrightarrow}
\theta_{ATT}.
\]
This completes the proof.

\subsection{Proof of Theorem \ref{thm:mec_projection_gain}}
\label{subsec:proof_mec_projection_gain}

Recall that the stacked estimating system is based on the subject-level contribution
\[
\widehat\Psi_i
=
\begin{pmatrix}
\widehat\eta_i\\
\widehat\rho_i
\end{pmatrix},
\]
where \(\widehat\eta_i\) is the Lin--Wei/Binder empirical Cox contribution evaluated at
\((\widehat\theta_{\rm MEC},\widehat\lambda)\), and \(\widehat\rho_i\) is the subject-level contribution
to the dual calibration equation. The empirical derivative matrix has the block upper-triangular
form
\[
\widehat D
=
\begin{pmatrix}
\widehat D_{\theta\theta} & \widehat D_{\theta\lambda}\\
0 & \widehat D_{\lambda\lambda}
\end{pmatrix}.
\]
Therefore,
\[
\widehat D^{-1}
=
\begin{pmatrix}
\widehat D_{\theta\theta}^{-1}
&
-\widehat D_{\theta\theta}^{-1}
\widehat D_{\theta\lambda}
\widehat D_{\lambda\lambda}^{-1}\\
0
&
\widehat D_{\lambda\lambda}^{-1}
\end{pmatrix}.
\]

The estimated subject-level linearized contribution to
\(\widehat\xi-\xi_0\), where
\(\widehat\xi=(\widehat\theta_{\rm MEC},\widehat\lambda)\), is
\[
\widehat\phi_i
=
-\widehat D^{-1}\widehat\Psi_i.
\]
Taking the first component gives
\[
\widehat\phi_i^{\rm MEC}
=
[-\widehat D^{-1}\widehat\Psi_i]_1
=
-\widehat D_{\theta\theta}^{-1}
\left\{
\widehat\eta_i
-
\widehat C_{\rm lin}\widehat\rho_i
\right\},
\qquad
\widehat C_{\rm lin}
=
\widehat D_{\theta\lambda}
\widehat D_{\lambda\lambda}^{-1}.
\]
Hence the MEC-Cox sandwich variance for \(\widehat\theta_{\rm MEC}\) is
\[
\widehat{\operatorname{Var}}_{\rm MEC}
(\widehat\theta_{\rm MEC})
=
\sum_{i=1}^n
\left(
\widehat\phi_i^{\rm MEC}
\right)^2
=
\widehat D_{\theta\theta}^{-1}
\left\{
\sum_{i=1}^n
\left(
\widehat\eta_i
-
\widehat C_{\rm lin}\widehat\rho_i
\right)^2
\right\}
(\widehat D_{\theta\theta}^{-1})^\top.
\]

If the calibration equation is ignored, the calibrated weights are treated as fixed. In that case,
the Lin--Wei/Binder variance uses the full Cox contribution \(\widehat\eta_i\), giving
\[
\widehat{\operatorname{Var}}_{\rm LW}
(\widehat\theta_{\rm MEC})
=
\widehat D_{\theta\theta}^{-1}
\left\{
\sum_{i=1}^n
\widehat\eta_i^2
\right\}
(\widehat D_{\theta\theta}^{-1})^\top.
\]

Now define the empirical projection coefficient
\[
\widehat C_{\rm proj}
=
\arg\min_{C\in\mathbb R^{1\times p}}
\sum_{i=1}^n
\left(
\widehat\eta_i
-
C\widehat\rho_i
\right)^2 .
\]
The normal equations for this least-squares problem are
\[
\sum_{i=1}^n
\left(
\widehat\eta_i
-
\widehat C_{\rm proj}\widehat\rho_i
\right)
\widehat\rho_i^\top
=
0.
\]
Equivalently,
\[
\sum_{i=1}^n
\widehat\eta_i\widehat\rho_i^\top
-
\widehat C_{\rm proj}
\sum_{i=1}^n
\widehat\rho_i\widehat\rho_i^\top
=
0.
\]
Since \(\widehat B_{\lambda\lambda}
=
\sum_{i=1}^n
\widehat\rho_i\widehat\rho_i^\top\) is nonsingular, this yields
\[
\widehat C_{\rm proj}
=
\left[
\sum_{i=1}^n
\widehat\eta_i\widehat\rho_i^\top
\right]
\left[
\sum_{i=1}^n
\widehat\rho_i\widehat\rho_i^\top
\right]^{-1}
=
\widehat B_{\theta\lambda}
\widehat B_{\lambda\lambda}^{-1}
\in \mathbb R^{1\times p}.
\]

Suppose now that the linearization correction coincides with the projection correction:
\[
\widehat C_{\rm lin}
=
\widehat C_{\rm proj}.
\]
Then
\[
\widehat{\operatorname{Var}}_{\rm MEC}
(\widehat\theta_{\rm MEC})
=
\widehat D_{\theta\theta}^{-1}
\left\{
\sum_{i=1}^n
\left(
\widehat\eta_i
-
\widehat C_{\rm proj}\widehat\rho_i
\right)^2
\right\}
(\widehat D_{\theta\theta}^{-1})^\top.
\]

By the least-squares projection identity,
\[
\begin{aligned}
\sum_{i=1}^n
\left(
\widehat\eta_i
-
\widehat C_{\rm proj}\widehat\rho_i
\right)^2
&=
\sum_{i=1}^n
\widehat\eta_i^2
-
2\widehat C_{\rm proj}
\sum_{i=1}^n
\widehat\rho_i\widehat\eta_i
+
\widehat C_{\rm proj}
\left(
\sum_{i=1}^n
\widehat\rho_i\widehat\rho_i^\top
\right)
\widehat C_{\rm proj}^\top
\\
&=
\widehat B_{\theta\theta}
-
2\widehat C_{\rm proj}\widehat B_{\lambda\theta}
+
\widehat C_{\rm proj}
\widehat B_{\lambda\lambda}
\widehat C_{\rm proj}^\top
\\
&=
\widehat B_{\theta\theta}
-
2\widehat B_{\theta\lambda}
\widehat B_{\lambda\lambda}^{-1}
\widehat B_{\lambda\theta}
+
\widehat B_{\theta\lambda}
\widehat B_{\lambda\lambda}^{-1}
\widehat B_{\lambda\lambda}
\widehat B_{\lambda\lambda}^{-1}
\widehat B_{\lambda\theta}
\\
&=
\sum_{i=1}^n
\widehat\eta_i^2
-
\widehat B_{\theta\lambda}
\widehat B_{\lambda\lambda}^{-1}
\widehat B_{\lambda\theta}.
\end{aligned}
\]

Therefore,
\[
\widehat{\operatorname{Var}}_{\rm MEC}
(\widehat\theta_{\rm MEC})
=
\widehat D_{\theta\theta}^{-1}
\left\{
\widehat B_{\theta\theta}
-
\widehat B_{\theta\lambda}
\widehat B_{\lambda\lambda}^{-1}
\widehat B_{\lambda\theta}
\right\}
(\widehat D_{\theta\theta}^{-1})^\top,
\]
where
\[
\widehat B_{\theta\theta}
=
\sum_{i=1}^n
\widehat\eta_i^2.
\]
Combining this expression with the fixed-weight Lin--Wei/Binder variance gives
\begin{align*}
\widehat{\operatorname{Var}}_{\rm LW}
(\widehat\theta_{\rm MEC})
-
\widehat{\operatorname{Var}}_{\rm MEC}
(\widehat\theta_{\rm MEC})
&=
\widehat D_{\theta\theta}^{-1}
\left[
\widehat B_{\theta\theta}
-
\left\{
\widehat B_{\theta\theta}
-
\widehat B_{\theta\lambda}
\widehat B_{\lambda\lambda}^{-1}
\widehat B_{\lambda\theta}
\right\}
\right]
(\widehat D_{\theta\theta}^{-1})^\top\\
&=
\widehat D_{\theta\theta}^{-1}
\widehat B_{\theta\lambda}
\widehat B_{\lambda\lambda}^{-1}
\widehat B_{\lambda\theta}
(\widehat D_{\theta\theta}^{-1})^\top.
\end{align*}
Because \(\widehat B_{\lambda\lambda}\) is positive definite under the stated nonsingularity condition,
\[
\widehat B_{\theta\lambda}
\widehat B_{\lambda\lambda}^{-1}
\widehat B_{\lambda\theta}
\ge 0.
\]
Hence,
\[
\widehat{\operatorname{Var}}_{\rm MEC}
(\widehat\theta_{\rm MEC})
\le
\widehat{\operatorname{Var}}_{\rm LW}
(\widehat\theta_{\rm MEC}).
\]
The inequality is strict whenever
\[
\widehat B_{\theta\lambda}
\widehat B_{\lambda\lambda}^{-1}
\widehat B_{\lambda\theta}
>0,
\]
that is, whenever the calibration contribution explains a nonzero component of the Cox contribution.
This completes the proof.

\subsection{Computation of the MEC-Cox Estimator via the Dual Newton Solver}
\label{subsec:dual_newton_solver}
This section describes the dual Newton solver used to compute the MEC-Cox calibrated weights, which are then used to construct the MEC-Cox estimator.

\paragraph{Step 1. Construction of the cross-fitted predictor basis.}
Let \(\kappa(i)\in\{1,\ldots,K\}\) denote the fold index such that \(i\in\mathcal J^{(\kappa(i))}\). For each subject, define the cross-fitted calibration-feature vector
\begin{align}
\label{eq:predictor_basis}
\widehat h_i
=
\left(
1,
\widehat S_0^{(-\kappa(i))}(t_1\mid X_i),
\ldots,
\widehat S_0^{(-\kappa(i))}(t_{p-1}\mid X_i)
\right)
\in\mathbb R^{p}.
\end{align}

The non-intercept components of the predictor basis in \eqref{eq:predictor_basis} are obtained from a cross-fitted control-survival learner. Specifically, for each fold \(k\), the learner is trained using only the external-control subjects in the training set, \(\mathcal I_0\cap\mathcal J^{(-k)}\), and is then evaluated for all validation subjects \(i\in\mathcal J^{(k)}\). Thus, \(\widehat S_0^{(-\kappa(i))}(t_\ell\mid X_i)\) is evaluated out of sample for subject \(i\).

We consider two implementations of the control-survival learner:
\begin{enumerate}[leftmargin=1.5em, label={}, itemsep=0.5em]
    \item \textbf{\textit{Cox proportional hazards regression \citep{cox1972regression,cox1975partial}.}}
    The first implementation fits a Cox proportional hazards working model among the external-control subjects in the training fold,
    \[
    \lambda_0^{(-k)}(t\mid X)
    =
    \lambda_{00}^{(-k)}(t)\exp\{X^\top\beta_0^{(-k)}\},
    \qquad
    i\in \mathcal I_0\cap\mathcal J^{(-k)}.
    \]
    Let \(\widehat\beta_0^{(-k)}\) and \(\widehat\Lambda_{00}^{(-k)}(t)\) denote the fitted regression coefficient and baseline cumulative hazard estimator, respectively. The predicted control-survival probability for a validation subject \(i\in\mathcal J^{(k)}\) is then
    \[
    \widehat S_{0,\mathrm{Cox}}^{(-k)}(t_\ell\mid X_i)
    =
    \exp\left[
    -\widehat\Lambda_{00}^{(-k)}(t_\ell)
    \exp\{X_i^\top\widehat\beta_0^{(-k)}\}
    \right],
    \qquad
    \ell=1,\ldots,L.
    \]

    \item \textbf{\textit{Random survival forest \citep{ishwaran2008random}.}} The second implementation fits a random survival forest among the external-control subjects in the training fold. For each tree \(b=1,\ldots,B\), let \(\widehat S_{0b}^{(-k)}(t\mid X_i)\) denote the terminal-node survival estimate for a validation subject with covariates \(X_i\). The random-survival-forest prediction is
    \[
    \widehat S_{0,\mathrm{RSF}}^{(-k)}(t_\ell\mid X_i)
    =
    \frac{1}{B}
    \sum_{b=1}^{B}
    \widehat S_{0b}^{(-k)}(t_\ell\mid X_i),
    \qquad
    \ell=1,\ldots,L.
    \]
    This provides a flexible, nonparametric estimate of the conditional control survival function \(S_0(t\mid X_i)\), evaluated at the landmark times \(t_1,\ldots,t_{p-1}\).
\end{enumerate}
In both cases, the resulting predictor basis summarizes the estimated counterfactual control-survival profile for each subject and is used only as a calibration feature.

\paragraph{Step 2. Baseline ATT odds weights.}
We next construct the baseline weights around which the MEC calibration is performed. For each fold \(k=1,\ldots,K\), estimate the source propensity score
\[
\pi(X_i)=\Pr(A_i=1\mid X_i)
\]
using the pooled training sample \(\mathcal J^{(-k)}\). Denote the resulting fold-specific estimator by \(\widehat\pi^{(-k)}(X)\). For each validation subject \(i\in\mathcal J^{(k)}\), define the cross-fitted propensity-score estimate
\begin{align}
\label{eq:pi_i_cross_fitted}
\widehat\pi_i
=
\widehat\pi^{(-k)}(X_i),    
\end{align}
possibly after truncation to avoid extreme values. For external-control subjects \(i\in\mathcal I_0\), define the corresponding estimated ATT odds weight by
\[
\widehat q_i
=
\frac{\widehat\pi_i}{1-\widehat\pi_i}.
\]
We then normalize these external-control weights to have the same total mass as the treated trial cohort. Specifically, with \(n_1=|\mathcal I_1|\), define
\[
\widehat d_i
=
\frac{n_1\widehat q_i}
{\sum_{j\in\mathcal I_0}\widehat q_j},
\qquad i\in\mathcal I_0.
\]
Then
\[
\sum_{i\in\mathcal I_0}\widehat d_i=n_1,
\]
so the baseline external-control weights are normalized to the size of the treated trial cohort. These normalized ATT odds weights serve as the baseline weights in the subsequent Bregman calibration step.

We consider two broad implementations of the source propensity-score learner \eqref{eq:pi_i_cross_fitted}:
\begin{enumerate}[leftmargin=1.5em, label={}, itemsep=0.5em]
    \item \textbf{\textit{Logistic regression.}} The first implementation uses a logistic regression model fitted on the pooled training sample, $\pi_\gamma(X_i)
    =
    \Pr(A_i=1\mid X_i;\gamma)
    =\exp(\gamma^\top \widetilde X_i)/\{1+\exp(\gamma^\top \widetilde X_i)\},$ 
    where \(\widetilde X_i\) includes an intercept and selected baseline covariates. Let \(\widehat\gamma^{(-k)}\) denote the estimator obtained using subjects in \(\mathcal J^{(-k)}\). Then, for \(i\in\mathcal J^{(k)}\),
    \[
    \widehat\pi^{(-k)}_{\mathrm{glm}}(X_i)
    =
    \frac{\exp\{(\widehat\gamma^{(-k)})^\top \widetilde X_i\}}
    {1+\exp\{(\widehat\gamma^{(-k)})^\top \widetilde X_i\}}.
    \]

    \item \textbf{\textit{Flexible machine-learning classifier.}} The second implementation estimates \(\pi(X_i)\) using a flexible binary classifier trained on the pooled training sample \(\mathcal J^{(-k)}\), such as DL \citep{lecun2015deep}, random forest \citep{breiman2001random}, or BART \citep{chipman2010bart}. The cross-fitted estimate for \(i\in\mathcal J^{(k)}\) is
    \[
    \widehat\pi^{(-k)}_{\mathrm{ML}}(X_i)
    =
    \widehat \Pr(A_i=1\mid X_i).
    \]
\end{enumerate}
In both implementations, the propensity-score learner is trained on the pooled training sample and evaluated on the held-out validation fold, so that \(\widehat\pi_i\) is computed out of sample for each subject.

\paragraph{Step 3. Formulation of the primal problem.}
Write the index sets as
\[
\mathcal I_0=\{i_1^0,\ldots,i_{n_0}^0\},
\qquad
\mathcal I_1=\{i_1^1,\ldots,i_{n_1}^1\},
\]
where \(i_m^0\) denotes the index of the \(m\)th external-control subject and \(i_m^1\) denotes the index of the \(m\)th treated trial subject.

Stacking the vectors \(\widehat h_i\) \eqref{eq:predictor_basis} obtained in Step 1 within each cohort in a row, define the calibration-feature matrices
\[
H_0
=
\begin{pmatrix}
\widehat h_{i_1^0}\\
\vdots\\
\widehat h_{i_{n_0}^0}
\end{pmatrix}
=
\begin{pmatrix}
1 & \widehat S_0^{(-\kappa(i_1^0))}(t_1\mid X_{i_1^0}) & \cdots & \widehat S_0^{(-\kappa(i_1^0))}(t_{p-1}\mid X_{i_1^0})\\
\vdots & \vdots & \ddots & \vdots\\
1 & \widehat S_0^{(-\kappa(i_{n_0}^0))}(t_1\mid X_{i_{n_0}^0}) & \cdots & \widehat S_0^{(-\kappa(i_{n_0}^0))}(t_{p-1}\mid X_{i_{n_0}^0})
\end{pmatrix}
\in\mathbb R^{n_0\times p},
\]
and
\[
H_1
=
\begin{pmatrix}
\widehat h_{i_1^1}\\
\vdots\\
\widehat h_{i_{n_1}^1}
\end{pmatrix}
=
\begin{pmatrix}
1 & \widehat S_0^{(-\kappa(i_1^1))}(t_1\mid X_{i_1^1}) & \cdots & \widehat S_0^{(-\kappa(i_1^1))}(t_{p-1}\mid X_{i_1^1})\\
\vdots & \vdots & \ddots & \vdots\\
1 & \widehat S_0^{(-\kappa(i_{n_1}^1))}(t_1\mid X_{i_{n_1}^1}) & \cdots & \widehat S_0^{(-\kappa(i_{n_1}^1))}(t_{p-1}\mid X_{i_{n_1}^1})
\end{pmatrix}
\in\mathbb R^{n_1\times p}.
\]
Here \(H_0\) contains the cross-fitted control-survival features for external-control subjects, whereas \(H_1\) contains the same features evaluated at the covariate values of treated trial subjects.

The target calibration total is then expressed as the \(p\)-dimensional vector
\[
T_H
=
H_1^\top \mathbf 1_{n_1}
=
\sum_{i\in\mathcal I_1}\widehat h_i
\in \mathbb{R}^{p}.
\]

For \(i\in\mathcal I_0\), let \(\widehat d_i\) denote the normalized baseline ATT odds weight,
\[
\widehat d_i
=
\frac{n_1\widehat q_i}
{\sum_{j\in\mathcal I_0}\widehat q_j},
\qquad
\widehat q_i
=
\frac{\widehat\pi_i}{1-\widehat\pi_i}
=
\frac{\widehat\pi^{(-k)}(X_i)}{1-\widehat\pi^{(-k)}(X_i)},
\]
where $\widehat\pi_i$ is obtained in Step 2.

The MEC-Cox calibrated weights are obtained by the Bregman projection
\[
\widehat w
=
\arg\min_{\{w_i>0:i\in\mathcal I_0\}}
\sum_{i\in\mathcal I_0}
D_G(w_i\|\widehat d_i)
\]
subject to
\[
H_0^\top w
=
H_1^\top\mathbf 1_{n_1}
=
T_H\in \mathbb{R}^{p},
\]
where
\[
D_G(w_i\|\widehat d_i)
=
G(w_i)-G(\widehat d_i)-g(\widehat d_i)(w_i-\widehat d_i),
\qquad
g=G',
\]
and \(G\) is a strictly convex generator. 

Directly solving this primal problem would require optimizing over \(n_0\) external-control weights. Instead, MEC-Cox solves the equivalent low-dimensional dual problem, whose dimension is only \(p\), the number of calibration features.

\paragraph{Step 4. Conversion to the dual problem.}
We next convert the constrained primal problem into a low-dimensional dual problem. Recall that the MEC-Cox calibration problem is
\[
\widehat w
=
\arg\min_{\{w_i>0:i\in\mathcal I_0\}}
\sum_{i\in\mathcal I_0}
D_G(w_i\|\widehat d_i)
\quad
\text{subject to}
\quad
H_0^\top w=T_H \in \mathbb{R}^{p}.
\]

Let \(\lambda\in\mathbb R^p\) denote the Lagrange multiplier associated with the calibration constraint. Define the Lagrangian
\[
\mathcal L(w,\lambda)
=
\sum_{i\in\mathcal I_0}
D_G(w_i\|\widehat d_i)
-
\lambda^\top
\left(
H_0^\top w
-
T_H
\right).
\]
Because
\[
D_G(w_i\|\widehat d_i)
=
G(w_i)-G(\widehat d_i)-g(\widehat d_i)(w_i-\widehat d_i),
\qquad g=G',
\]
we have
\[
\frac{\partial}{\partial w_i}
D_G(w_i\|\widehat d_i)
=
g(w_i)-g(\widehat d_i).
\]
Therefore, the stationarity condition of the Karush--Kuhn--Tucker equations is
\[
\frac{\partial \mathcal L(w,\lambda)}{\partial w_i}
=
g(w_i)-g(\widehat d_i)-\lambda^\top\widehat h_i
=
0,
\qquad i\in\mathcal I_0.
\]
Equivalently,
\begin{align}
\label{eq:g_eq_1}
g(w_i)
=
g(\widehat d_i)+\lambda^\top\widehat h_i.    
\end{align}
Since \(G\) is strictly convex, \(g=G'\) is strictly increasing and hence invertible on its domain. Thus, for any fixed \(\lambda\), the primal weight satisfying the stationarity condition is
\[
w_i(\lambda)
=
g^{-1}\{g(\widehat d_i)+\lambda^\top\widehat h_i\},
\qquad i\in\mathcal I_0.
\]

The remaining KKT condition is primal feasibility 
\[
H_0^\top w(\lambda)=T_H.
\]
Substituting the dual representation \(w_i(\lambda)\) into the calibration constraint yields the dual estimating equation
\begin{align}
\label{eq:dual estimating equation}
F(\lambda)
=
\sum_{i\in\mathcal I_0}
w_i(\lambda)\widehat h_i
-
T_H
=
0.    
\end{align}
Equivalently,
\[
F(\lambda)
=
\sum_{i\in\mathcal I_0}
\widehat h_i
g^{-1}\{g(\widehat d_i)+\lambda^\top\widehat h_i\}
-
\sum_{i\in\mathcal I_1}\widehat h_i
=
0,
\]
or, in matrix form,
\[
F(\lambda)
=
H_0^\top w(\lambda)-T_H.
\]
Thus, instead of optimizing over the \(n_0\)-dimensional vector \(w\), MEC-Cox solves the \(p\)-dimensional nonlinear equation \(F(\lambda)=0\). 

It is important to note that this dual estimating equation \eqref{eq:dual estimating equation} will later be used as the calibration component of the stacked estimating-equation system for sandwich variance estimation for the MEC-Cox estimator.

\paragraph{Step 5. Dual Newton solver.}
The Newton update is based on the Jacobian of \(F(\lambda)\) \eqref{eq:dual estimating equation}. We view the dual calibration map as
\[
F:\mathcal D_{\lambda}\subseteq \mathbb R^{p}\longrightarrow \mathbb R^{p},
\]
where
\[
\mathcal D_{\lambda}
=
\left\{
\lambda\in\mathbb R^{p}:
g(\widehat d_i)+\lambda^\top \widehat h_i
\in \operatorname{dom}(g^{-1})
\ \text{for all } i\in\mathcal I_0
\right\}.
\]
That is, the domain of \(F\) consists of all dual parameters \(\lambda\) for which the implied weights $w_i(\lambda)
=
g^{-1}\{g(\widehat d_i)+\lambda^\top \widehat h_i\},$ $(i\in\mathcal I_0)$ are well-defined. 

By \eqref{eq:g_eq_1}, let
\[
\nu_i(\lambda)
=
g(\widehat d_i)+\lambda^\top\widehat h_i,
\qquad
w_i(\lambda)=g^{-1}\{\nu_i(\lambda)\}.
\]
Since
\[
\frac{\partial w_i(\lambda)}{\partial \nu_i}
=
\frac{1}{g'\{w_i(\lambda)\}},
\]
the Jacobian of \(F(\lambda)\) is
\[
J(\lambda)
=
\frac{\partial F(\lambda)}{\partial \lambda^\top}
=
\sum_{i\in\mathcal I_0}
\frac{1}{g'\{w_i(\lambda)\}}
\widehat h_i\widehat h_i^\top \in \mathbb{R}^{p \times p}.
\]
In matrix form,
\[
J(\lambda)
=
H_0^\top
\operatorname{diag}
\left[
\frac{1}{g'\{w_i(\lambda)\}}:i\in\mathcal I_0
\right]
H_0 \in \mathbb{R}^{p \times p}.
\]
At iteration \(m\), the Newton direction \(\Delta_m\) solves
\[
\{J(\lambda_m)+\rho I_{p}\}\Delta_m
=
F(\lambda_m),
\]
where \(\rho>0\) is a small ridge constant used for numerical stability. The dual parameter is updated by
\[
\lambda_{m+1}
=
\lambda_m-\alpha_m\Delta_m=
\lambda_m-\alpha_m \{J(\lambda_m)+\rho I_{p}\}^{-1}F(\lambda_m),
\]
where \(\alpha_m\in(0,1]\) is chosen by step-halving if necessary to keep all dual  arguments $\nu_i(\lambda_{m+1})
=
g(\widehat d_i)+\lambda_{m+1}^\top\widehat h_i$ inside the domain of \(g^{-1}\). The iteration stops when the calibration residual is sufficiently small, for example when
\[
\|F(\lambda_m)\|_2
=
\left\|
H_0^\top w(\lambda_m)-T_H
\right\|_2
\le \varepsilon
\]
for a prespecified tolerance \(\varepsilon\).

The ridge constant \(\rho\), the step-halving rule, and the stopping tolerance \(\varepsilon\) are user-specified numerical choices. As a default, we use \(\rho=10^{-8}\), initialize \(\alpha_m=1\), and apply step-halving only when necessary to keep the dual argument within the domain of \(g^{-1}\). The stopping tolerance is \(\varepsilon=10^{-10}\), with at most 100 Newton iterations.

\paragraph{Step 6. Final MEC-Cox estimator.}
After convergence of the dual Newton solver, let $\widehat\lambda$ denote the resulting dual
solution and define the calibrated external-control weights by
\[
\widehat w_i
=
g^{-1}\{g(\widehat d_i)+\widehat\lambda^\top \widehat h_i\},
\qquad i\in \mathcal I_0 .
\]
The final MEC-Cox estimating-equation weight is
\[
\widetilde\omega_i
=
A_i + (1-A_i)\widehat w_i,
\qquad i=1,\ldots,n.
\]
Thus, treated trial patients retain weight one, whereas external-control patients receive the MEC-updated ATT weights.

Using these final weights, define
\[
S_{\rm MEC, \omega}^{(r)}(t;\theta)
=
\sum_{i=1}^n
\widetilde\omega_i
\mathcal{Y}_i(t)\exp(\theta A_i)A_i^r,
\qquad r=0,1,
\]
and
\[
\overline A_{\rm MEC, \omega}(t;\theta)
=
\frac{S_{\rm MEC, \omega}^{(1)}(t;\theta)}
     {S_{\rm MEC, \omega}^{(0)}(t;\theta)} .
\]
The MEC-Cox estimator is the solution to the weighted Cox estimating equation
\[
U_n^{\rm MEC}(\theta)
=
\sum_{i=1}^n
\int
\widetilde\omega_i
\{A_i-\overline A_{\rm MEC, \omega}(t;\theta)\}
\,d\mathcal{N}_i(t)
=
0.
\]
Equivalently, $\widehat\theta_{\rm MEC}$ maximizes the weighted Cox partial log-likelihood
\[
\ell_{\rm MEC}(\theta)
=
\sum_{i=1}^n
\widetilde\omega_i
\int
\left[
\theta A_i
-
\log\{S_{\rm MEC, \omega}^{(0)}(t;\theta)\}
\right]
\,d\mathcal{N}_i(t).
\]
The final MEC-Cox hazard-ratio estimator is therefore
\[
\widehat{\rm HR}_{\rm MEC}
=
\exp(\widehat\theta_{\rm MEC}).
\]

\paragraph{Computational algorithm.} The computational procedure for the MEC-Cox estimator is summarized in Algorithm~\ref{alg:mec_cox}. 

\begin{algorithm}[H]
\caption{Computational algorithm for the MEC-Cox estimator}
\label{alg:mec_cox}
\begin{algorithmic}[1]
\State Partition the pooled sample into \(K\) folds, stratified by the source indicator \(A_i\).
\State For each validation subject \(i\in\mathcal J^{(k)}\), compute cross-fitted nuisance estimates
\[
\widehat\pi_i=\widehat\pi^{(-k)}(X_i),
\qquad
\widehat h_i
=
\left(1,\widehat S_0^{(-k)}(t_1\mid X_i),\ldots,
\widehat S_0^{(-k)}(t_{p-1}\mid X_i)\right),
\]
where \(\widehat\pi^{(-k)}\) is trained on \(\mathcal J^{(-k)}\), and \(\widehat S_0^{(-k)}\) is trained using external controls in \(\mathcal I_0\cap\mathcal J^{(-k)}\).
\State For \(i\in\mathcal I_0\), define the normalized baseline ATT odds weight
\[
\widehat d_i
=
\frac{n_1\widehat q_i}{\sum_{j\in\mathcal I_0}\widehat q_j},
\qquad
\widehat q_i=\frac{\widehat\pi_i}{1-\widehat\pi_i}.
\]
\State Solve the dual calibration equation
\begin{align}
\nonumber
F(\lambda)
=
\sum_{i\in\mathcal I_0}
\widehat h_i
g^{-1}\{g(\widehat d_i)+\lambda^\top\widehat h_i\}
-
\sum_{i\in\mathcal I_1}\widehat h_i
=
0    
\end{align}
by a damped Newton solver, and denote the solution by \(\widehat\lambda\).
\State Recover the MEC-updated external-control weights and the final
estimating-equation weights:
\[
\widehat w_i
=
g^{-1}\{g(\widehat d_i)+\widehat\lambda^\top\widehat h_i\},
\qquad
\widetilde\omega_i
=
\omega_i(\widehat\lambda)=
A_i+(1-A_i)\widehat w_i.
\]
\State Report \(\widehat{HR}_{\mathrm{MEC}}=\exp(\widehat\theta_{\mathrm{MEC}})\)
by solving
\[
U_n^{\mathrm{MEC}}(\theta)
=
\sum_{i=1}^n
\int
\widetilde\omega_i
\{A_i-\overline A_{\rm MEC,\omega}(t;\theta)\}
\,d\mathcal{N}_i(t)
=
0.
\]
\end{algorithmic}
\end{algorithm}

The immediate computational benefit of MEC-Cox is that the calibration step is carried out in the low-dimensional dual space rather than over the \(n_0\)-dimensional primal weight vector. Specifically, the dual Newton solver operates on the \(p\)-dimensional parameter \(\lambda\), where \(p\) is the
dimension of the calibration basis, including the intercept. Thus, it does not depend on the original covariate dimension or the external-control sample size. At each iteration, the solver forms the \(p\times p\) Jacobian
\[
J(\lambda)
=
\frac{\partial F(\lambda)}{\partial \lambda^\top}
=
\sum_{i\in\mathcal I_0}
\frac{1}{g'\{w_i(\lambda)\}}
\widehat h_i\widehat h_i^\top \in \mathbb{R}^{p \times p}
\]
which requires \(O(n_0p^2)\) operations to assemble and \(O(p^3)\) operations to solve for the Newton step. Since \(p\) is typically small, this cost is negligible even for large external-control cohorts with \(n_0>1000\). Consequently, MEC-Cox is computationally fast and numerically stable, while still allowing flexible machine-learning methods to be used in constructing the survival-based calibration features.

\subsection{Derivation of the Lin--Wei/Binder Empirical Cox Contribution in MEC-Cox Variance Estimation}
\label{subsec:lin_wei_binder_empirical_contribution}
\paragraph{Overview.}
We derive the empirical Cox contribution used in the MEC-Cox stacked sandwich
variance estimator. Conditional on the calibration parameter \(\lambda\), the
calibrated weights \(\widetilde\omega_i(\lambda)\) are treated as fixed. Under
this fixed-weight view, the relevant Cox contribution is the Lin--Wei/Binder
subject-level robust contribution, adapted to the MEC-Cox weighted risk sets.
Our derivation is included to make this contribution explicit in the present
notation: we perturb the contribution of one subject at a time in the weighted
Cox score, including both that subject's event contribution and its effect on the
weighted risk-set averages. This yields the finite-sample subject-level quantity
\(\eta_i(\theta,\lambda)\) used in the stacked sandwich variance estimator. The resulting contribution is therefore the Lin--Wei/Binder robust Cox
contribution \citep{lin1989robust,binder1992fitting} computed conditionally on the fixed dual parameter \(\lambda\).

\paragraph{Basic setup and notations.}
For a fixed \(\lambda\), define the calibrated external-control weight
\[
w_i(\lambda)
=
g^{-1}\{g(\widehat d_i)+\lambda^\top\widehat h_i\},
\qquad i\in\mathcal I_0,
\]
and the corresponding full MEC-Cox estimating-equation weight
\[
\widetilde\omega_i(\lambda)
=
\begin{cases}
1, & i\in\mathcal I_1,\\
w_i(\lambda), & i\in\mathcal I_0.
\end{cases}
\]

For this fixed \(\lambda\), define the weighted risk-set sums
\begin{align}
\label{eq:weighted_risk_set_sums}
S_\lambda^{(r)}(t;\theta)
=
\sum_{i=1}^n
\widetilde\omega_i(\lambda)
\mathcal{Y}_i(t)\exp(\theta A_i)A_i^r,
\qquad r=0,1,    
\end{align}
and the weighted risk-set treatment average
\[
\overline A_{\rm MEC,\omega}(t;\theta,\lambda)
=
\frac{
S_\lambda^{(1)}(t;\theta)
}{
S_\lambda^{(0)}(t;\theta)
}.
\]
The MEC-weighted Cox score is
\begin{align}
\label{eq:supp_mec_cox_score_fixed_lambda}
U_{\rm MEC, \theta}(\theta,\lambda)
&=
\sum_{i=1}^n
\int
\widetilde\omega_i(\lambda)
\{A_i-\overline A_{\rm MEC,\omega}(t;\theta,\lambda)\}
\,d\mathcal{N}_i(t).
\end{align}

Note that \(U_{\rm MEC,\theta}(\theta,\lambda)\) \eqref{eq:supp_mec_cox_score_fixed_lambda} is not an additive
estimating equation in the usual sense. Although it is written as a sum over
subjects, the risk-set average
\(\overline A_{\rm MEC,\omega}(t;\theta,\lambda)\) depends on the full
weighted risk set at time \(t\). Hence, the raw summand
\[
\int
\widetilde\omega_i(\lambda)
\{A_i-\overline A_{\rm MEC,\omega}(t;\theta,\lambda)\}
\,d\mathcal N_i(t)
\]
is not, by itself, the appropriate subject-level contribution for sandwich
variance estimation. Instead, we need the subject-level contribution,
which accounts for both the direct event contribution of subject \(i\) and the
indirect effect of subject \(i\) on the weighted risk-set averages. To derive
this contribution, we use an infinitesimal perturbation that upweights subject
\(i\) in the empirical score while keeping the fixed estimating-equation weights
\(\widetilde\omega_j(\lambda)\), \(j=1,\ldots,n\), unchanged.

For notational simplicity in the derivation below, suppress the dependence on \((\theta,\lambda)\) and write
\begin{align}
\label{eq:notation_1}
\omega_i=\widetilde\omega_i(\lambda),
\qquad
\overline A(t)=\overline A_{\rm MEC,\omega}(t;\theta,\lambda),
\qquad
S^{(r)}(t)=S_\lambda^{(r)}(t;\theta),\quad r=0,1.    
\end{align}

Define the weighted treatment-event and weighted event increments by
\begin{align}
\label{eq:notation_2}
d\mathcal{N}_{\omega,A}(t)
=
\sum_{i=1}^n \omega_i A_i\,d\mathcal{N}_i(t),
\qquad
d\mathcal{N}_{\omega}(t)
=
\sum_{i=1}^n \omega_i\,d\mathcal{N}_i(t).    
\end{align}
Then the MEC-weighted Cox score in \eqref{eq:supp_mec_cox_score_fixed_lambda} can be rewritten as
\begin{align}
U_{\rm MEC, \theta}(\theta,\lambda)
&=
\sum_{i=1}^n
\int
\omega_i
\{A_i-\overline A(t)\}
\,d\mathcal{N}_i(t)
=
\int
\sum_{i=1}^n
\omega_i A_i\,d\mathcal{N}_i(t)
-
\int
\overline A(t)
\sum_{i=1}^n
\omega_i\,d\mathcal{N}_i(t)
\nonumber\\
\label{eq:MEC_cox_score_simple}
&=
\int d\mathcal{N}_{\omega,A}(t)
-
\int \overline A(t)\,d\mathcal{N}_{\omega}(t).
\end{align}

\paragraph{Basic algebraic idea (general).}
We first describe the one-subject perturbation device used to extract a
finite-sample subject-level linearized contribution. Consider a simple additive
quantity
\begin{align}
\label{eq:empirical_sum}
L=\sum_{j=1}^n z_j,
\end{align}
where \(z_j\) denotes the contribution of subject \(j\). Suppose we are
interested in the contribution of a particular subject
\(i^\star\in\{1,\ldots,n\}\). Perturbing subject \(i^\star\) by an infinitesimal
amount \(\varepsilon\) gives
\[
L^{(\varepsilon;i^\star)}
=
L+\varepsilon z_{i^\star},
\]
and hence
\[
\left.
\frac{d}{d\varepsilon}
L^{(\varepsilon;i^\star)}
\right|_{\varepsilon=0}
=
z_{i^\star}.
\]
Thus, for a simple additive empirical quantity such as
\eqref{eq:empirical_sum}, the subject-level contribution is obtained directly
by differentiating the perturbed quantity. Note that, \eqref{eq:empirical_sum} can be expressed as
\[
L
=
\sum_{j=1}^n
\left.
\frac{d}{d\varepsilon}
L^{(\varepsilon;j)}
\right|_{\varepsilon=0}.
\]

More generally, suppose an estimating equation can be written as a smooth
function of several empirical quantities,
\[
T
=
H(L_1,\ldots,L_m),
\qquad
L_\ell=\sum_{j=1}^n z_{\ell j},
\quad \ell=1,\ldots,m.
\]
Assume that \(H\) is homogeneous of degree one in its arguments; that is,
\begin{align}
\label{eq:homogeneous_of_degree_one}
H(cL_1,\ldots,cL_m)
=
cH(L_1,\ldots,L_m),
\qquad c>0.
\end{align}
Perturbing the contribution of subject \(i^\star\) to each empirical quantity
\(L_\ell\) gives
\[
L_\ell^{(\varepsilon;i^\star)}
=
L_\ell+\varepsilon z_{\ell i^\star},
\qquad \ell=1,\ldots,m.
\]
Therefore, the perturbed version of \(T\) is
\[
T^{(\varepsilon;i^\star)}
=
H\{L_1^{(\varepsilon;i^\star)},\ldots,
L_m^{(\varepsilon;i^\star)}\}.
\]
The subject-level linearized contribution of subject \(i^\star\) to \(T\) is
\[
\eta_{i^\star}
:=
\left.
\frac{d}{d\varepsilon}
T^{(\varepsilon;i^\star)}
\right|_{\varepsilon=0}
=
\sum_{\ell=1}^m
\frac{\partial H}{\partial L_\ell}
(L_1,\ldots,L_m)
z_{\ell i^\star},
\]
where the last equality follows from the chain rule. Summing over subjects gives
\begin{align*}
\sum_{i=1}^n \eta_i
&=
\sum_{i=1}^n
\sum_{\ell=1}^m
\frac{\partial H}{\partial L_\ell}
(L_1,\ldots,L_m)
z_{\ell i} =
\sum_{\ell=1}^m
\frac{\partial H}{\partial L_\ell}
(L_1,\ldots,L_m)
\sum_{i=1}^n z_{\ell i} \\
&=
\sum_{\ell=1}^m
\frac{\partial H}{\partial L_\ell}
(L_1,\ldots,L_m)
L_\ell =
H(L_1,\ldots,L_m)
=
T.
\end{align*}
Here, the first equality substitutes the chain-rule expression for \(\eta_i\). The
second equality rearranges the finite sums. The third equality uses
\(L_\ell=\sum_{i=1}^n z_{\ell i}\). The fourth equality follows from the
degree-one homogeneity property in \eqref{eq:homogeneous_of_degree_one}. More
specifically, differentiating both sides of
\eqref{eq:homogeneous_of_degree_one} with respect to \(c\) gives
\[
\sum_{\ell=1}^m
\frac{\partial H}{\partial L_\ell}
(cL_1,\ldots,cL_m)
L_\ell
=
H(L_1,\ldots,L_m).
\]
Evaluating this identity at \(c=1\) yields
\[
\sum_{\ell=1}^m
\frac{\partial H}{\partial L_\ell}
(L_1,\ldots,L_m)
L_\ell
=
H(L_1,\ldots,L_m).
\]
Thus, when the empirical functional is homogeneous of degree one in the
empirical quantities being perturbed, the subject-level linearized
contributions provide an exact finite-sample decomposition:
\[
T
=
\sum_{i=1}^n \eta_i
=
\sum_{i=1}^n
\left.
\frac{d}{d\varepsilon}
T^{(\varepsilon;i)}
\right|_{\varepsilon=0}.
\]

The MEC-weighted Cox score in \eqref{eq:MEC_cox_score_simple} has the
degree-one homogeneity property. Seeing the analytical form of the score function,
\begin{align*}
U_{\rm MEC,\theta}(\theta,\lambda)
&=
H\left(d\mathcal N_{\omega,A}(t), 
d\mathcal N_{\omega}(t),
S^{(1)}(t),
S^{(0)}(t)
\right)=
\int d\mathcal N_{\omega,A}(t)
-
\int
\frac{S^{(1)}(t)}{S^{(0)}(t)}
\,d\mathcal N_{\omega}(t),
\end{align*}
it is notable that the score is a function of the empirical quantities
\(d\mathcal N_{\omega,A}(t)\), \(d\mathcal N_{\omega}(t)\),
\(S^{(1)}(t)\), and \(S^{(0)}(t)\). If all these empirical quantities are
multiplied by the same constant \(c>0\), then
\[
c\,d\mathcal N_{\omega,A}(t)
-
\frac{cS^{(1)}(t)}{cS^{(0)}(t)}
\,c\,d\mathcal N_{\omega}(t)
=
c\left\{
d\mathcal N_{\omega,A}(t)
-
\frac{S^{(1)}(t)}{S^{(0)}(t)}
\,d\mathcal N_{\omega}(t)
\right\}.
\]
Therefore, the full MEC-weighted Cox score is homogeneous of degree one in the
empirical quantities being perturbed. The nonlinearity enters only through the
risk-set average, which is a ratio of two weighted risk-set sums and hence is
homogeneous of degree zero.

Consequently, using the one-subject perturbation technique described above, we
define the subject-level linearized contribution in the direction of subject
\(i^\star\) as
\[
\eta_{i^\star}(\theta,\lambda)
=
\left.
\frac{d}{d\varepsilon}
U_{{\rm MEC},\theta}^{(\varepsilon;i^\star)}(\theta,\lambda)
\right|_{\varepsilon=0}.
\]
Because \(i^\star\) is arbitrary, this construction yields one contribution
\(\eta_i(\theta,\lambda)\) for each subject \(i=1,\ldots,n\). By the
degree-one homogeneity property of the MEC-weighted Cox score, these
contributions satisfy the finite-sample decomposition
\[
U_{\rm MEC,\theta}(\theta,\lambda)
=
\sum_{i=1}^n \eta_i(\theta,\lambda)
=
\sum_{i=1}^n
\left.
\frac{d}{d\varepsilon}
U_{{\rm MEC},\theta}^{(\varepsilon;i)}(\theta,\lambda)
\right|_{\varepsilon=0}.
\]
This identity shows that \(\eta_i(\theta,\lambda)\) provides a valid
subject-level decomposition of the full MEC-weighted Cox score. Unlike the raw
event summand in \eqref{eq:supp_mec_cox_score_fixed_lambda}, \(\eta_i(\theta,\lambda)\) accounts for both the direct event
contribution of subject \(i\) and the indirect effect of subject \(i\) on the
weighted risk-set averages.
\paragraph{Applying the basic idea to the MEC-weighted Cox score functional.}

Apply this perturbation first to the weighted treatment-event increment
\(d\mathcal{N}_{\omega,A}(t)\) in
\eqref{eq:MEC_cox_score_simple}. Under the perturbation in the direction of
subject \(i^\star\),
\begin{align}
\label{eq:dN_omega,A}
d\mathcal{N}_{\omega,A}^{(\varepsilon;i^\star)}(t)
&=
\sum_{j=1}^n \omega_j A_j\,d\mathcal{N}_j(t)
+
\varepsilon \omega_{i^\star} A_{i^\star}\,
d\mathcal{N}_{i^\star}(t) =
d\mathcal{N}_{\omega,A}(t)
+
\varepsilon \omega_{i^\star} A_{i^\star}\,
d\mathcal{N}_{i^\star}(t).
\end{align}
Similarly, for the weighted event increment \(d\mathcal{N}_{\omega}(t)\) in
\eqref{eq:MEC_cox_score_simple},
\begin{align}
\label{eq:dN_omega}
d\mathcal{N}_{\omega}^{(\varepsilon;i^\star)}(t)
&=
\sum_{j=1}^n \omega_j\,d\mathcal{N}_j(t)
+
\varepsilon \omega_{i^\star}\,d\mathcal{N}_{i^\star}(t) =
d\mathcal{N}_{\omega}(t)
+
\varepsilon \omega_{i^\star}\,d\mathcal{N}_{i^\star}(t).
\end{align}

The same perturbation applies to the weighted risk-set sums
\eqref{eq:weighted_risk_set_sums}. Increasing only the empirical contribution
of subject \(i^\star\) gives
\begin{align*}
S^{(r,\varepsilon;i^\star)}(t)
&=
\sum_{j=1}^n
\omega_j\mathcal{Y}_j(t)\exp(\theta A_j)A_j^r
+
\varepsilon
\omega_{i^\star}\mathcal{Y}_{i^\star}(t)
\exp(\theta A_{i^\star})A_{i^\star}^r\\
&=
S^{(r)}(t)
+
\varepsilon
\omega_{i^\star}\mathcal{Y}_{i^\star}(t)
\exp(\theta A_{i^\star})A_{i^\star}^r,
\qquad r=0,1.
\end{align*}

Therefore, the perturbed risk-set average is
\[
\overline A^{(\varepsilon;i^\star)}(t)
=
\frac{S^{(1,\varepsilon;i^\star)}(t)}
{S^{(0,\varepsilon;i^\star)}(t)}.
\]

Differentiating this ratio at \(\varepsilon=0\) gives
\begin{align}
\dot{\overline A}_{i^\star}(t)
&=
\left.
\frac{\partial}{\partial\varepsilon}
\overline A^{(\varepsilon;i^\star)}(t)
\right|_{\varepsilon=0}
=
\left.
\frac{\partial}{\partial\varepsilon}
\frac{S^{(1,\varepsilon;i^\star)}(t)}
{S^{(0,\varepsilon;i^\star)}(t)}
\right|_{\varepsilon=0}
\nonumber\\
&=
\frac{
\dot S_{i^\star}^{(1)}(t)S^{(0)}(t)
-
S^{(1)}(t)\dot S_{i^\star}^{(0)}(t)
}{
\{S^{(0)}(t)\}^2
}
\nonumber\\
&=
\frac{
\omega_{i^\star}\mathcal{Y}_{i^\star}(t)
\exp(\theta A_{i^\star})A_{i^\star} S^{(0)}(t)
-
S^{(1)}(t)
\omega_{i^\star}\mathcal{Y}_{i^\star}(t)
\exp(\theta A_{i^\star})
}{
\{S^{(0)}(t)\}^2
}
\nonumber\\
&=
\frac{
\omega_{i^\star}\mathcal{Y}_{i^\star}(t)
\exp(\theta A_{i^\star})
\{A_{i^\star}-\overline A(t)\}
}{
S^{(0)}(t)
}.
\label{eq:A_dot}
\end{align}
This derivative measures how subject \(i^\star\)'s risk-set contribution changes
the weighted risk-set treatment average.

Now differentiate the perturbed score
\[
U_{\rm MEC,\theta}^{(\varepsilon;i^\star)}(\theta,\lambda)
=
\int d\mathcal{N}_{\omega,A}^{(\varepsilon;i^\star)}(t)
-
\int
\overline A^{(\varepsilon;i^\star)}(t)
\,d\mathcal{N}_{\omega}^{(\varepsilon;i^\star)}(t)
\]
with respect to \(\varepsilon\) at zero. Using
\eqref{eq:dN_omega,A}, \eqref{eq:dN_omega}, and the first-order expansion
\[
\overline A^{(\varepsilon;i^\star)}(t)
=
\overline A(t)
+
\varepsilon \dot{\overline A}_{i^\star}(t)
+
o(\varepsilon),
\]
we obtain
\begin{align*}
U_{\rm MEC,\theta}^{(\varepsilon;i^\star)}(\theta,\lambda)
&=
\int
\underbrace{\left\{d\mathcal{N}_{\omega,A}(t)
+
\varepsilon \omega_{i^\star} A_{i^\star}
\,d\mathcal{N}_{i^\star}(t)
\right\}}_{d\mathcal{N}_{\omega,A}^{(\varepsilon;i^\star)}(t)}
\\
&\quad -
\int
\underbrace{\left\{
\overline A(t)
+
\varepsilon \dot{\overline A}_{i^\star}(t)
+
o(\varepsilon)
\right\}}_{\overline A^{(\varepsilon;i^\star)}(t)}\
\,\,
\underbrace{\left\{
d\mathcal{N}_{\omega}(t)
+
\varepsilon \omega_{i^\star}
\,d\mathcal{N}_{i^\star}(t)
\right\}}_{d\mathcal{N}_{\omega}^{(\varepsilon;i^\star)}(t)}
.
\end{align*}

Keeping only the first-order terms in \(\varepsilon\), this becomes
\begin{align*}
U_{\rm MEC,\theta}^{(\varepsilon;i^\star)}(\theta,\lambda)
&=
\int d\mathcal{N}_{\omega,A}(t)
-
\int \overline A(t)\,d\mathcal{N}_{\omega}(t)
\\
&\quad
+
\varepsilon
\left[
\int \omega_{i^\star} A_{i^\star}\,
d\mathcal{N}_{i^\star}(t)
-
\int \overline A(t)\omega_{i^\star}\,
d\mathcal{N}_{i^\star}(t)
-
\int \dot{\overline A}_{i^\star}(t)\,
d\mathcal{N}_{\omega}(t)
\right]
+
o(\varepsilon).
\end{align*}
Here, second-order terms in \(\varepsilon\) are absorbed into
\(o(\varepsilon)\). In particular, the product
\(\varepsilon \dot{\overline A}_{i^\star}(t)
\cdot\varepsilon \omega_{i^\star}d\mathcal{N}_{i^\star}(t)\)
is of order \(O(\varepsilon^2)\) and is therefore omitted in the first-order
derivative calculation.

Therefore,
\begin{align}
\left.
\frac{\partial}{\partial\varepsilon}
U_{\rm MEC,\theta}^{(\varepsilon;i^\star)}(\theta,\lambda)
\right|_{\varepsilon=0}
&=
\int \omega_{i^\star} A_{i^\star}\,
d\mathcal{N}_{i^\star}(t)
-
\int \overline A(t)\omega_{i^\star}\,
d\mathcal{N}_{i^\star}(t)
-
\int \dot{\overline A}_{i^\star}(t)\,
d\mathcal{N}_{\omega}(t)
\nonumber\\
&=
\int
\omega_{i^\star}\{A_{i^\star}-\overline A(t)\}
\,d\mathcal{N}_{i^\star}(t)
-
\int
\dot{\overline A}_{i^\star}(t)
\,d\mathcal{N}_{\omega}(t).
\label{eq:supp_score_perturb_derivative}
\end{align}
The first term in \eqref{eq:supp_score_perturb_derivative} is the direct event
contribution of subject \(i^\star\):
\[
\int
\omega_{i^\star}\{A_{i^\star}-\overline A(t)\}
\,d\mathcal{N}_{i^\star}(t).
\]
The second term in \eqref{eq:supp_score_perturb_derivative} is the risk-set
compensation term. It appears because subject \(i^\star\) also changes the
weighted risk-set average \(\overline A(t)\) at every event time for which the
subject is still at risk. Substituting \eqref{eq:A_dot} into
\eqref{eq:supp_score_perturb_derivative} gives
\[
-
\int
\dot{\overline A}_{i^\star}(t)
\,d\mathcal{N}_{\omega}(t)
=
-
\int
\frac{
\omega_{i^\star}\mathcal{Y}_{i^\star}(t)
\exp(\theta A_{i^\star})
\{A_{i^\star}-\overline A(t)\}
}{
S^{(0)}(t)
}
\,d\mathcal{N}_{\omega}(t).
\]
Thus, the subject-level Lin--Wei/Binder empirical Cox contribution in the
direction of subject \(i^\star\) is
\[
\eta_{i^\star}(\theta,\lambda)
=
\int
\omega_{i^\star}\{A_{i^\star}-\overline A(t)\}
\,d\mathcal{N}_{i^\star}(t)
-
\int
\frac{
\omega_{i^\star}\mathcal{Y}_{i^\star}(t)\exp(\theta A_{i^\star})
\{A_{i^\star}-\overline A(t)\}
}{
S^{(0)}(t)
}
\,d\mathcal{N}_{\omega}(t).
\]

\paragraph{Conclusion.}
Since \(i^\star\) was arbitrary, we relabel \(i^\star\) as \(i\) in the final
expression. Restoring the dependence on \(\lambda\) from the simplified
notations \eqref{eq:notation_1}--\eqref{eq:notation_2}, with
\[
d\mathcal N_{\omega,\lambda}(t)
=
\sum_{j=1}^n
\widetilde\omega_j(\lambda)\,d\mathcal N_j(t)
\]
denoting the \(\lambda\)-dependent version of
\(d\mathcal N_{\omega}(t)\), we obtain
\begin{align}
\label{eq:supp_eta_mec}
\eta_i(\theta,\lambda)
&=
\int
\widetilde\omega_i(\lambda)
\{A_i-\overline A_{\rm MEC,\omega}(t;\theta,\lambda)\}
\,d\mathcal N_i(t)
\nonumber\\
&\quad -
\int
\frac{
\widetilde\omega_i(\lambda)\mathcal Y_i(t)\exp(\theta A_i)
\{A_i-\overline A_{\rm MEC,\omega}(t;\theta,\lambda)\}
}{
S_\lambda^{(0)}(t;\theta)
}
\,d\mathcal N_{\omega,\lambda}(t).
\end{align}

This is the fixed-weight Lin--Wei/Binder empirical Cox contribution evaluated
conditionally on the calibration parameter \(\lambda\). By the degree-one
homogeneity property of the score in \eqref{eq:MEC_cox_score_simple}, this
contribution satisfies
\[
\sum_{i=1}^n \eta_i(\theta,\lambda)
=
U_{{\rm MEC},\theta}(\theta,\lambda).
\]
Indeed, summing the first term in \eqref{eq:supp_eta_mec} over \(i\) gives
\(U_{{\rm MEC},\theta}(\theta,\lambda)\), while the second term sums to zero
because
\[
\sum_{i=1}^n
\frac{
\widetilde\omega_i(\lambda)\mathcal Y_i(t)\exp(\theta A_i)
\{A_i-\overline A_{\rm MEC,\omega}(t;\theta,\lambda)\}
}{
S_\lambda^{(0)}(t;\theta)
}
=
\frac{
S_\lambda^{(1)}(t;\theta)
-
\overline A_{\rm MEC,\omega}(t;\theta,\lambda)
S_\lambda^{(0)}(t;\theta)
}{
S_\lambda^{(0)}(t;\theta)
}
=
0.
\]
Thus, at the MEC-Cox solution and optimized dual parameter,
\[
\sum_{i=1}^n
\eta_i(\widehat\theta_{\rm MEC},\widehat\lambda)
=
U_{{\rm MEC},\theta}(\widehat\theta_{\rm MEC},\widehat\lambda)
=
0.
\]
Finally, in the MEC-Cox variance estimator, the empirical Cox contribution is
evaluated at the MEC-Cox solution:
\begin{align}
\widehat\eta_i
&=
\eta_i(\widehat\theta_{\rm MEC},\widehat\lambda)
\nonumber\\
&=
\int
\widetilde\omega_i(\widehat\lambda)
\{A_i-\overline A_{\rm MEC,\omega}(t;\widehat\theta_{\rm MEC},\widehat\lambda)\}
\,d\mathcal N_i(t)
\nonumber\\
\label{eq:MEC_Cox_LinWei_Binder_contribution_optimized}
&\quad -
\int
\frac{
\widetilde\omega_i(\widehat\lambda)\mathcal Y_i(t)
\exp(\widehat\theta_{\rm MEC} A_i)
\{A_i-\overline A_{\rm MEC,\omega}(t;\widehat\theta_{\rm MEC},\widehat\lambda)\}
}{
S_{\widehat\lambda}^{(0)}(t;\widehat\theta_{\rm MEC})
}
\,d\mathcal N_{\omega,\widehat\lambda}(t).
\end{align}
This represents the fixed-weight Lin--Wei/Binder empirical Cox contribution
evaluated at the MEC-Cox estimator \(\widehat\theta_{\rm MEC}\) and the estimated
calibration parameter \(\widehat\lambda\), and is the Cox block used in the
empirical meat of the stacked sandwich variance estimator.


\subsection{Oracle Prognostic Score Basis for MEC-Cox}
\label{subsec:Oracle prognostic score basis for MEC-Cox}

This section develops a theoretically ideal calibration basis for MEC-Cox. Let \(T^0\) denote the counterfactual event time under control. Following the
prognostic-score idea of \citet{hansen2008prognostic}, suppose that there exists
a possibly low-dimensional function \(\Psi(X)\) such that
\begin{align}
\label{eq:control_prognostic_score}
T^0 \perp X \mid \Psi(X).
\end{align}
Condition~\eqref{eq:control_prognostic_score} means that \(\Psi(X)\) is a
sufficient summary of the baseline covariates for the counterfactual control
event-time distribution. This observation suggests that the ideal MEC-Cox calibration basis should be
generated by the control-prognostic score itself. Specifically, define the
oracle prognostic score basis as
\[
h_{\rm oracle}(X)
=
(1,\Psi(X)).
\]

This basis contains the outcome-relevant information in \(X\) for the
counterfactual control survival outcome, in the sense that no component of \(X\)
outside \(\Psi(X)\) provides additional information about the distribution of
\(T^0\).



The following proposition makes precise the sense in which the control-prognostic
score provides an oracle calibration basis for MEC-Cox.

\begin{proposition}
\label{prop:ideal_prognostic_basis}
Fix the baseline external-control weights
\(\widehat d=\{\widehat d_i:i\in\mathcal I_0\}\), the Bregman generator \(G\),
the source and target index sets \((\mathcal I_0,\mathcal I_1)\), and the
weighted Cox estimating equation \(U_n^{\omega}(\theta)=0\). Let \(\mathcal H\)
be a class of admissible covariate-based calibration bases. For each
\(h\in\mathcal H\), define
\begin{align}
\label{eq:oracle_basis_calibration_problem}
\widehat w(h)
=
\arg\min_{\{w_i>0:i\in\mathcal I_0\}}
\sum_{i\in\mathcal I_0}D_G(w_i\Vert \widehat d_i)
\quad
\text{subject to}
\quad
\sum_{i\in\mathcal I_0}w_i h(X_i)
=
\sum_{i\in\mathcal I_1}h(X_i).
\end{align}
Set \(\widetilde\omega_i(h)=A_i+(1-A_i)\widehat w_i(h)\), and let
\(\widehat\theta_{\rm MEC}(h)\) denote a solution to
\(U_n^{\widetilde\omega(h)}(\theta)=0\). Define the class of MEC-Cox estimators
generated by \(\mathcal H\) as
\begin{align}
\label{eq:class_mec_cox_estimators}
\mathcal C_{\rm MEC}(G,\widehat d,\mathcal H)
:=
\bigl\{\widehat\theta_{\rm MEC}(h):\;
h\in\mathcal H,\ \widehat w(h)\text{ solves }
\eqref{eq:oracle_basis_calibration_problem}, \,\,
U_n^{\widetilde\omega(h)}\{\widehat\theta_{\rm MEC}(h)\}=0
\bigr\}.
\end{align}
Let \(\eta\) denote a generic limiting external-control version of the
Lin--Wei/Binder subject-level Cox contribution in
\eqref{eq:MEC_Cox_LinWei_Binder_contribution_optimized}, evaluated at the
relevant limiting target values, and define the weight-normalized contribution
\[
\eta^0=\frac{\eta}{q(X)}.
\]
For \(h\in\mathcal H\), define the external-control, weight-normalized residual
variation
\begin{align}
\label{eq:res_var}
\mathcal V_{\rm res}^{0}(h)
=
\operatorname{Var}\left[
\eta^0
-
\mathbb E\{\eta^0\mid h(X),A=0\}
\,\middle|\, A=0
\right].
\end{align}

\noindent
Assume the following additional conditions:
\begin{itemize}[leftmargin=1.5em, itemsep=0.35em]

    \item[] \textbf{\textit{P1. Regularity for the weight-normalized Cox contribution.}} 
    The assumptions required for consistency of the MEC-Cox estimator hold
    (i.e., the conditions of Theorem~\ref{thm:Conditions_for_consistency_MEC_Cox} hold). 

    \item[] \textbf{\textit{P2. Existence of an oracle prognostic score basis.}} 
    There exists a prognostic score \(\Psi(X)\) such that
    \[
    T^0\perp X\mid \Psi(X).
    \]
    That is, conditional on \(\Psi(X)\), the full baseline covariate vector
    \(X\) contains no additional information about the counterfactual control
    event time \(T^0\). Define
    \[
    h_{\rm oracle}(X)=(1,\Psi(X)) \in\mathcal H.
    \]    
\end{itemize}

\noindent
Then, for every
\(\widehat\theta_{\rm MEC}(h)\in
\mathcal C_{\rm MEC}(G,\widehat d,\mathcal H)\), we have
\[
\mathcal V_{\rm res}^{0}(h)
\ge
\mathcal V_{\rm res}^{0}(h_{\rm oracle}),
\]
where $\mathcal V_{\rm res}^{0}(h_{\rm oracle})
=
\operatorname{Var}\left[
\eta^0
-
\mathbb E\{\eta^0\mid h_{\rm oracle}(X),A=0\}
\,\middle|\, A=0
\right].$ Consequently, \(h_{\rm oracle}(X)=(1,\Psi(X))\), or any basis
generating the same information, is oracle-optimal within
$\mathcal C_{\rm MEC}$ $ (G,\widehat d,\mathcal H)$ in the sense of minimizing
the external-control, weight-normalized residual variation
\eqref{eq:res_var}.
\end{proposition}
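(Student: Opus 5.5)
The plan is to reduce the claim to the elementary fact that conditioning on a larger $\sigma$-field leaves a smaller residual variance. Everything rests on one structural step. We show that the conditional mean $m(X):=\mathbb E\{\eta^0\mid X,A=0\}$ is in fact a function of $\Psi(X)$ alone, and hence is $\sigma\{h_{\rm oracle}(X)\}$-measurable.

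\textbf{Step 1: compute $\eta^0$ for an external control.} First write out the limiting external-control Lin--Wei/Binder contribution using the population limits from the proof of Theorem~\ref{thm:identification_weighted_cox}. For $A=0$ the weight is $q(X)$, since by Theorem~\ref{thm:Conditions_for_consistency_MEC_Cox} the calibrated weights are asymptotically equivalent to $q(X_i)$. Replacing $\overline A_{\rm MEC,\omega}$, $S^{(0)}$ and $d\mathcal N_\omega$ by their deterministic limits $\overline a(t;\theta_{ATT})$, $s^{(0)}(t)$ and $dn_\omega(t)$ gives
\[
\eta=q(X)\Bigl[-\int \overline a(t)\,d\mathcal N^0(t)+\int \frac{\mathcal Y^0(t)\,\overline a(t)}{s^{(0)}(t)}\,dn_\omega(t)\Bigr].
\]
Dividing by $q(X)$ yields
\[
\eta^0=\int \phi_1(t)\,d\mathcal N^0(t)+\int \phi_2(t)\,\mathcal Y^0(t)\,dt ,
\]
with deterministic functions $\phi_1,\phi_2$. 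This is the point of the normalization: after it, $\eta^0$ is a fixed functional of $(T^0,C)$ only.

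\textbf{Step 2: show $m(X)$ depends on $X$ only through $\Psi(X)$.} Taking conditional expectations, $m(X)$ becomes an integral against $\mathbb E\{d\mathcal N^0(t)\mid X,A=0\}$ and $\mathbb E\{\mathcal Y^0(t)\mid X,A=0\}$. Source-specific independent censoring factorizes these as $G_0(t)\,dF_{T^0\mid X,A=0}(t)$ and $G_0(t)\Pr(T^0\ge t\mid X,A=0)$. By A2 the law of $T^0$ given $(X,A=0)$ equals its law given $(X,A=1)$. By P2 that law depends on $X$ only through $\Psi(X)$. I would interpret P2 as holding conditionally within each source, which the argument requires. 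Hence $m(X)=\widetilde m\{\Psi(X)\}$.

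\textbf{Step 3: conclude via the Pythagorean decomposition.} For any $h\in\mathcal H$ and $\mathcal G_h=\sigma\{h(X)\}$, the law of total variance under $A=0$ gives
\[
\mathcal V^0_{\rm res}(h)=\mathbb E\bigl[\operatorname{Var}(\eta^0\mid X)\bigr]+\mathbb E\bigl[\{m(X)-\mathbb E(m(X)\mid\mathcal G_h)\}^2\bigr],
\]
using $\mathbb E(\eta^0\mid\mathcal G_h)=\mathbb E\{m(X)\mid\mathcal G_h\}$ and the orthogonality of $\eta^0-m(X)$ to every function of $X$. The first term does not depend on $h$ and the second is nonnegative. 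For $h_{\rm oracle}$ the second term vanishes by Step 2, which gives $\mathcal V^0_{\rm res}(h)\ge\mathcal V^0_{\rm res}(h_{\rm oracle})$. Any basis generating the same $\sigma$-field also makes the second term vanish, which gives the "any basis generating the same information" clause. P1 is used only to justify that the limiting objects in Step 1 exist and that the plug-in weights converge to $q$.

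\textbf{Main obstacle.} The delicate step is Step 1, namely making precise the "generic limiting" $\eta$. The empirical $\widehat\eta_i$ involves random risk-set quantities and the calibrated weights $\widehat w_i$, not $q(X_i)$. I would invoke the $L_2$ negligibility of $w_i(\widehat\lambda)-\widehat d_i$ and $\widehat c_n\to1$ from the proof of Theorem~\ref{thm:Conditions_for_consistency_MEC_Cox}, together with uniform convergence of $n^{-1}S^{(0)}$, to define $\eta$ as the limit displayed above. Once $\eta$ is defined this way, the remaining argument is a pure $L_2$-projection comparison, and it holds regardless of the particular $\widehat\theta_{\rm MEC}(h)$ chosen in the class $\mathcal C_{\rm MEC}$.
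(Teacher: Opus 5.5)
Your proposal is correct and follows the paper's proof essentially step for step: the limiting form of $\eta^0$ as a functional of the control counting and at-risk processes with deterministic coefficients, measurability of $\mathbb E(\eta^0\mid X,A=0)$ with respect to $\Psi(X)$ via the censoring factorization, and then an $L_2$-projection comparison, which you package as a single Pythagorean identity where the paper applies the law of total variance twice. One correction: you do not need to strengthen P2 to hold within each source. By A2, both source-specific laws of $T^0$ given $X$ coincide with the pooled law $\Pr(T^0\in\cdot\mid X)=\sum_{a}\Pr(T^0\in\cdot\mid X,A=a)\Pr(A=a\mid X)$, so the stated global condition $T^0\perp X\mid\Psi(X)$ already yields $T^0\perp X\mid\Psi(X),A=0$; this mixture step is exactly how the paper derives it.
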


\begin{proof}
\noindent\textbf{\textit{Recall the empirical subject-level Lin--Wei/Binder Cox contribution.}} 
Fix an arbitrary estimator
\(\widehat\theta_{\rm MEC}(h)\in
\mathcal C_{\rm MEC}(G,\widehat d,\mathcal H)\); equivalently, fix an
arbitrary admissible calibration basis \(h\in\mathcal H\). Recall that the
empirical subject-level Lin--Wei/Binder Cox contribution evaluated at the MEC-Cox solution \eqref{eq:MEC_Cox_LinWei_Binder_contribution_optimized} is
\begin{align}
\nonumber
\widehat\eta_i
&=
\eta_i(\widehat\theta_{\rm MEC},\widehat\lambda)
=
\int
\widetilde\omega_i(\widehat\lambda)
\{A_i-\overline A_{\rm MEC,\omega}(t;\widehat\theta_{\rm MEC},\widehat\lambda)\}
\,d\mathcal N_i(t)
\nonumber\\
\label{eq:Cox_contribution}
&\quad -
\int
\frac{
\widetilde\omega_i(\widehat\lambda)\mathcal Y_i(t)
\exp(\widehat\theta_{\rm MEC} A_i)
\{A_i-\overline A_{\rm MEC,\omega}(t;\widehat\theta_{\rm MEC},\widehat\lambda)\}
}{
S_{\widehat\lambda}^{(0)}(t;\widehat\theta_{\rm MEC})
}
\,d\mathcal N_{\omega,\widehat\lambda}(t).
\end{align}
The contribution \(\widehat\eta_i\) in \eqref{eq:Cox_contribution} is defined
for subjects from both cohorts: treated trial subjects with \(A_i=1\) and
external-control subjects with \(A_i=0\). In the oracle argument below, we focus
on its external-control component, obtained by setting \(A_i=0\).

\paragraph{Limiting external-control Cox contribution \(\eta\) and its weight-normalized version \(\eta^0\).}
For an external-control subject, \(A_i=0\), so
\(\exp(\widehat\theta_{\rm MEC}A_i)=1\),
\(\mathcal N_i(t)=\mathcal N_i^0(t)\), and
\(\mathcal Y_i(t)=\mathcal Y_i^0(t)\) by consistency. Hence
\begin{align}
\widehat\eta_i
&=
-\int
\widetilde\omega_i(\widehat\lambda)
\overline A_{\rm MEC,\omega}(t;\widehat\theta_{\rm MEC},\widehat\lambda)
\,d\mathcal N_i^0(t)
\nonumber+
\int
\frac{
\widetilde\omega_i(\widehat\lambda)\mathcal Y_i^0(t)
\overline A_{\rm MEC,\omega}(t;\widehat\theta_{\rm MEC},\widehat\lambda)
}{
S_{\widehat\lambda}^{(0)}(t;\widehat\theta_{\rm MEC})
}
\,d\mathcal N_{\omega,\widehat\lambda}(t).
\label{eq:Cox_contribution_external_control}
\end{align}
At the population target values, \(\widehat\theta_{\rm MEC}\to\theta_{ATT}=:\theta_0\)
and \(\widehat\lambda\to0=:\lambda_0\). Moreover, the calibrated
external-control weight converges to the normalized ATT odds weight, that is,
up to a deterministic normalizing constant \(c_q\), $\widetilde\omega_i(\widehat\lambda)
\approx
c_q q(X_i)$ with $q(X_i)=\pi(X_i)/(1-\pi(X_i)).$ Thus, if \(\eta\) denotes the corresponding limiting external-control Cox
contribution, then
\[
\eta
=
c_q q(X)
\left[
-\int \overline A_0(t)\,d\mathcal N^0(t)
+
\int
\frac{
\mathcal Y^0(t)\overline A_0(t)
}{
s_0^{(0)}(t;\theta_0)
}
\,d\mathcal N_{\omega,0}(t)
\right],
\]
where \(\overline A_0(t)\), \(s_0^{(0)}(t;\theta_0)\), and
\(d\mathcal N_{\omega,0}(t)\) denote the corresponding population limits of
$\overline A_{\rm MEC,\omega}$ $(t;\widehat\theta_{\rm MEC},$ $ \widehat\lambda)$,
\(S_{\widehat\lambda}^{(0)}(t;\widehat\theta_{\rm MEC})\), and
\(d\mathcal N_{\omega,\widehat\lambda}(t)\), respectively.
Consequently, after factoring out the ATT odds weight,
\[
\eta^0
:=
\frac{\eta}{q(X)}
=
c_q
\left[
-\int \overline A_0(t)\,d\mathcal N^0(t)
+
\int
\frac{
\mathcal Y^0(t)\overline A_0(t)
}{
s_0^{(0)}(t;\theta_0)
}
\,d\mathcal N_{\omega,0}(t)
\right].
\]
Therefore, \(\eta^0\) is a linear functional of the counterfactual control
counting and at-risk processes, with coefficient functions determined by
population Cox risk-set quantities rather than by the individual covariate
vector \(X\). Here, when writing \(\eta^0=\eta/q(X)\), we implicitly restrict attention to the region where \(q(X)>0\), equivalently \(\pi(X)>0\), so that \(1/q(X)\) is
well defined. This normalization is used only to factor out the ATT odds
weight from the external-control Cox contribution.

\paragraph{Global prognostic sufficiency and one-sided transportability imply
external-control prognostic sufficiency.}
We first derive the external-control version of the prognostic sufficiency
condition. Let \(S=\Psi(X)\) for notational simplicity. By P2,
\[
T^0 \perp X \mid S.
\]
In addition, P1 includes the causal assumptions of Theorem~\ref{thm:identification_weighted_cox}, and hence the
one-sided survival transportability condition
\begin{align}
\label{eq:one_sided_trans_assumption}
T^0 \perp A \mid X .
\end{align}
We show that these two conditions imply $T^0 \perp X \mid S,A=0,$ or equivalently,
\begin{align}
\label{eq:external_control_prognostic_sufficiency}
T^0 \perp X \mid \Psi(X),A=0.    
\end{align}

Let \(B\) be an arbitrary Borel set in the support of \(T^0\). Fix \(x\) and
write \(s=\Psi(x)\). By one-sided survival transportability \eqref{eq:one_sided_trans_assumption},
\begin{align}
\label{eq:eq_1_1}
P(T^0\in B\mid X=x,A=0)
=
P(T^0\in B\mid X=x,A=1).
\end{align}
Thus, the following equality holds
\[
\begin{aligned}
P(T^0\in B\mid X=x)
&=
\sum_{a=0}^1
P(T^0\in B\mid X=x,A=a)P(A=a\mid X=x)  \\
&=
P(T^0\in B\mid X=x,A=0)
\sum_{a=0}^1 P(A=a\mid X=x) \\
&=
P(T^0\in B\mid X=x,A=0),
\end{aligned}
\]
where the second equality uses \eqref{eq:eq_1_1}. Hence,
\begin{align}
\label{eq:eq_1__2}
P(T^0\in B\mid X=x,A=0)
=
P(T^0\in B\mid X=x).
\end{align}

By the global prognostic-score condition \(T^0\perp X\mid S\), and because
\(S=\Psi(X)\), for any \(x\) satisfying \(s=\Psi(x)\), we have
\[
P(T^0\in B\mid X=x,S=s)
=
P(T^0\in B\mid S=s).
\]
Moreover, since \(S\) is a deterministic function of \(X\), conditioning on
\(X=x\) already determines \(S=\Psi(x)=s\). Hence,
\[
P(T^0\in B\mid X=x,S=s)
=
P(T^0\in B\mid X=x).
\]
Therefore,
\begin{align}
\label{eq:eq_1__3}
P(T^0\in B\mid X=x)
=
P(T^0\in B\mid S=s).
\end{align}

From \eqref{eq:eq_1__2} and \eqref{eq:eq_1__3}, we have 
\begin{align}
    \label{eq:eq_1__4}
    P(T^0\in B\mid X=x,A=0)
=
P(T^0\in B\mid S=s).
\end{align}

It remains to relate the right-hand side of \eqref{eq:eq_1__4} to the
conditional law given \((S,A=0)\). Averaging over the conditional distribution
of \(X\) given \(S=s\) and \(A=0\), we obtain
\[
\begin{aligned}
P(T^0\in B\mid S=s,A=0)
&=
\int
P(T^0\in B\mid X=u,S=s,A=0)
\,dP(u\mid S=s,A=0) \\
&=
\int
P(T^0\in B\mid X=u,A=0)
\,dP(u\mid S=s,A=0) \\
&=
\int
P(T^0\in B\mid S=s)
\,dP(u\mid S=s,A=0) \\
&=
P(T^0\in B\mid S=s)
\,
\int dP(u\mid S=s,A=0) \\
&=
P(T^0\in B\mid S=s).
\end{aligned}
\]
The second equality holds because \(S=\Psi(X)\) is a deterministic function of \(X\), so
conditioning on \(X=u\) already determines \(S=\Psi(u)\); moreover, the
conditional distribution \(dP(u\mid S=s,A=0)\) is supported on values of \(u\)
satisfying \(\Psi(u)=s\). The third equality uses \eqref{eq:eq_1__4}.

Combining the preceding identities gives
\[
P(T^0\in B\mid X=x,A=0)
=
P(T^0\in B\mid S=s,A=0).
\]
Since \(B\) was arbitrary, this proves $T^0 \perp X \mid S,A=0,$ or equivalently, $T^0 \perp X \mid \Psi(X),A=0$ \eqref{eq:external_control_prognostic_sufficiency}.

\paragraph{Core identity implied by P1--P2.}
We first show that, under P1--P2,
\begin{align}
\label{eq:weight_normalized_sufficiency}
\mathbb E(\eta^0\mid X,A=0)
=
\mathbb E\{\eta^0\mid h_{\rm oracle}(X),A=0\}.
\end{align}
From the preceding representation of \(\eta^0\), we have
\begin{align}
\mathbb E(\eta^0\mid X,A=0)
&=
c_q
\left[
-\int \overline A_0(t)\,
\mathbb E\{d\mathcal N^0(t)\mid X,A=0\}
\right.
\nonumber\\
&\qquad\qquad\left.
+
\int
\frac{\overline A_0(t)}
{s_0^{(0)}(t;\theta_0)}
\mathbb E\{\mathcal Y^0(t)\mid X,A=0\}
\,d\mathcal N_{\omega,0}(t)
\right].
\label{eq:cond_mean_eta0_X}
\end{align}
Here, \(\overline A_0(t)\), \(s_0^{(0)}(t;\theta_0)\), and
\(d\mathcal N_{\omega,0}(t)\) are population-level quantities and therefore do
not introduce additional individual-level dependence on \(X\).

By source-specific independent censoring in P1, we have
\[
C\perp (T^0,X)\mid A=0.
\]
Let
\[
G_0(t)=P(C\ge t\mid X, A=0)=P(C\ge t\mid A=0).
\]
Then the conditional mean of the counterfactual control at-risk process satisfies
\begin{align}
\mathbb E\{\mathcal Y^0(t)\mid X,A=0\}
&=
P(T^0\ge t,\ C\ge t\mid X,A=0)
\nonumber\\
&=
P(C\ge t\mid A=0)P(T^0\ge t\mid X,A=0)
\nonumber\\
&=
G_0(t)P(T^0\ge t\mid X,A=0).
\label{eq:cond_mean_Y0}
\end{align}
Similarly, suppressing left-limit notation for simplicity, the conditional mean
of the counterfactual control event increment satisfies
\begin{align}
\mathbb E\{d\mathcal N^0(t)\mid X,A=0\}
&=
P(T^0\in dt,\ C\ge t\mid X,A=0)
\nonumber\\
&=
P(C\ge t\mid A=0)P(T^0\in dt\mid X,A=0)
\nonumber\\
&=
G_0(t)P(T^0\in dt\mid X,A=0).
\label{eq:cond_mean_dN0}
\end{align}

By the external-control prognostic sufficiency condition just established \eqref{eq:external_control_prognostic_sufficiency},
\[
T^0\perp X\mid \Psi(X),A=0.
\]
Therefore,
\[
P(T^0\ge t\mid X,A=0)
=
P\{T^0\ge t\mid \Psi(X),A=0\},
\]
and
\[
P(T^0\in dt\mid X,A=0)
=
P\{T^0\in dt\mid \Psi(X),A=0\}.
\]
Because \(h_{\rm oracle}(X)=(1,\Psi(X))\) generates the same
information as \(\Psi(X)\), equations
\eqref{eq:cond_mean_Y0}--\eqref{eq:cond_mean_dN0} imply
\[
\mathbb E\{\mathcal Y^0(t)\mid X,A=0\}
=
\mathbb E\{\mathcal Y^0(t)\mid h_{\rm oracle}(X),A=0\},
\]
and
\[
\mathbb E\{d\mathcal N^0(t)\mid X,A=0\}
=
\mathbb E\{d\mathcal N^0(t)\mid h_{\rm oracle}(X),A=0\}.
\]
Substituting these two identities into
\eqref{eq:cond_mean_eta0_X} gives
\[
\mathbb E(\eta^0\mid X,A=0)
=
\mathbb E\{\eta^0\mid h_{\rm oracle}(X),A=0\},
\]
which proves \eqref{eq:weight_normalized_sufficiency}.

\paragraph{Variance decomposition for the projection argument.}
Now, since \(h(X)\) is a measurable function of \(X\), the tower property gives
\[
\mathbb E\{\eta^0\mid h(X),A=0\}
=
\mathbb E\left[
\mathbb E(\eta^0\mid X,A=0)
\mid h(X),A=0
\right].
\]
Using \eqref{eq:weight_normalized_sufficiency},
\[
\mathbb E\{\eta^0\mid h(X),A=0\}
=
\mathbb E\left[
\mathbb E\{\eta^0\mid h_{\rm oracle}(X),A=0\}
\mid h(X),A=0
\right].
\]
Let
\[
M_{\rm oracle}
=
\mathbb E\{\eta^0\mid h_{\rm oracle}(X),A=0\}.
\]
Then
\[
\mathbb E\{\eta^0\mid h(X),A=0\}
=
\mathbb E(M_{\rm oracle}\mid h(X),A=0).
\]
By the law of total variance conditional on \(A=0\),
\begin{align*}
&\operatorname{Var}(M_{\rm oracle}\mid A=0)\\
&\qquad =
\operatorname{Var}\{\mathbb E(M_{\rm oracle}\mid h(X),A=0)\mid A=0\}
+
\mathbb E\{\operatorname{Var}(M_{\rm oracle}\mid h(X),A=0)\mid A=0\}.
\end{align*}
Since the second term on the right-hand side is nonnegative,
\[
\operatorname{Var}\{\mathbb E(M_{\rm oracle}\mid h(X),A=0)\mid A=0\}
\le
\operatorname{Var}(M_{\rm oracle}\mid A=0).
\]
Recovering $M_{\rm oracle}$ gives
\begin{align}
\label{eq:ineq_1}
\operatorname{Var}\left[
\mathbb E\{\eta^0\mid h(X),A=0\}
\mid A=0
\right]
\le
\operatorname{Var}\left[
\mathbb E\{\eta^0\mid h_{\rm oracle}(X),A=0\}
\mid A=0
\right].
\end{align}

On the other hand, for any calibration basis \(h\), write
\[
\eta^0
=
\mathbb E\{\eta^0\mid h(X),A=0\}
+
\left[
\eta^0-\mathbb E\{\eta^0\mid h(X),A=0\}
\right].
\]
Taking the conditional variance given \(A=0\), we obtain
\begin{align*}
\operatorname{Var}(\eta^0\mid A=0)
&=
\operatorname{Var}\left[
\mathbb E\{\eta^0\mid h(X),A=0\}
\,\middle|\, A=0
\right] \\
&\quad+
\operatorname{Var}\left[
\eta^0-\mathbb E\{\eta^0\mid h(X),A=0\}
\,\middle|\, A=0
\right] \\
&\quad+
2\,\operatorname{Cov}\left(
\mathbb E\{\eta^0\mid h(X),A=0\},
\eta^0-\mathbb E\{\eta^0\mid h(X),A=0\}
\,\middle|\, A=0
\right).
\end{align*}
The covariance term is zero. Indeed,
\begin{align*}
&\operatorname{Cov}\left(
\mathbb E\{\eta^0\mid h(X),A=0\},
\eta^0-\mathbb E\{\eta^0\mid h(X),A=0\}
\,\middle|\, A=0
\right) \\
&\quad=
\mathbb E\left[
\mathbb E\{\eta^0\mid h(X),A=0\}
\left\{
\eta^0-\mathbb E\{\eta^0\mid h(X),A=0\}
\right\}
\,\middle|\, A=0
\right] \\
&\quad=
\mathbb E\left[
\mathbb E\left[
\mathbb E\{\eta^0\mid h(X),A=0\}
\left\{
\eta^0-\mathbb E\{\eta^0\mid h(X),A=0\}
\right\}
\,\middle|\, h(X),A=0
\right]
\,\middle|\, A=0
\right] \\
&\quad=
\mathbb E\left[
\mathbb E\{\eta^0\mid h(X),A=0\}
\,
\mathbb E\left\{
\eta^0-\mathbb E\{\eta^0\mid h(X),A=0\}
\mid h(X),A=0
\right\}
\,\middle|\, A=0
\right] \\
&\quad=
\mathbb E\left[
\mathbb E\{\eta^0\mid h(X),A=0\}
\left\{
\mathbb E(\eta^0\mid h(X),A=0)
-
\mathbb E(\eta^0\mid h(X),A=0)
\right\}
\,\middle|\, A=0
\right] \\
&\quad=0.
\end{align*}
Therefore,
\[
\operatorname{Var}(\eta^0\mid A=0)
=
\operatorname{Var}\left[
\mathbb E\{\eta^0\mid h(X),A=0\}
\mid A=0
\right]
+
\operatorname{Var}\left[
\eta^0-\mathbb E\{\eta^0\mid h(X),A=0\}
\mid A=0
\right].
\]

\paragraph{Conclusion.} By the definition of \(\mathcal V_{\rm res}^{0}(h)\), this becomes
\begin{align}
\label{eq:variance_decomposition}
\operatorname{Var}(\eta^0\mid A=0)
=
\operatorname{Var}\left[
\mathbb E\{\eta^0\mid h(X),A=0\}
\mid A=0
\right]
+
\mathcal V_{\rm res}^{0}(h).
\end{align}
Because \eqref{eq:variance_decomposition} holds for any $h \in \mathcal{H}$, we have
\[
\operatorname{Var}(\eta^0\mid A=0)
=
\operatorname{Var}\left[
\mathbb E\{\eta^0\mid h_{\rm oracle}(X),A=0\}
\mid A=0
\right]
+
\mathcal V_{\rm res}^{0}(h_{\rm oracle}).
\]
Combining these two decompositions with \eqref{eq:ineq_1} yields
\[
\mathcal V_{\rm res}^{0}(h)
\ge
\mathcal V_{\rm res}^{0}(h_{\rm oracle}).
\]
Because \(h\in\mathcal H\) was arbitrary, the inequality holds for every
admissible calibration basis \(h\), and therefore for every MEC-Cox estimator
generated by such a basis.
\end{proof}

Proposition~\ref{prop:ideal_prognostic_basis} implies that, within the
MEC-Cox class \(\mathcal C_{\rm MEC}(G,\widehat d,\mathcal H)\) defined in
\eqref{eq:class_mec_cox_estimators}, the oracle control-prognostic score basis
\(h_{\rm oracle}(X)=(1,\Psi(X))\) minimizes the residual variation
of the weight-normalized external-control Cox contribution. Intuitively, the
ATT odds weight \(q(X)\) first transports the external-control cohort to the
treated trial target population in baseline covariate distribution. After this
transport component is factored out, the remaining variation in the
external-control Cox contribution is driven by the counterfactual control
survival process. Under the prognostic-score condition
\(T^0\perp X\mid \Psi(X)\), the oracle basis captures the largest possible
baseline-covariate-explained component of this remaining variation. Thus,
Proposition~\ref{prop:ideal_prognostic_basis} provides an oracle interpretation
of the MEC-Cox calibration basis: among admissible bases, the control-prognostic
score basis removes the maximal outcome-relevant component of the
weight-normalized external-control Cox contribution.

Although \citet{hansen2008prognostic} introduced the prognostic score primarily
as a conditioning variable for matching, subclassification, or adjustment, our
use of the prognostic score is different. In MEC-Cox, the control-prognostic
score is used as a calibration basis for the external-control weights. It is
therefore not used to define strata or to enter the Cox model directly; instead,
it guides the calibration step so that the weighted external-control cohort is
balanced with the treated trial cohort in counterfactual control prognosis.
Proposition~\ref{prop:ideal_prognostic_basis} formalizes this intuition by
showing that, within the MEC-Cox class, the oracle control-prognostic score
basis minimizes the residual variation of the weight-normalized external-control
Cox contribution. This suggests that prognostic-score calibration may improve
efficiency by removing the largest possible baseline-covariate-explained
component of the external-control Cox contribution.



\section{Simulation Setup}\label{sec:simulation_setup}
\paragraph{Objective.}
We describe the simulation setup used in the main paper and in the additional
simulation experiments reported in Section \ref{sec:Additional simulation experiments} in the Appendix. The objective of
the simulation studies is to evaluate the finite-sample performance of the
proposed MEC-Cox estimator for estimating the ATT marginal log-hazard ratio in
hypothetical externally controlled single-arm trials. We compare MEC-Cox with
existing ATT-IPW Cox estimators, including the naive model-based variance
estimator, the Lin--Wei/Binder robust sandwich variance estimator
\citep{lin1989robust,binder1992fitting}, the Shu corrected sandwich variance
estimator \citep{shu2021variance}, and a fixed-weight Lin--Wei/Binder estimator
that uses the same propensity-score model as MEC-Cox.

The simulation design mimics a setting in which treated trial patients are
compared with an external-control cohort after transport weighting. We evaluate
three performance measures based on \(R=1000\) Monte Carlo replications:
empirical coverage of the nominal \(95\%\) confidence interval, Monte Carlo
bias, and root mean squared error (RMSE). Bias and RMSE are evaluated on the
log-hazard-ratio scale.

\subsection{Simulation procedure}\label{subsec:Simulation procedure}
The simulation procedure, including data generation, model fitting, and
performance-metric reporting, is as follows.

\begin{enumerate}[leftmargin=1.5em, label={}, itemsep=0.5em]

\item[] \textbf{Step 1. Specify the simulation scenario.}
For each scenario, we fix the treated sample size \(n_1\), the
external-control sample size \(n_0\), the covariate dimension \(M\), the
source-selection model, and the outcome model.

The simulation framework accommodates both linear and nonlinear
source-selection and outcome models. We let \(\kappa_\pi\ge 0\) denote the
degree of nonlinearity in the source propensity-score model and
\(\kappa_m\ge 0\) denote the degree of nonlinearity in the outcome model. The
linear setting corresponds to \(\kappa_\pi=\kappa_m=0\), whereas positive
values of \(\kappa_\pi\) or \(\kappa_m\) generate increasingly nonlinear
data-generating mechanisms.

\item[] \textbf{Step 2. Generate baseline covariates and source membership.}
Let
\[
X_i=(X_{i1},\ldots,X_{iM}) \in \mathbb{R}^M
\]
denote the \(M\)-dimensional baseline covariate vector. Candidate covariate
vectors are generated independently from a multivariate standard normal
distribution. Source membership is then generated according to the source
propensity score
\[
\pi(X_i)=\Pr(A_i=1\mid X_i).
\]
The general source-selection model is
\begin{align}
\label{eq:true_log_ps_model}
\operatorname{logit}\{\pi(X_i)\}
=
-0.2+\ell_\pi(X_i)+\kappa_\pi r_\pi(X_i),    
\end{align}
where the linear component is
\[
\ell_\pi(X_i)
=
0.75X_{i1}
+0.75X_{i2}
+0.65X_{i3}
+0.65X_{i4}
+0.55X_{i5},
\]
and the nonlinear component is
\[
\begin{aligned}
r_\pi(X_i)
={}&
0.70\sin(1.25X_{i1})
+0.45(X_{i2}^2-1)
-0.55\{I(X_{i3}>0)-0.5\}  \\
&\quad
+0.35X_{i4}X_{i5}
+0.25\{\cos(X_{i1}+X_{i2})-\exp(-1)\}.
\end{aligned}
\]
Thus, the linear source-selection model is obtained by setting
\(\kappa_\pi=0\), whereas positive values of \(\kappa_\pi\) define nonlinear settings. The true source
propensity scores are truncated to lie in \([0.02,0.98]\) to avoid extreme
assignment probabilities.

The desired finite-sample ratio \(n_1:n_0\) is imposed by the sampling design,
rather than by changing the intercept of the source propensity-score model.
Specifically, we keep the intercept in the source-selection model fixed, as in
\eqref{eq:true_log_ps_model}, repeatedly generate candidate subjects from the
super-population, assign source membership using \(\pi(X_i)\), and retain
subjects until exactly \(n_1\) treated trial patients and \(n_0\)
external-control patients are obtained. This procedure samples covariates from
the induced conditional distributions \(X_i\mid A_i=1\) and \(X_i\mid A_i=0\),
while fixing the realized cohort sizes in each Monte Carlo replicate.

\item[] \textbf{Step 3. Generate event times from a Weibull proportional hazards model.}
For \(a=0,1\), where \(a=0\) denotes control and \(a=1\) denotes treatment,
event times are generated from the conditional hazard model
\begin{align}
\label{eq:weibull_ph_model}
\lambda^a(t\mid X_i)
=
\eta\lambda_0 t^{\eta-1}
\exp\{m_0(X_i)+a\beta\},
\end{align}
where \(\lambda_0\) is the Weibull baseline scale parameter and \(\eta\) is the
Weibull shape parameter. Across all simulation scenarios, we set
\(\lambda_0=0.00008\) and \(\eta=2\).

The proportional hazards assumption holds because treatment enters the
log-hazard additively through the time-invariant term \(a\beta\). Therefore,
for any fixed covariate value \(X_i\), the conditional hazard ratio comparing
treatment with control is
\[
\frac{\lambda^1(t\mid X_i)}{\lambda^0(t\mid X_i)}
=
\frac{
\eta\lambda_0 t^{\eta-1}\exp\{m_0(X_i)+\beta\}
}{
\eta\lambda_0 t^{\eta-1}\exp\{m_0(X_i)\}
}
=
\exp(\beta),
\]
which does not depend on time \(t\). Thus, \(\beta\) represents the conditional
log-hazard ratio in the data-generating model.

The log-hazard prognostic function is
\begin{align}
\label{eq:log_hazard_prognostic_function}
m_0(X_i)=\ell_m(X_i)+\kappa_m r_m(X_i),    
\end{align}
where the linear component is
\[
\ell_m(X_i)=X_i^\top b,
\]
with
\begin{align*}
b
&=
\bigl(
\log(1.75),\log(1.75),
\log(1.60),\log(1.60),
\log(1.50), \\
&\quad
\log(1.25),\log(1.25),\log(1.25),\log(1.25),\log(1.25),
0,\ldots,0
\bigr)^\top\in\mathbb R^M,
\end{align*}
and the nonlinear component is
\[
\begin{aligned}
r_m(X_i)
={}&
0.45\sin(X_{i2})
+0.35(X_{i3}^2-1)
+0.30\{I(X_{i4}>0)-0.5\}  \\
&\quad
+0.25X_{i1}X_{i5}
+0.20\{\cos(X_{i2}+X_{i5})-\exp(-1)\}.
\end{aligned}
\]
The linear proportional hazards outcome model is obtained by setting
\(\kappa_m=0\), whereas positive values of \(\kappa_m\) introduce nonlinear prognostic effects.

Under this construction, \(X_{i1},\ldots,X_{i5}\) enter both the
source-selection and outcome models, \(X_{i6},\ldots,X_{i10}\) enter only the
outcome model, and \(X_{i11},\ldots,X_{iM}\), when present, are noise variables.

The conditional log-hazard ratio is set to
\begin{align}
\label{eq:conditional_HR}
\beta=\log(0.70).
\end{align}
Consequently, under the Weibull proportional hazards model
\eqref{eq:weibull_ph_model}, potential event times are generated by inverse
transformation as
\[
T_i^a
=
\left[
\frac{-\log U_i^a}
{\lambda_0\exp\{m_0(X_i)+a\beta\}}
\right]^{1/\eta},
\qquad a=0,1,
\]
where \(U_i^a\sim\operatorname{Uniform}(0,1)\). External-control patients are
observed under the control event time \(T_i^0\), whereas treated trial patients
are observed under the treated event time \(T_i^1\).

\item[] \textbf{Step 4. Generate censoring and observed survival data.}
Independent censoring times are generated as
\[
C_i\sim\operatorname{Exp}(0.0008),
\]
where \(\operatorname{Exp}(\lambda_C)\) denotes the exponential distribution with
rate parameter \(\lambda_C\), so that \(\mathbb E(C_i)=1/\lambda_C=1250\). Because \(C_i\) is
generated independently of the potential event times and baseline covariates
within each source group, this censoring mechanism is consistent with the
source-specific independent censoring condition
\[
C_i \perp (T_i^0,T_i^1,X_i)\mid A_i .
\]

The observed time and event indicator are
\[
Y_i=\min(T_i,C_i),
\qquad
\delta_i=I(T_i\le C_i).
\]
The observed data for subject \(i\) are therefore
\[
O_i=(X_i,A_i,Y_i,\delta_i).
\]

\item[] \textbf{Step 5. Compute the scenario-specific true marginal ATT log-hazard ratio.}
The target parameter for simulation evaluation is the marginal ATT
log-hazard ratio \(\theta_{ATT}\), not the conditional log-hazard
ratio \(\beta\) in \eqref{eq:conditional_HR}. This distinction is important
because the conditional proportional hazards model specifies a
covariate-specific hazard ratio, whereas the ATT-weighted Cox estimator targets
a marginal Cox projection in the treated trial population. Following the
strategy of \citet{austin2016variance}, who computed the true marginal hazard
ratio numerically using a very large simulated population, we compute
\(\theta_{ATT}\) before running the finite-sample Monte Carlo
experiments.

For each data-generating scenario, we generate a large super-population from
the same data-generating mechanism. In implementation, we use
\(n_1^{\mathrm{super}}=30000\) and \(n_0^{\mathrm{super}}=60000\). Let \(n^{\mathrm{super}}=n_1^{\mathrm{super}}+n_0^{\mathrm{super}}\). These
super-population sizes are used only to approximate the population Cox
projection target; they are not intended to determine the finite-sample
simulation ratio \(n_1:n_0\). The finite-sample ratio controls the amount of
external-control information available in each Monte Carlo replicate, whereas
the marginal ATT target is determined by the treated trial target population
and the transported external-control outcome distribution.

For each subject in the super-population, we evaluate the true source
propensity score \(\pi(X_i)\) and form the true ATT odds weight
\[
q(X_i)=\frac{\pi(X_i)}{1-\pi(X_i)}.
\]
The external-control weights are normalized to have total mass equal to the
number of treated trial patients:
\[
d_i^{\mathrm{true}}
=
\frac{n_1^{\mathrm{super}}q(X_i)}
{\sum_{j\in\mathcal I_0^{\mathrm{super}}}q(X_j)},
\qquad i\in\mathcal I_0^{\mathrm{super}}.
\]
The corresponding true ATT-weighted Cox estimating-equation weight is
\[
\omega_i^{\mathrm{true}}
=
A_i+(1-A_i)d_i^{\mathrm{true}}.
\]
We then define \(\theta_{ATT}\) as the solution to the weighted Cox
estimating equation in this super-population:
\[
\sum_{i=1}^{n^{\mathrm{super}}}
\int
\omega_i^{\mathrm{true}}
\{A_i-\bar A_{\omega}^{\mathrm{true}}(t;\theta)\}
\,d\mathcal N_i(t)
=
0,
\]
where
\[
\bar A_{\omega}^{\mathrm{true}}(t;\theta)
=
\frac{
\sum_{i=1}^{n^{\mathrm{super}}}
\omega_i^{\mathrm{true}}\mathcal Y_i(t)\exp(\theta A_i)A_i
}{
\sum_{i=1}^{n^{\mathrm{super}}}
\omega_i^{\mathrm{true}}\mathcal Y_i(t)\exp(\theta A_i)
}.
\]
Equivalently, \(\theta_{ATT}\) is obtained as the coefficient of
\(A_i\) from a weighted Cox regression in the super-population using the true
ATT weights. Bias and RMSE are computed relative to this scenario-specific
value of \(\theta_{ATT}\), and empirical coverage is evaluated for
confidence intervals on the log-hazard-ratio scale.

\item[] \textbf{Step 6. Fit the competing weighted Cox estimators.}
For each Monte Carlo dataset, we fit the comparison methods described in
Subsection~\ref{subsec:Comparison_methods}. Each method estimates the marginal
ATT log-hazard ratio by fitting a weighted Cox regression with the
treatment/source indicator \(A_i\) as the only regression covariate.

\item[] \textbf{Step 7. Summarize Monte Carlo performance.}
For each simulation scenario and each estimator, we repeat the Monte Carlo
procedure \(R=1000\) times. Let \(\widehat\theta^{(r)}\) and
\(\widehat{\operatorname{se}}^{(r)}\) denote the point estimate and estimated
standard error obtained in replication \(r\), respectively. A Wald-type
confidence interval is constructed on the log-hazard-ratio scale as
\[
\left[
\widehat\theta^{(r)}
-
1.96\,\widehat{\operatorname{se}}^{(r)},
\quad
\widehat\theta^{(r)}
+
1.96\,\widehat{\operatorname{se}}^{(r)}
\right].
\]

We summarize performance using Monte Carlo bias, RMSE, and empirical coverage.
Bias is computed as
\[
\operatorname{Bias}
=
\frac{1}{R}
\sum_{r=1}^R
\{\widehat\theta^{(r)}-\theta_{ATT}\}.
\]
RMSE is computed as
\[
\operatorname{RMSE}
=
\left[
\frac{1}{R}
\sum_{r=1}^R
\{\widehat\theta^{(r)}-\theta_{ATT}\}^2
\right]^{1/2}.
\]
Empirical coverage is computed as
\[
\operatorname{Coverage}
=
\frac{1}{R}
\sum_{r=1}^R
I\left[
\theta_{ATT}
\in
\left\{
\widehat\theta^{(r)}
\pm
1.96\,\widehat{\operatorname{se}}^{(r)}
\right\}
\right].
\]
Desirable methods should achieve empirical coverage close to the nominal
\(95\%\) level while maintaining small bias and RMSE. A smaller RMSE alone is
not sufficient if the corresponding confidence intervals fail to achieve
adequate coverage.
\end{enumerate}

\subsection{Comparison methods}\label{subsec:Comparison_methods}

For each Monte Carlo dataset, we compare MEC-Cox with four ATT-weighted Cox
comparators. These methods differ in how the ATT weights are constructed and in
whether the variance estimator accounts for weight estimation and calibration.
The first three comparators use baseline ATT weights constructed from a
logistic-regression source propensity-score model. The fourth comparator uses
the same propensity-score learner as MEC-Cox but does not apply MEC
calibration.

\begin{enumerate}[leftmargin=1.5em, label={}, itemsep=0.5em]
    \item[] \textbf{1. Naive:} ATT-IPW Cox with naive model-based variance.
This method fits an ATT-weighted Cox regression using baseline ATT weights
constructed from a logistic-regression source propensity-score model. The
external-control ATT odds weights are normalized to have total mass equal to
the treated trial sample size. The variance is estimated using the usual
partial-likelihood model-based variance estimator for the weighted Cox model,
treating the estimated weights as fixed. This is analogous to the naive
model-based variance estimator considered by \citet{austin2016variance} for
IPTW Cox regression. Because this estimator ignores both the uncertainty in
estimating the propensity-score weights and the dependence induced by weighting,
it is generally biased \citep{shu2021variance}.

    \item[] \textbf{2. Robust sandwich:} ATT-IPW Cox with Lin--Wei/Binder
    robust variance. This method uses the same logistic-regression ATT-IPW Cox point estimator as
in the previous method, but estimates the variance using the Lin--Wei/Binder
robust sandwich variance estimator
\citep{lin1989robust,binder1992fitting}. This variance estimator accounts for
the weighted Cox estimating-equation structure induced by the IPW weights, but
it treats the estimated propensity-score weights as fixed and therefore does
not incorporate the additional structure from estimating the weights
\citep{shu2021variance}.

A limitation of this approach is that the resulting variance estimator can be
conservative. In particular, \citet{shu2021variance} showed that the standard
robust sandwich variance estimator tends to overestimate the variance in IPW
Cox models, leading to confidence intervals with coverage above the nominal
level and hence less efficient inference.

    \item[] \textbf{3. Corrected sandwich:} ATT-IPW Cox with Shu corrected
    sandwich variance. This method also uses ATT weights constructed from a logistic-regression
source propensity-score model, but estimates the variance using the corrected
sandwich estimator of \citet{shu2021variance}. This estimator stacks the
weighted Cox estimating equation with the propensity-score score equation,
thereby accounting for the additional uncertainty from estimating the
propensity-score weights. 

A limitation of this approach is that it is tied to a
correctly specified parametric propensity-score model; if the logistic
propensity-score model is misspecified, the ATT weights may be biased and the
corresponding variance correction may no longer provide reliable inference.
Moreover, this correction does not directly accommodate flexible
machine-learning propensity-score estimators without additional linearization or
resampling arguments.


    \item[] \textbf{4. MEC-Cox:} MEC-Cox with the proposed stacked sandwich
    variance. This method uses MEC-calibrated external-control weights and estimates the
variance using the proposed stacked sandwich variance estimator. The stacked
system combines the weighted Cox estimating equation with the MEC calibration
equation, thereby accounting for the effect of the calibration step on the Cox
estimator. The propensity-score and calibration-basis learners are constructed
by cross-fitting, as described in
Subsection~\ref{subsec:nuisance_parameter_estimation_basis_construction}.
\end{enumerate}

All weighted Cox regressions use the treatment/source indicator \(A_i\) as the
only regression covariate. Therefore, the fitted coefficient estimates the
marginal ATT log-hazard-ratio target.

\subsection{Nuisance estimation, calibration-basis construction, and Bregman-generator choice for MEC-Cox}
\label{subsec:nuisance_parameter_estimation_basis_construction}
 We now describe the nuisance-parameter estimation procedures used to construct
the baseline ATT transport weights and the MEC-Cox calibration basis, as well as
the choice of Bregman generator.

\subsubsection{Source propensity-score modeling} The propensity-score learner depends on the simulation setting. Following the
standard propensity-score framework of \citet{rosenbaum1983central}, we use
logistic-regression-based estimators in linear settings and flexible learners
in nonlinear settings:
\begin{enumerate}[leftmargin=1.5em, label={}, itemsep=0.5em]
    \item[] \textbf{1. Logistic-regression propensity-score learner.}
    In the low-to-moderate-dimensional linear setting, the source propensity
    score is estimated by logistic regression using the baseline covariates.
    In \texttt{R}, this is implemented using
    \texttt{glm(..., family = binomial())}.

    \item[] \textbf{2. Lasso-logistic propensity-score learner.}
    In the high-dimensional sparse linear setting, the source propensity score
    is estimated by lasso logistic regression with cross-validated tuning
    \citep{tibshirani1996regression,friedman2010regularization}. In \texttt{R}, this is implemented using
    \texttt{glmnet::cv.glmnet} \texttt{(..., family = "binomial", alpha = 1)}.

    \item[] \textbf{3. Flexible machine-learning propensity-score learner.}
    In nonlinear settings, the source propensity score is estimated using
    flexible machine-learning methods, such as BART \citep{chipman2010bart}, DL \citep{lecun2015deep}, and \(k\)-nearest neighbors (KNN) \citep{cover1967nearest}. 
\end{enumerate}
For all ATT-weighted Cox estimators in
Subsection~\ref{subsec:Comparison_methods}, estimated source propensity scores
are truncated to lie in \([0.01,0.99]\) before constructing the ATT odds
weights. For MEC-Cox, the source propensity-score fitted values are obtained
out of fold using the same \(K=5\) cross-fitting partition used to construct the
calibration basis. (In the simulations in the main paper, we used \(K=10\)-fold
cross-fitting for MEC-Cox.) For the other comparison methods, cross-fitting is not used.

\subsubsection{Prognostic basis construction} 
We consider two main constructions of the MEC-Cox calibration basis:
\begin{enumerate}[leftmargin=1.5em, label={}, itemsep=0.5em]
    \item[] \textbf{1. Landmark-survival basis.}
For each validation fold \(\mathcal J^{(k)}\), a survival learner for the
external-control outcome distribution is trained using only external-control
subjects in \(\mathcal I_0\cap\mathcal J^{(-k)}\) and then evaluated on all
subjects in \(\mathcal J^{(k)}\). With five landmark times
\(t_1,\ldots,t_5\), the cross-fitted calibration basis is
\[
\widehat h_i
=
\left(
1,
\widehat S_0^{(-k)}(t_1\mid X_i),
\ldots,
\widehat S_0^{(-k)}(t_5\mid X_i)
\right),
\qquad i\in\mathcal J^{(k)}.
\]
The landmark times are chosen as empirical quantiles of the observed event
times among external-control subjects. Specifically, we use equally spaced
quantile levels between \(0.10\) and \(0.90\):
\[
(\tau_1,\ldots,\tau_5)
=
(0.10,0.30,0.50,0.70,0.90),
\qquad
t_\ell
=
\widehat Q_{0,\delta=1}(\tau_\ell),
\quad \ell=1,\ldots,5,
\]
where \(\widehat Q_{0,\delta=1}(\tau)\) denotes the empirical
\(\tau\)-quantile of \(Y_i\) among external-control subjects with
\(\delta_i=1\). Thus, the landmark basis captures predicted control-survival
probabilities across early, middle, and late event-time regions.

We use two survival learners to construct the landmark-survival basis:
\begin{itemize}[leftmargin=1.5em, itemsep=0.5em]
    \item \textbf{Cox survival learner.}
In linear settings, control-survival probabilities are estimated using a
Cox proportional hazards model \citep{cox1972regression}. In \texttt{R}, this
is implemented using \texttt{survival::coxph()}, together with
\texttt{survival::basehaz()} and \texttt{predict(..., type = "lp")}. The
estimated baseline cumulative hazard and the predicted linear predictor are
then combined to obtain predicted survival probabilities at the landmark times.

    \item \textbf{Random survival forest learner.}
In nonlinear settings, control-survival probabilities are estimated using
random survival forests \citep{ishwaran2008random}. In \texttt{R}, this is
implemented using \texttt{ranger::ranger()} \citep{wright2017ranger} with
\texttt{Surv(time, delta) \textasciitilde{} .} and
\texttt{splitrule = "logrank"}. Predicted survival probabilities at the
landmark times are extracted from the survival curves returned by
\texttt{predict(...)}.
\end{itemize}

\item[] \textbf{2. Cox-linear-predictor basis.}
For each validation fold \(\mathcal J^{(k)}\), an external-control Cox model is
trained using only external-control subjects in
\(\mathcal I_0\cap\mathcal J^{(-k)}\). The fitted model is then evaluated on
the validation fold to obtain an out-of-fold control-prognostic linear
predictor. The resulting calibration basis is
\[
\widehat h_i
=
\bigl(1,\widehat\zeta^{(-k)}(X_i)\bigr),
\qquad i\in\mathcal J^{(k)},
\]
where \(\widehat\zeta^{(-k)}(X_i)\) denotes the out-of-fold Cox linear
predictor.

This basis is particularly useful when the baseline covariate vector
$X_i=(X_{i1},\ldots,$ $X_{iM})$ $\in\mathbb R^M$ is high-dimensional and the
prognostic effect is expected to be sparse. In such settings, directly
calibrating on all components of \(X_i\) may be unstable or infeasible because
the number of calibration constraints can be large relative to the sample size.
The Cox-linear-predictor basis instead compresses the potentially
high-dimensional covariate information into a low-dimensional
control-prognostic summary. Therefore, it provides a dimension-reduced
calibration basis that targets outcome-relevant imbalance while keeping the
number of calibration constraints small.

\begin{itemize}[leftmargin=1.5em, itemsep=0.5em]

    \item \textbf{Lasso-Cox linear-predictor learner.}
    In high-dimensional sparse settings, \(\widehat\zeta^{(-k)}(X_i)\) is
    estimated using lasso-Cox regression with cross-validated tuning
    \citep{tibshirani1997lasso,friedman2010regularization}. In \texttt{R},
    this is implemented using
    \texttt{glmnet::cv.glmnet(..., family = "cox", alpha = 1)}. The penalty
    parameter is selected using the one-standard-error rule.
\end{itemize}

\end{enumerate}

For penalized and machine-learning nuisance learners, hyperparameters are
selected using lightweight tuning procedures to keep the Monte Carlo study
computationally feasible. For lasso logistic regression and lasso-Cox
regression, the penalty parameter is selected by cross-validation using the
one-standard-error rule. For flexible propensity-score learners, such as BART,
hyperparameters are selected using a small stratified tuning subset and
validation binary log-loss. For random survival forests, tuning is performed
using a small external-control tuning subset; candidate values of the number
of trees, the number of candidate variables considered at each split, and the
minimum node size are compared using validation concordance. Standard logistic
regression and Cox proportional hazards models are not tuned. Additional implementation details are provided in the accompanying
\texttt{R} simulation code.

\subsubsection{Generator choice}\label{subsubsec:Generator choice}
Given the baseline weights \(\widehat d_i\) and the cross-fitted calibration
basis \(\widehat h_i\), MEC-Cox updates the external-control weights by solving
the Bregman calibration problem described in the main text. We evaluate four Bregman generators: Kullback--Leibler (KL), Hellinger,
empirical likelihood (EL), and R\'enyi with \(\alpha=1/2\). MEC-KL is treated as the primary implementation,
whereas the remaining generators are used for generator-sensitivity analyses.


\subsection{Overview of simulation scenarios and nuisance settings}
\label{subsec:simulation_scenario_summary}

Table~\ref{tab:simulation_scenario_summary} summarizes the simulation scenarios considered in the main paper and Appendix, together with the corresponding nuisance-estimation settings for MEC-Cox.

\begin{table}[h!]
\centering
\scriptsize
\setlength{\tabcolsep}{3pt}
\renewcommand{\arraystretch}{1.08}
\caption{Summary of simulation scenarios and nuisance-estimation settings.}
\label{tab:simulation_scenario_summary}
\begin{tabularx}{\textwidth}{
@{}
>{\RaggedRight\arraybackslash}p{0.07\textwidth}
>{\RaggedRight\arraybackslash}p{0.19\textwidth}
>{\RaggedRight\arraybackslash}p{0.27\textwidth}
>{\RaggedRight\arraybackslash}p{0.21\textwidth}
>{\RaggedRight\arraybackslash}X
@{}}
\toprule
Study & Scenario & Design & Source PS of MEC-Cox & OR / calibration basis of MEC-Cox \\
\midrule
Main
&
Scenario~1: linear source-selection and outcome models
&
\(\kappa_\pi=\kappa_m=0\), \(M=50\);
\(n_1:n_0\in\{1:2,1:3,1:4\}\)
&
Cross-fitted logistic regression (\(K=10\))
&
Cox-based landmark survival basis (\(K=10\)) with KL generator
\\

Main
&
Scenario~2: increasing nonlinearity
&
\((\kappa_\pi,\kappa_m)\in\{(0,0),(1,2),(2,5)\}\);
\(M=10\), \(n_1:n_0=1:4\)
&
Cross-fitted BART (\(K=10\))
&
Cox- or RSF-based landmark survival basis (\(K=10\)) with KL generator
\\

Appx.
&
Sparse-linear high-dimensional study
&
\(\kappa_\pi=\kappa_m=0\);
\(n_1:n_0=1:4\);
\(M\in\{50,200\}\)
&
Cross-fitted lasso logistic regression (\(K=5\))
&
Lasso-Cox linear-predictor basis (\(K=5\)) with KL generator
\\

Appx.
&
Bregman-generator sensitivity
&
Scenario~1 with \(n_1:n_0\in\{1:2,1:4\}\)
&
Cross-fitted logistic regression (\(K=5\))
&
Cox-based landmark survival basis (\(K=5\)); generators compared are KL, EL, Hellinger, and R\'enyi
\\

Appx.
&
Censoring-rate sensitivity
&
Scenario~1 with \(n_1:n_0\in\{1:2,1:4\}\) and heavier censoring;
\(C_i\sim \mathrm{Exp}(0.0016)\)
&
Cross-fitted logistic regression (\(K=5\))
&
Cox-based landmark survival basis (\(K=5\)) with KL generator
\\

Appx.
&
Source PS learner sensitivity
&
\(\kappa_\pi\in\{0.5,1.0,1.5,2.0\}\);
\(\kappa_m\in\{0,1\}\);
\(M=50\), \(n_1:n_0=1:4\)
&
Cross-fitted logistic regression, BART, DL, and KNN (\(K=5\))
&
Cox-based landmark survival basis (\(K=5\)) with KL generator
\\
\bottomrule
\end{tabularx}

\vspace{1mm}
\parbox{\textwidth}{\footnotesize
\emph{Note.} PS denotes source propensity score. OR denotes the outcome-regression
or prognostic learner used to construct the MEC-Cox calibration basis. Main-paper
MEC-Cox simulations use \(K=10\)-fold cross-fitting, whereas supplemental MEC-Cox
simulations use \(K=5\)-fold cross-fitting. For the standard ATT-IPW Cox estimators, namely the naive model-based variance
estimator, the Lin--Wei/Binder robust sandwich estimator, and the Shu corrected sandwich estimator, the source PS model is fitted by logistic regression without
cross-fitting.
}
\end{table}
Although not reported in the paper, we also explored ATT-IPW Cox estimators that
use ML-based source propensity-score estimation together with the
Lin--Wei/Binder robust sandwich variance estimator. However, the performance of
these estimators was highly unstable, often producing extreme coverage behavior,
and therefore we do not report these results in this paper.

\section{Additional simulation studies}\label{sec:Additional simulation experiments}
\subsection{Sparse-linear high-dimensional simulation study}
\label{subsec:simulation_sparse_increasing_dimension}
In this additional simulation study, we examine the performance of MEC-Cox in
sparse linear settings with increasing covariate dimension. Both the source
propensity-score model and the outcome model are sparse linear models,
corresponding to \(\kappa_\pi=\kappa_m=0\). We fix the
treated-to-external-control sample-size ratio at \(n_1:n_0=1:4\) and vary the
covariate dimension over \(M\in\{50,200\}\).

For MEC-Cox, the baseline ATT transport weights are obtained from cross-fitted
lasso logistic regression \citep{tibshirani1996regression}. The calibration
basis is constructed as the two-dimensional cross-fitted lasso-Cox
linear-predictor basis
\begin{align}
\label{eq:lasso-Cox_linear_predictor_basis}
\widehat h_i
=
(1,\widehat\zeta^{(-k)}(X_i)),
\quad
\widehat\zeta^{(-k)}(X_i)=X_i^\top \widehat b^{(-k)}_{\lambda},
\quad
\widehat b^{(-k)}_{\lambda}
=
\arg\min_b
\{-\ell^{(-k)}_{\mathrm{Cox}}(b)+\lambda\|b\|_1\}.
\end{align}
Here, \(\widehat\zeta^{(-k)}(X_i)\) denotes the out-of-fold lasso-Cox linear
predictor trained on the external-control observations in the training fold,
\(\lambda\) is selected by cross-validation, and
\(\ell^{(-k)}_{\mathrm{Cox}}(b)\) denotes the training-fold Cox partial
log-likelihood based on external-control observations
\citep{tibshirani1997lasso}. This construction compresses the high-dimensional
covariate vector into a low-dimensional control-prognostic summary, thereby
keeping the number of calibration constraints small while targeting
outcome-relevant imbalance.

\begin{figure}[h!]
    \centering
    \includegraphics[width=0.9\linewidth]{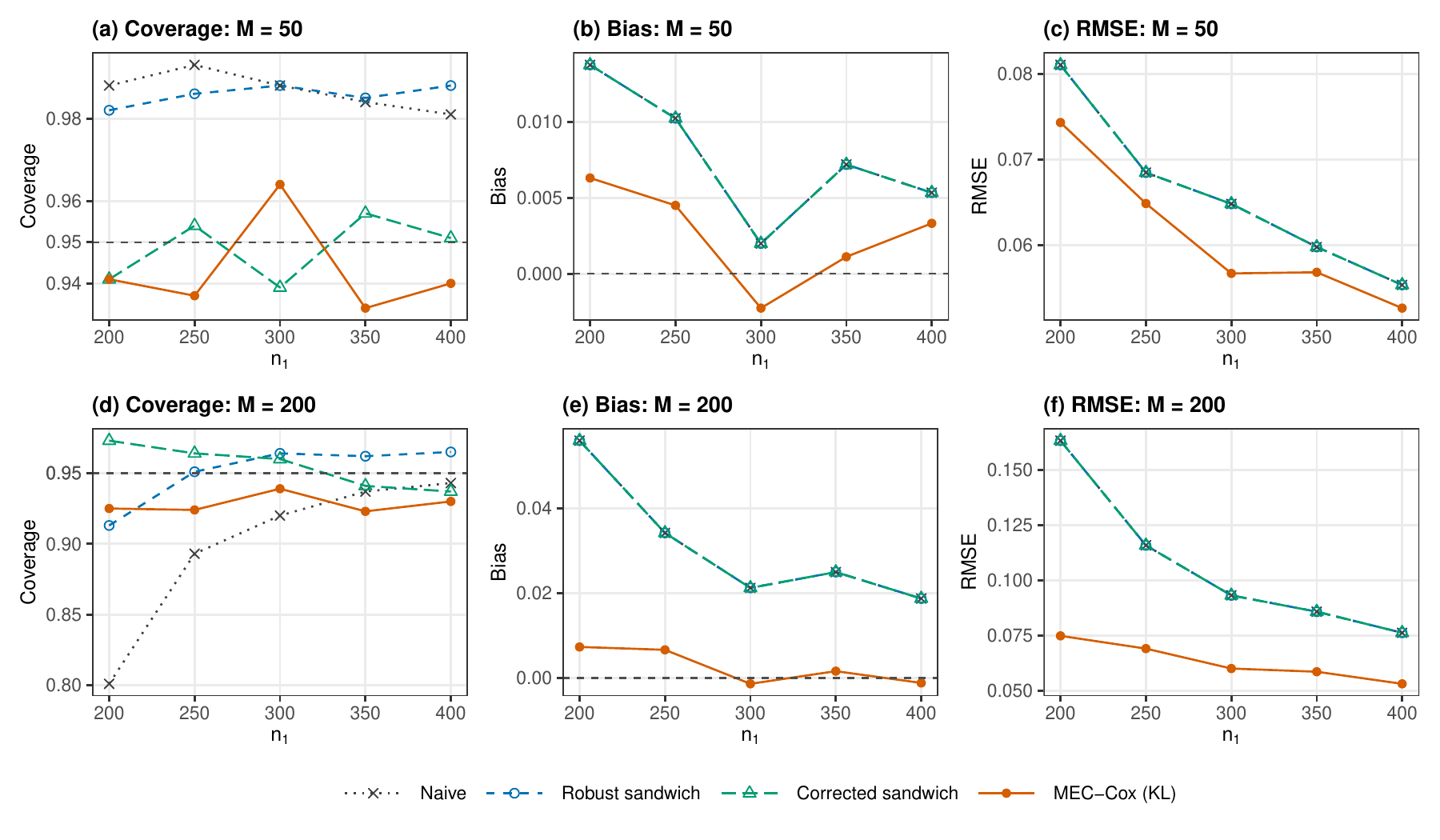}
    \caption{Simulation results with \(\kappa_\pi=\kappa_m=0\) and \(n_1:n_0=1:4\). Panels (a)--(c) and (d)--(f) correspond to \(M=50\) and \(M=200\), respectively. MEC-Cox uses lasso logistic regression for source propensity-score estimation and the Cox linear predictor calibration basis in \eqref{eq:lasso-Cox_linear_predictor_basis}, constructed from a lasso-penalized Cox model trained on the external-control data.}
    \label{fig:supp_sparse_linear_highdim}
\end{figure}

Figure~\ref{fig:supp_sparse_linear_highdim} summarizes the results. Panels
(a)--(c) correspond to the setting with \(M=50\), whereas panels (d)--(f)
correspond to the setting with \(M=200\). Across both covariate dimensions,
MEC-Cox achieves smaller bias and RMSE than the standard
ATT-IPW Cox estimators. The gain is especially pronounced when \(M=200\), where
nuisance estimation is more challenging and regularization is required. The Robust sandwich estimator tends to be
conservative, whereas the Corrected sandwich estimator generally gives coverage
closer to the nominal 95\% level. MEC-Cox maintains reasonably stable coverage
while improving point-estimation accuracy, suggesting that the cross-fitted
lasso-Cox prognostic basis provides useful outcome-relevant information for calibration as the covariate dimension increases.

\subsection{Bregman-generator sensitivity analysis for MEC-Cox}
\label{subsec:sensitivity_generator_choice}

Figure~\ref{fig:sensitivity_analysis_for_generator_choice} presents a
sensitivity analysis for the choice of Bregman generator in MEC-Cox under
Scenario~1 of the main paper, considering
\(n_1:n_0\in\{1:2,1:4\}\). In the main paper, we use the KL generator as the
canonical generator for the ATT marginal hazard-ratio setting. This choice is
natural because the baseline external-control weights are normalized ATT odds
weights, and the KL generator updates these weights through a log-linear
fluctuation around the initial ATT transport weights. Thus, the KL update
preserves the original source-to-target transport structure while applying a
prognostic-balance correction through the MEC calibration constraint.

\begin{figure}[h!]
    \centering
    \includegraphics[width=0.9\linewidth]{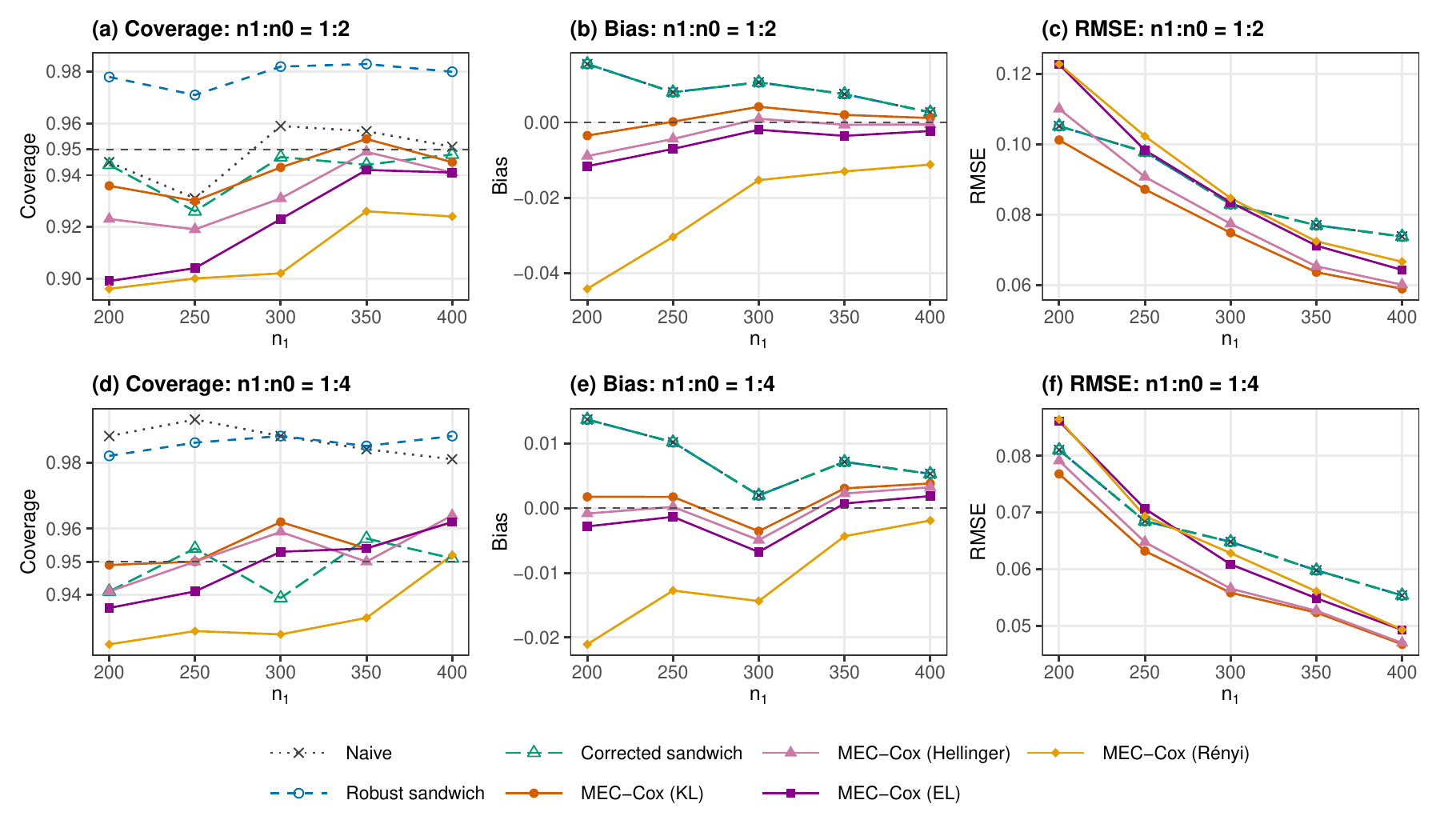}
    \caption{Sensitivity analysis for the choice of MEC-Cox generator in
    Scenario~1 of the
    main paper (\(\kappa_\pi=\kappa_m=0\), \(M=50\), and $n_1:n_0\in \{1:2, 1:4\}$). MEC-Cox uses logistic regression for source propensity-score estimation and the landmark survival calibration basis constructed from a Cox model fitted to the external-control data, with KL, EL, Hellinger, and R\'enyi Bregman generators.}
    \label{fig:sensitivity_analysis_for_generator_choice}
\end{figure}

To assess whether the empirical performance of MEC-Cox depends strongly on this
canonical generator choice, we additionally consider three alternative
Bregman generators: EL, Hellinger, and R\'enyi. These
generators provide valid Bregman calibration updates, but they perturb the
baseline ATT transport weights on different scales rather than on the
log-odds transport scale. Therefore, they are treated here as sensitivity
analyses rather than as primary implementations.

The results show that MEC-Cox with the KL generator performs well across both
external-control sample-size ratios, \(n_1:n_0=1:2\) and \(n_1:n_0=1:4\).
Its empirical coverage is close to the nominal 95\% level, especially as
\(n_1\) increases, and its bias and RMSE are consistently small. The EL and
Hellinger generators produce broadly similar qualitative patterns, suggesting
that the finite-sample behavior of MEC-Cox is not driven by a fragile or
idiosyncratic choice of generator. The R\'enyi generator also follows the same
overall decreasing RMSE trend, although it can be slightly less favorable in
some smaller-sample settings, particularly in terms of bias and coverage.

Overall, this sensitivity analysis supports the use of the KL generator as the
default and most interpretable choice for MEC-Cox in the ATT marginal
hazard-ratio setting. The alternative generators lead to broadly comparable
patterns, but they do not provide a clear empirical advantage over KL. Hence,
the main paper focuses on MEC-Cox with the KL generator, while the remaining
generators are reported here to demonstrate robustness of the proposed
calibration framework.

\subsection{Censoring-rate sensitivity analysis}
\label{subsec:sensitivity_censoring_choice}
Throughout the paper, the validity of the weighted Cox estimating equation is
studied under source-specific independent censoring. Under this condition,
censoring affects the amount of observed time-to-event information but does not
induce bias in the Cox estimating equation when the source-selection and outcome
models are correctly specified. Therefore, it is useful to examine whether the
finite-sample performance of the competing methods is stable when the censoring
rate is increased.

Figure~\ref{fig:sensitivity_analysis_for_censoring_rate} presents a sensitivity
analysis based on Scenario~1 of the main paper, considering
\(n_1:n_0\in\{1:2,1:4\}\). The data-generating mechanism is
identical to Scenario~1, except that the censoring time is generated from $C_i\sim \operatorname{Exp}(0.0016),$ whereas the default setting uses $C_i\sim \operatorname{Exp}(0.0008).$ Thus, the censoring hazard is doubled relative to the default setting, leading
to shorter censoring times and a larger amount of censoring. This creates a more
challenging finite-sample setting because fewer event times are observed,
although the independent censoring assumption remains correctly specified.

\begin{figure}[h!]
    \centering
    \includegraphics[width=0.9\linewidth]{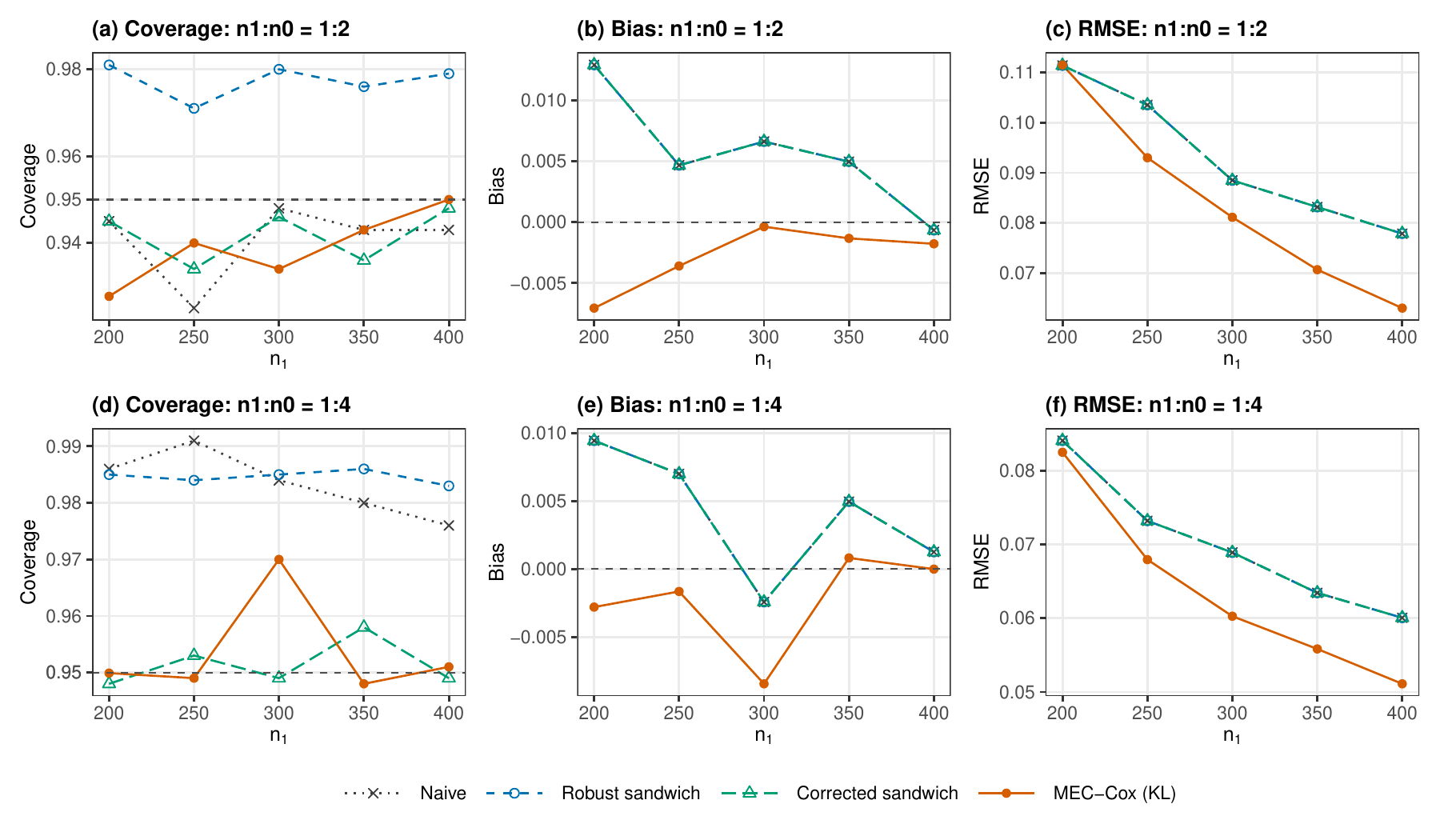}
    \caption{Sensitivity analysis for the censoring rate in Scenario~1 of the
    main paper (\(\kappa_\pi=\kappa_m=0\), \(M=50\), and $n_1:n_0\in \{1:2, 1:4\}$). The only difference from the default Scenario~1 setting in the main paper is that
    censoring times are generated from \(C_i\sim\operatorname{Exp}(0.0016)\),
    instead of the default \(C_i\sim\operatorname{Exp}(0.0008)\). MEC-Cox uses logistic regression for source propensity-score estimation and the landmark survival calibration basis constructed from a Cox model fitted to the external-control data, using the KL generator.}
    \label{fig:sensitivity_analysis_for_censoring_rate}
\end{figure}

The results show that all methods continue to behave reasonably well under this
heavier censoring setting. Because Scenario~1 uses correctly specified
source-selection and outcome models, the bias remains small across methods and
decreases toward zero as \(n_1\) increases. As expected, the RMSE values are
somewhat larger than those under the default censoring rate, reflecting the
information loss caused by additional censoring. Nevertheless, the RMSE decreases
monotonically with increasing sample size for both sample-size ratios
\(n_1:n_0=1:2\) and \(n_1:n_0=1:4\).

The coverage results also remain stable. The robust sandwich estimator tends to
be conservative, whereas the corrected sandwich estimator and MEC-Cox with the
KL generator generally achieve coverage close to the nominal 95\% level. The
MEC-Cox estimator retains small bias and favorable RMSE across the considered
sample sizes, indicating that the proposed calibration step remains stable under
increased censoring. Overall, this sensitivity analysis suggests that the main
findings of Scenario~1 in the main paper are not driven by the particular censoring rate used in the default simulation setting.

\subsection{Propensity-score learner sensitivity analyses for MEC-Cox}
\label{subsec:sensitivity_ps_ml_choice}
We further examined the sensitivity of MEC-Cox to the choice of machine-learning
method used for source propensity-score estimation. This experiment considered
both linear and nonlinear outcome-model settings while varying the degree of
nonlinearity in the true source-selection mechanism from mild to strong.
Specifically, the propensity-score nonlinearity parameter \(\kappa_\pi\) in
\eqref{eq:true_log_ps_model} was varied over \(0.5,1.0,1.5,\) and \(2.0\),
and the outcome-model nonlinearity parameter was set to
\(\kappa_m \in \{0,1\}\) in \eqref{eq:log_hazard_prognostic_function}. We fixed
\(M=50\) covariates and \(n_1:n_0=1:4\). This experiment evaluates how sensitive
MEC-Cox is to the propensity-score learner under increasing source-selection
model complexity, while keeping the remaining components of the MEC-Cox implementation fixed within each outcome-model setting.

We compared MEC-Cox using four source propensity-score estimators: logistic regression, BART \citep{chipman2010bart}, DL \citep{lecun2015deep}, and KNN \citep{cover1967nearest}. For all MEC-Cox variants, the KL Bregman generator was used, and the landmark survival calibration basis was constructed from a Cox model fitted to the external-control data. For DL, we used a feedforward multilayer perceptron with two hidden layers of sizes 32 and 16, ReLU activation, dropout regularization, and early stopping.

\paragraph{Experiment 1: Increasing source-selection nonlinearity under a linear true outcome model}

Figure~\ref{fig:sensitivity_analysis_for_PS_ML_PS_nonlinear_OR_linear}
reports the sensitivity analysis when the true outcome/prognostic model is
linear, with \(\kappa_m=0\). As the true source-selection mechanism becomes
more nonlinear, with \(\kappa_\pi\) increasing from \(0.5\) to \(2.0\) from the
upper panels (a)--(c) to the lower panels (j)--(l), the standard ATT-IPW Cox
estimators based on logistic propensity-score weights become increasingly
sensitive to propensity-score misspecification, leading to larger bias and RMSE.

\begin{figure}[h!]
    \centering
    \includegraphics[width=0.9\linewidth]{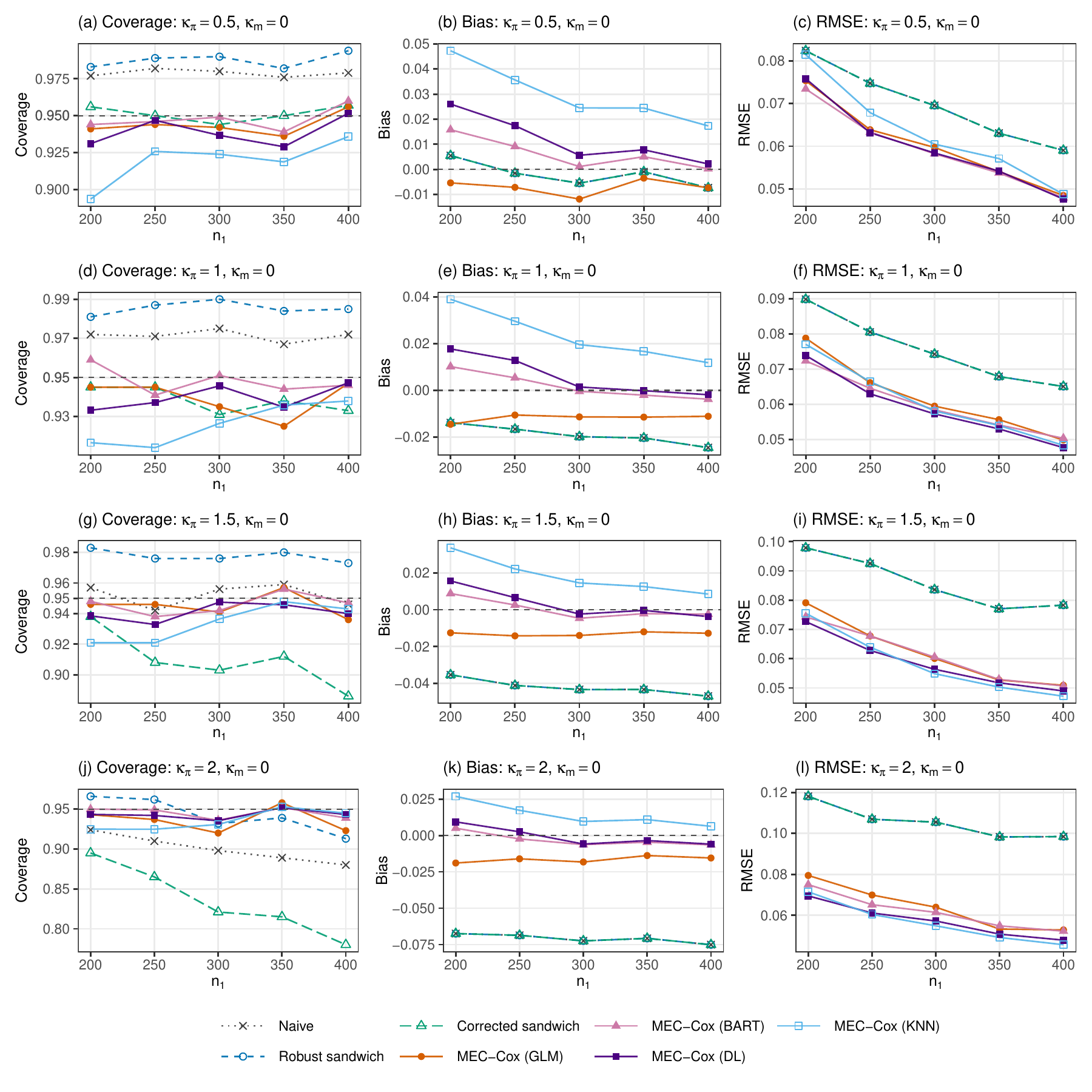}
    \caption{Sensitivity analysis for the choice of source propensity-score machine-learning method in MEC-Cox. The panels report performance metrics with \(M=50\) and \(n_1:n_0=1:4\). Rows correspond to increasing degrees of nonlinearity in the true source-selection model, with the propensity-score nonlinearity parameter \(\kappa_\pi\) in \eqref{eq:true_log_ps_model} set to \(0.5,1.0,1.5,\) and \(2.0\). The outcome/prognostic model is kept linear, with \(\kappa_m=0\) in \eqref{eq:log_hazard_prognostic_function}. MEC-Cox uses the KL Bregman generator and the landmark survival calibration basis constructed from a Cox model fitted to the external-control data, while varying the source propensity-score estimator among logistic regression, BART, DL, and KNN.}
    \label{fig:sensitivity_analysis_for_PS_ML_PS_nonlinear_OR_linear}
\end{figure}

In contrast, MEC-Cox remains comparatively stable across the range of
\(\kappa_\pi\) values. MEC-Cox with flexible propensity-score learners generally
reduces bias and improves RMSE, especially under stronger source-selection
nonlinearity. Among the MEC-Cox variants, those using BART and DL for source
propensity-score estimation appear to provide the most stable overall
performance, maintaining relatively small bias and RMSE while keeping coverage
close to the nominal level across most settings. KNN also achieves competitive
bias and RMSE, and its performance improves as \(n_1\) increases; however, its
coverage tends to be below the nominal level in this experiment, particularly in
smaller samples. Notably, MEC-Cox with logistic regression for source
propensity-score estimation also performs substantially better than the standard
ATT-IPW Cox estimators.

These findings suggest that the landmark survival calibration basis, constructed
from a Cox model fitted to the external-control data in this linear outcome
setting, helps mitigate residual prognostic imbalance after propensity-score
weighting. At the same time, flexible source propensity-score learners,
especially BART and DL, provide additional robustness against nonlinear
source-selection mechanisms.

\paragraph{Experiment 2: Increasing source-selection nonlinearity under a nonlinear true outcome model}

Figure~\ref{fig:sensitivity_analysis_for_PS_ML_PS_nonlinear_OR_nonlinear}
reports the corresponding sensitivity analysis when the true outcome/prognostic
model is nonlinear, with \(\kappa_m=1\). Overall, the performance patterns are
broadly similar to those observed in Experiment 1 in
Figure~\ref{fig:sensitivity_analysis_for_PS_ML_PS_nonlinear_OR_linear},
although the RMSE values are slightly larger. This indicates that the nonlinear
outcome/prognostic model creates a more challenging setting by adding complexity
to the survival outcome mechanism in addition to the nonlinear source-selection
mechanism.

\begin{figure}[h!]
    \centering
    \includegraphics[width=0.9\linewidth]{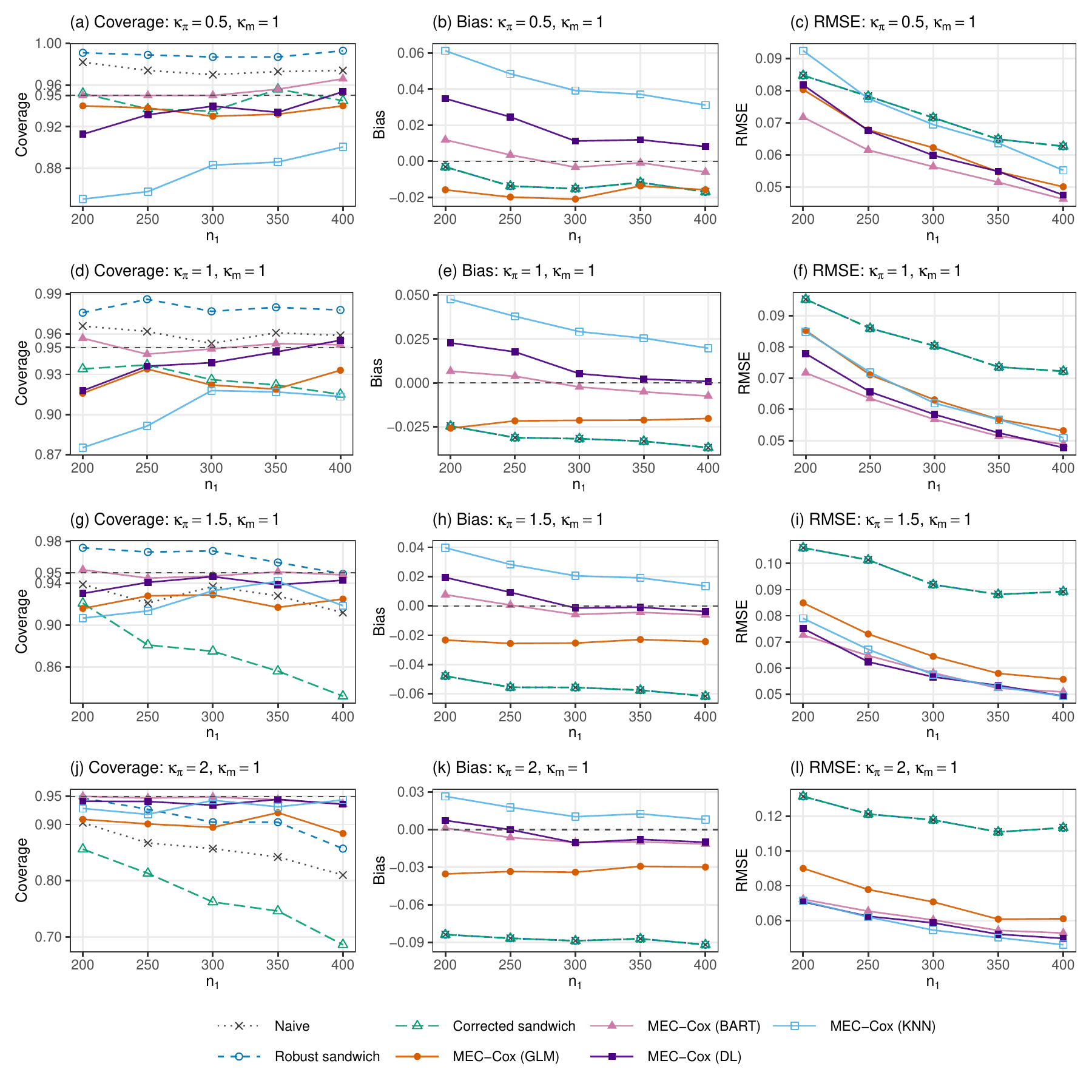}
    \caption{Sensitivity analysis for the choice of source propensity-score machine-learning method in MEC-Cox. The panels report performance metrics with \(M=50\) and \(n_1:n_0=1:4\). Rows correspond to increasing degrees of nonlinearity in the true source-selection model, with the propensity-score nonlinearity parameter \(\kappa_\pi\) in \eqref{eq:true_log_ps_model} set to \(0.5,1.0,1.5,\) and \(2.0\). The outcome/prognostic model is kept nonlinear, with \(\kappa_m=1\) in \eqref{eq:log_hazard_prognostic_function}. MEC-Cox uses the KL Bregman generator and the landmark survival calibration basis, while varying the source propensity-score estimator among logistic regression, BART, DL, and KNN.}
    \label{fig:sensitivity_analysis_for_PS_ML_PS_nonlinear_OR_nonlinear}
\end{figure}

More specifically, as the true source-selection mechanism becomes more
nonlinear, with \(\kappa_\pi\) increasing from \(0.5\) to \(2.0\) from the upper
panels (a)--(c) to the lower panels (j)--(l), the performance of the standard
ATT-IPW Cox estimators based on logistic propensity-score weights deteriorates.
The ATT-IPW Cox point estimates exhibit increasingly negative bias and larger
RMSE under stronger source-selection nonlinearity, and the corresponding
confidence intervals show substantial undercoverage when \(\kappa_\pi=2.0\).
The naive, robust sandwich, and corrected sandwich variance implementations all
become less reliable as \(\kappa_\pi\) increases.

In contrast, MEC-Cox again remains more stable than the standard ATT-IPW Cox
estimators across the range of \(\kappa_\pi\) values. Consistent with
Experiment 1, MEC-Cox variants using BART and DL for source propensity-score
estimation show the most stable overall performance, with relatively small bias
and RMSE and coverage closer to the nominal level. KNN becomes more competitive
as \(n_1\) increases, but is less stable in smaller samples. MEC-Cox with
logistic regression also improves over the standard ATT-IPW Cox estimators,
although it is less effective than BART and DL under strong source-selection
nonlinearity. Overall, these results reinforce that flexible source
propensity-score learners can improve the robustness of MEC-Cox in more complex
nonlinear outcome settings.

\newpage
\clearpage
\vskip 0.2in
\bibliography{ref1}

\end{document}